\def\x{{\mathbf x}}
\def\l{{\ell}}
\def\1{{\mathbf 1}}
\def\a{{\mathbf a}}
\def\v{{\mathbf v}}
\def\X{{\mathbf X}}
\def\hatf{{ \hat f}}
\def\tildel{\tilde{\ell}}
\def\as{\text{~~a.s.}}
\def\Gammab{{\boldsymbol\Gamma}}
\def\epsilonb{{\boldsymbol\varepsilon}}
\def\alphab{{\boldsymbol\alpha}}
\def\Gammab{{\boldsymbol\Gamma}}
\def\w{{\mathbf w}}
\def\D{{\mathbf D}}
\def\Y{{\mathbf Y}}
\def\B{{\mathbf B}}
\def\d{{\mathbf d}}
\def\E{{\mathbb E}}
\def\O{{O}}
\def\C{{\mathcal C}}
\def\L{{\mathcal L}}
\def\P{{\mathcal P}}
\def\PPP{{\mathbb P}}
\def\d{{\mathbf d}}
\def\e{{\mathbf e}}
\def\b{{\mathbf b}}
\def\u{{\mathbf u}}
\def\hatf{{ \hat f}}
\def\Real{{\mathbb R}}
\def\U{{\mathbf U}}
\def\u{{\mathbf u}}
\def\A{{\mathbf A}}
\def\G{{\mathbb G}}
\def\FF{{\mathcal F}}
\def\I{{\mathbf I}}
\def\argmin{\operatornamewithlimits{arg\,min}}
\def\trace{\operatorname{Tr}}
\def\sign{\operatorname{sign}}
\def\FL{\operatorname{FL}}
\def\corr{\operatorname{corr}}
\def\cov{\operatorname{cov}}
\def\st{~~\text{s.t.}~~}
\def\defin{\stackrel{\vartriangle}{=}}
\begin{document}

\title{Online Learning for Matrix Factorization and Sparse Coding}

\author{\name Julien Mairal \email julien.mairal@inria.fr \\
        \name Francis Bach \email francis.bach@inria.fr \\
        \addr INRIA - WILLOW Project-Team\\
         Laboratoire d'Informatique de l'Ecole Normale Sup\'erieure (INRIA/ENS/CNRS UMR 8548)\\
         23, avenue d'Italie 75214 Paris CEDEX 13, France \AND
        \name Jean Ponce \email jean.ponce@ens.fr \\
        \addr Ecole Normale Sup\'erieure - WILLOW Project-Team\\
        Laboratoire d'Informatique de l'Ecole Normale Sup\'erieure (INRIA/ENS/CNRS UMR 8548)\\
        45, rue d'Ulm 75230 Paris CEDEX 05, France  \AND
        \name Guillermo Sapiro \email guille@umn.edu \\
        \addr Department of Electrical and Computer Engineering \\
        University of Minnesota \\
        200 Union Street SE, Minneapolis, MN 55455, USA}

\editor{~}

\hyphenation{dictionary}

\maketitle
\begin{abstract}%
   Sparse coding---that is, modelling data vectors as sparse linear
   combinations of basis elements---is widely used in machine learning,
   neuroscience, signal processing, and statistics. This paper focuses on the
   large-scale matrix factorization problem that consists of {\em learning}
   the basis set in order to adapt it to specific data. Variations of this
   problem include dictionary learning in signal processing, non-negative
   matrix factorization and sparse principal component analysis. In this
   paper, we propose to address these tasks with a new online optimization
   algorithm, based on stochastic approximations, which scales up gracefully
   to large data sets with millions of training samples, and extends naturally
   to various  matrix factorization formulations, making it suitable for a
   wide range of learning problems. A proof of convergence is presented,
   along with experiments with natural images and genomic data demonstrating
   that it leads to state-of-the-art performance in terms of speed and
   optimization for both small and large data sets.
\end{abstract}

 \begin{keywords}
basis pursuit, dictionary learning, matrix factorization, online
learning, sparse coding, sparse principal component analysis,
stochastic approximations, stochastic optimization,
 non-negative matrix factorization
 \end{keywords}

\section{Introduction}
The linear decomposition of a signal using a few atoms of a \emph{learned}
dictionary instead of a predefined one---based on wavelets \citep{mallat} for
example---has recently led to state-of-the-art results in numerous low-level
signal processing tasks such as image denoising \citep{elad,mairal}, texture
synthesis \citep{peyre} and audio processing \citep{grosse,fevotte,zibulevsky}, as well as
higher-level tasks such as image classification \citep{raina,mairal3,mairal6,bradley,yang},
showing that sparse learned models are well adapted to natural signals.
Unlike decompositions based on principal component analysis and its variants,
these models do not impose that the basis vectors be orthogonal, allowing
more flexibility to adapt the representation to the data.\footnote{Note that
the terminology ``basis'' is slightly abusive here since the elements of the
dictionary are not necessarily linearly independent and the set can be overcomplete---that is, have more elements than the signal dimension.} In machine learning and
statistics, slightly different matrix factorization problems are formulated 
in order to obtain a few \emph{interpretable} basis elements from a set of data
vectors. This includes non-negative matrix factorization and its variants
\citep{lee2,hoyer,hoyer2,lin}, and sparse principal component analysis
\citep{zou2,aspremont,bach2,witten,zass}. As
shown in this paper, these problems have strong similarities; even though we
first focus on the problem of dictionary learning, the algorithm we propose
is able to address all of them.  While learning the dictionary has proven to
be critical to achieve (or improve upon) state-of-the-art results in signal
and image processing, effectively solving the corresponding optimization
problem is a significant computational challenge, particularly in the context
of large-scale data sets that may include millions of training samples.
Addressing this challenge and designing a generic algorithm which is 
capable of efficiently handling various matrix factorization problems,
is the topic of this paper. 

Concretely, consider a signal $\x$ in $\Real^m$. We say that it admits a
sparse approximation over a \mbox{{\em dictionary}} $\D$ in $\Real^{m \times k}$,
with $k$ columns referred to as {\em atoms}, when one can find a linear
combination of a ``few'' atoms from $\D$ that is ``close'' to the signal
$\x$.  Experiments have shown that modelling a signal with such a sparse
decomposition ({\em sparse coding}) is very effective in many signal
processing applications \citep{chen}.  For natural images, predefined
dictionaries based on various types of wavelets \citep{mallat} have also been used
for this task.  However, learning the dictionary instead of using
off-the-shelf bases has been shown to dramatically improve signal
reconstruction \citep{elad}.  Although some of the learned dictionary elements
may sometimes ``look like'' wavelets (or Gabor filters), they are tuned to the
input images or signals, leading to much better results in practice.

Most recent algorithms for dictionary
learning \citep{field,engan,lewicki,aharon,lee} are iterative
{\em batch} procedures, accessing the whole training set at each iteration in
order to minimize a cost function under some constraints, and cannot
efficiently deal with very large training sets \citep{bottou}, or dynamic
training data changing over time, such as video sequences. To address these
issues, we propose an {\em online} approach that processes the signals, one
at a time, or in mini-batches.  This is particularly important in the
context of image and video processing \citep{protter,mairal5}, where it is
common to learn dictionaries adapted to small patches, with training data
that may include several millions of these patches (roughly one per pixel and
per frame). In this setting, online techniques based on stochastic
approximations are an attractive alternative to batch
methods~(see, e.g., \citealp{bottou2,kushner,shwartz}).  For example, first-order stochastic gradient
descent with projections on the constraint set \citep{kushner} is sometimes
used for dictionary learning (see \citealp{aharon2,kavukcuoglu} for instance).
We show in this paper that it is possible to go further and exploit the
specific structure of sparse coding in the design of an optimization
procedure tuned to this problem, with low memory consumption and lower
computational cost than classical batch algorithms.  As demonstrated by our
experiments, it scales up gracefully to large data sets with millions of
training samples, is easy to use, and is faster than competitive methods.

The paper is structured as follows: Section~\ref{sec:diclearn} presents
the dictionary learning problem.  The proposed method is introduced in Section
\ref{sec:online}, with a proof of convergence in Section~\ref{sec:conv}.
Section~\ref{sec:variations} extends our algorithm to various matrix factorization
problems that generalize dictionary learning, and Section
\ref{sec:exp_dict} is devoted to experimental results, demonstrating
that our algorithm is suited to a wide class of learning problems.
\subsection{Contributions}
This paper makes four main contributions: 
\begin{itemize}
   \item We cast in Section~\ref{sec:diclearn} the dictionary learning
      problem as the optimization of  a smooth nonconvex objective function
      over a convex set,  minimizing the (desired) {\em expected} cost
      when the training set size goes
      to infinity, and propose in Section~\ref{sec:online} an iterative online algorithm
      that solves this problem by efficiently minimizing at each step a
      quadratic surrogate function of the empirical cost over the set of
      constraints.  This method is shown in Section~\ref{sec:conv} to
      converge almost surely to a stationary point of the objective
      function. 
   \item As shown experimentally in Section~\ref{sec:exp_dict}, our algorithm
      is significantly faster than previous approaches to dictionary learning
      on both small and large data sets of natural images. To demonstrate that
      it is adapted to difficult, large-scale image-processing tasks, we
      learn a dictionary on a $12$-Megapixel photograph and use it for
      inpainting---that is, filling some holes in the image.
   \item We show in Sections~\ref{sec:variations} and \ref{sec:exp_dict} that our approach
      is suitable to large-scale matrix factorization problems such as
      non-negative matrix factorization and sparse principal component
      analysis, while being still effective on small data sets.
   \item To extend our algorithm to several matrix factorization problems, 
      we propose in Appendix \ref{appendix:proj} efficient procedures for
      projecting onto two convex sets, which can be useful for other
      applications that are beyond the scope of this paper.
\end{itemize}
\subsection{Notation}
We define for $p \geq 1$ the $\ell_p$ norm of a vector $\x$ in
$\Real^m$ as $||\x||_p \defin (\sum_{i=1}^m |\x[i]|^p)^{{1}/{p}}$,
where~$\x[i]$ denotes the $i$-th coordinate of $\x$ and 
$||\x||_\infty \defin \max_{i=1,\ldots,m} |\x[i]| = \lim_{p \to \infty} ||\x||_p$.
We also define the $\ell_0$ pseudo-norm as the sparsity measure which counts
the number of nonzero elements in a vector:\footnote{Note that it would
be more proper to write $||\x||_0^0$ instead of $||\x||_0$ to be consistent with the traditional notation $||\x||_p$.
However, for the sake of simplicity, we will keep this
notation unchanged.} $||\x||_0 \defin \#\{i \st \x[i] \neq 0  \} = \lim_{p \to 0^+}  (\sum_{i=1}^m |\x[i]|^p)$.
We denote the Frobenius norm of a matrix $\X$ in $\Real^{m \times n}$ by
$||\X||_F \defin (\sum_{i=1}^m \sum_{j=1}^n \X[i,j]^2)^{{1}/{2}}$.
For a sequence of vectors (or
matrices) $\x_t$ and scalars $u_t$, we write $\x_t = \O(u_t)$
when there exists a constant $K > 0$ so that for all $t$,
$||\x_t||_2 \leq K u_t$.
Note that for finite-dimensional vector spaces, the choice of norm is essentially irrelevant (all norms are equivalent).
Given two matrices $\A$ in $\Real^{m_1 \times n_1}$ and $\B$ in $\Real^{m_2 \times n_2}$,
$\A \otimes \B$ denotes the Kronecker product between $\A$ and $\B$,
defined as the matrix in $\Real^{m_1m_2 \times n_1n_2}$, defined by blocks of
sizes $m_2 \times n_2$ equal to $\A[i,j] \B$.
For more details and properties of the Kronecker product, see \citet{golub}, and \citet{magnus}.
\section{Problem Statement} \label{sec:diclearn} 
Classical dictionary learning techniques for sparse representation \citep{field,engan,lewicki,aharon,lee}
consider a finite training set of signals $\X = [\x_1,\ldots,\x_n]$ in $\Real^{m \times n}$ 
and optimize the empirical cost function
\begin{equation}
   f_n(\D) \defin \frac{1}{n} \sum_{i=1}^n \l(\x_i,\D), \label{eq:empirical}
\end{equation}
where $\D$ in $\Real^{m \times k}$ is the dictionary, each column
representing a basis vector, and $\l$ is a loss function such that
$\l(\x,\D)$ should be small if $\D$ is ``good'' at representing the
signal~$\x$ in a sparse fashion. The number of samples
$n$ is usually large, whereas the signal dimension $m$ is relatively
small, for example, $m = 100$ for $10 \times 10$ image patches, and $n
\geq 100, 000$ for typical image processing applications.  In general,
we also have $k \ll n$ (e.g., $k=200$ for $n=100, 000$), but each
signal only uses a few elements of $\D$ in its representation, say
$10$ for instance. Note that, in this setting, overcomplete dictionaries with $k >
m$ are allowed. As others (see for example \citealp{lee}), we define
$\l(\x,\D)$ as the optimal value of the $\ell_1$ {\em sparse coding}
problem:
\begin{equation}
   \l(\x,\D) \defin \min_{\alphab \in \Real^k} \frac{1}{2}||\x-\D\alphab||_2^2 +
   \lambda ||\alphab||_1,\label{eq:ell1penalty}
\end{equation}
where $\lambda$ is a regularization parameter. 
This problem is also known as {\em basis pursuit} \citep{chen}, or the {\em
Lasso} \citep{tibshirani}.\footnote{To be more precise, the original formulation
of the Lasso is a constrained version of Eq.~(\ref{eq:ell1penalty}), with a
constraint on the $\ell_1$-norm of $\alphab$:
\begin{equation}
\min_{\alphab \in \Real^k} \frac{1}{2}||\x-\D\alphab||_2^2 \st ||\alphab||_1 \leq T.\label{eq:constraint}
\end{equation}
Both formulations are equivalent in the sense that for every $\lambda > 0$ (respectively every $T > 0$), there exists
a scalar $T$ (respectively $\lambda$) so that Equations (\ref{eq:ell1penalty}) and (\ref{eq:constraint}) admit
the same solutions.
 } It is well known that 
$\ell_1$ regularization yields a sparse solution for $\alphab$, but there is no
direct analytic link between the value of $\lambda$ and the corresponding
effective sparsity $||\alphab||_0$. To prevent $\D$ from having
arbitrarily large values (which would lead to arbitrarily small
values of $\alphab$), it is common to constrain its columns
$\d_1,\ldots,\d_k$ to have an $\ell_2$-norm less than or equal to one.
We will call $\C$ the convex set of matrices verifying this
constraint:
\begin{displaymath}
   \C \defin \{ \D \in \Real^{m \times k} \st \forall j=1,\ldots,k,~~ \d_j^T\d_j \leq 1 \}.
\end{displaymath}
Note that the problem of minimizing the empirical cost $f_n(\D)$ is
not convex with respect to~$\D$. It can be rewritten as a joint
optimization problem with respect to the dictionary~$\D$ and the
coefficients $\alphab = [\alphab_1,\ldots,\alphab_n]$ in $\Real^{k \times n}$ of the sparse
decompositions, which is not jointly convex, but convex with respect to
each of the two variables $\D$ and $\alphab$ when the other one is
fixed:
\begin{equation}
   \min_{\D \in \C,\alphab \in \Real^{k \times n}} \sum_{i=1}^n
   \Big(\frac{1}{2}||\x_i-\D\alphab_i||_2^2 + \lambda ||\alphab_i||_1\Big).
   \label{eq:joint} 
\end{equation}
This can be rewritten as a \emph{matrix factorization} problem with a sparsity penalty:
\begin{displaymath}
   \min_{\D \in \C,\alphab \in \Real^{k \times n}}
   \frac{1}{2}||\X-\D\alphab||_F^2 + \lambda ||\alphab||_{1,1},
\end{displaymath}
where, as before, $\X = [\x_1,\ldots,\x_n]$ is the matrix of data vectors,
and $||\alphab||_{1,1}$ denotes the $\ell_1$~norm of the matrix $\alphab$---that is, the sum of the magnitude of its coefficients.
A natural approach to solving this problem is to alternate between the
two variables, minimizing over one while keeping the other one fixed,
as proposed by \citet{lee} (see also \citealt{engan} and
\citealt{aharon}, who use $\ell_0$ rather than $\ell_1$
penalties, or \citealt{zou2} for the problem of sparse principal component analysis).\footnote{In our setting, as in \citet{lee}, we have
preferred to use the convex $\ell_1$ norm, that has empirically proven
to be better behaved in general than the $\ell_0$ pseudo-norm for
dictionary learning.} Since 
the computation of the coefficients vectors $\alphab_i$ 
dominates the cost of each iteration in this block-coordinate descent approach, a second-order optimization technique can be
used to accurately estimate $\D$ at each step when $\alphab$ is fixed.

As pointed out by \citet{bottou}, however, one is usually not interested in the minimization of the {\em empirical cost} $f_n(\D)$ with high precision, but instead in the
minimization of the {\em expected cost}
\begin{displaymath}
   f(\D) \defin \E_{\x}[\l(\x,\D)] = \lim_{n \to \infty} f_n(\D) \as,
\end{displaymath}
where the expectation (which is supposed finite) is taken relative to
the (unknown) probability distribution $p(\x)$ of the
data.\footnote{We use ``a.s.'' to denote almost sure convergence.}
In particular, given a finite training set,
one should not spend too much effort on accurately minimizing the
empirical cost, since it is only an approximation of the expected cost.
An ``inaccurate'' solution may indeed have the same or better
expected cost than a ``well-optimized'' one.
\citet{bottou} further show that stochastic gradient algorithms, whose
rate of convergence is very poor in conventional optimization terms,
may in fact in certain settings be shown both theoretically and
empirically to be faster in reaching a solution with low
expected cost than second-order batch methods.  With large training sets, the
risk of overfitting is lower, but classical optimization techniques may
become impractical in
terms of speed or memory requirements.

In the case of dictionary learning, the classical projected first-order
projected stochastic gradient descent algorithm (as used
by \citealt{aharon2,kavukcuoglu} for instance) consists of a sequence of
updates of~$\D$:
\begin{displaymath}
   \D_{t} = \Pi_\C\Big[\D_{t-1} -
   \delta_t\nabla_{\D}\l(\x_{t},\D_{t-1})\Big], 
\end{displaymath}
where $\D_t$ is the estimate of the optimal dictionary at iteration $t$,
$\delta_t$ is the gradient step, $\Pi_\C$~is the orthogonal projector onto
$\C$, and the vectors~$\x_t$ are i.i.d. samples of the (unknown) distribution~$p(\x)$. Even though it is often difficult to obtain such i.i.d. samples,
the vectors~$\x_t$ are in practice obtained by cycling on a randomly
permuted training set.
As shown in Section \ref{sec:exp_dict}, we have observed that this
method can be competitive in terms of speed compared to batch methods when
the training set is large and when $\delta_t$ is carefully chosen.
In particular, good results are obtained using a learning rate of the form
$\delta_t \defin a / (t+b)$, where $a$ and $b$ have to be well chosen in a
data set-dependent way.
Note that first-order stochastic gradient descent has also been used for other
matrix factorization problems (see \citealp{koren} and references therein).

The optimization method we present in the next section falls into the
class of online algorithms based on stochastic approximations, processing one
sample at a time (or a mini-batch), but further exploits the specific
structure of the problem to efficiently solve it by sequentially minimizing a
quadratic local surrogate of the expected cost.
As shown in Section \ref{subsec:link}, it uses second-order information of the cost 
function, allowing the optimization without any explicit learning rate
tuning.
\section{Online Dictionary Learning} \label{sec:online}
We present in this section the basic components of our online
algorithm for dictionary learning
(Sections~\ref{sec:outline}--\ref{subsec:dicoupdate}), as well as a
few minor variants which speed up our implementation in practice
(Section~\ref{sec:variants}) and an interpretation in terms of
a Kalman algorithm (Section~\ref{subsec:link}).
\subsection{Algorithm Outline\label{sec:outline}}
Our procedure is summarized in Algorithm
\ref{fig:algoonline}. Assuming that the training set is composed of i.i.d.~samples 
of a distribution $p(\x)$, its inner loop draws one element $\x_t$ 
at a time, as in stochastic gradient descent, and alternates
classical sparse coding steps for computing the decomposition~$\alphab_t$
of~$\x_t$ over the dictionary~$\D_{t-1}$ obtained at the previous
iteration, with dictionary update steps where the new dictionary $\D_t$
is computed by minimizing over $\C$ the function
\begin{equation}
   \hatf_t(\D) \defin \frac{1}{t}\sum_{i=1}^t\Big(
   \frac{1}{2}||\x_i-\D\alphab_i||_2^2 + \lambda ||\alphab_i||_1\Big),
   \label{eq:surrog}
\end{equation}
and the vectors $\alphab_i$ for $i < t$ have been computed during the previous steps of
the algorithm.
The motivation behind this approach is twofold:
\begin{itemize}
   \item  The function $\hatf_t$, which is quadratic in $\D$, aggregates the
      past information with a few sufficient statistics obtained during the
previous steps of the algorithm, namely the vectors $\alphab_i$, and it is easy
to show that it upperbounds the empirical cost $f_t(\D_t)$ from
Eq.~(\ref{eq:empirical}). One key aspect of our convergence analysis will be to
show that $\hatf_t(\D_t)$ and $f_t(\D_t)$ converge almost surely to the same
limit, and thus that $\hatf_t$ acts as a {\em surrogate} for $f_t$. 
   \item Since $\hatf_t$ is close to $\hatf_{t-1}$ for large values of $t$, so are $\D_t$ and $\D_{t-1}$,
      under suitable assumptions,
      which makes it efficient to use $\D_{t-1}$ as warm restart for computing $\D_t$.
\end{itemize}
\begin{algorithm}[t]
   \caption{Online dictionary learning.}
   \label{fig:algoonline}
   \begin{algorithmic}[1]
      \REQUIRE $\x \in \Real^m \sim p(\x)$ (random variable and an algorithm
      to draw i.i.d samples of~$p$), $\lambda \in \Real$ (regularization
      parameter), $\D_0 \in \Real^{m \times k}$ (initial dictionary), $T$
      (number of iterations).
      \STATE $\A_0 \in \Real^{k \times k} \leftarrow 0$, 
      $\B_0 \in \Real^{m \times k} \leftarrow 0$ (reset the ``past''
      information). 
      \FOR {$t= 1$ to $T$}
      \STATE Draw $\x_t$ from $p(\x)$. 
      \STATE Sparse coding: compute using LARS 
      \begin{displaymath} 
         \alphab_t  \defin  \argmin_{\alphab \in
         \Real^k} \frac{1}{2}||\x_t-\D_{t-1}\alphab||_2^2 +
         \lambda||\alphab||_1. 
      \end{displaymath}
      \STATE $\A_t \leftarrow \A_{t-1} + \alphab_t \alphab_t^T$. 
      \STATE $\B_t \leftarrow \B_{t-1} + \x_t \alphab_t^T$.
      \STATE Compute $\D_t$ using Algorithm
      \ref{fig:algoupdate}, with $\D_{t-1}$ as warm restart, so that
      \begin{eqnarray}
         \D_t & \defin &\argmin_{\D \in \C} \frac{1}{t}\sum_{i=1}^t
         \Big(\frac{1}{2}||\x_i-\D\alphab_i||_2^2 + \lambda||\alphab_i||_1\Big),\nonumber \\
         & = & \argmin_{\D \in \C} \frac{1}{t}\Big( \frac{1}{2}\trace(\D^T\D\A_t) - \trace(\D^T\B_t)\Big).
	  \label{eq:dicoupdate}
      \end{eqnarray}
   \ENDFOR
   \RETURN $\D_T$ (learned dictionary).
\end{algorithmic}
\end{algorithm}
\begin{algorithm}[hbtp]
   \caption{Dictionary Update.}
   \label{fig:algoupdate}
   \begin{algorithmic}[1]
      \REQUIRE $\D = [\d_1,\ldots,\d_k] \in \Real^{m \times k}$ (input dictionary), \\
      $\A = [\a_1,\ldots,\a_k] \in \Real^{k \times k}$ \\
      $\B =[\b_1,\ldots,\b_k] \in \Real^{m \times k}$
      \REPEAT
      \FOR {$j = 1$ to $k$}
      \STATE Update the $j$-th column to optimize for (\ref{eq:dicoupdate}): 
      \begin{equation} 
         \begin{split}
            \u_j &\leftarrow \frac{1}{\A[j,j]}(\b_j-\D\a_j) + \d_j, \\
            \d_j &\leftarrow \frac{1}{\max(||\u_j||_2,1)}\u_j. \label{eq:updated}
         \end{split}
      \end{equation}
   \ENDFOR
   \UNTIL {\bf convergence}
   \RETURN $\D$ (updated dictionary).
\end{algorithmic}
\end{algorithm}
\subsection{Sparse Coding} \label{sec:sparsecoding}
The sparse coding problem of Eq.~(\ref{eq:ell1penalty}) with fixed dictionary
is an $\ell_1$-regularized linear least-squares problem.  A number of recent
methods for solving this type of problems are based on coordinate descent with
soft thresholding \citep{fu,friedman,wu}.  When the columns of the dictionary
have low correlation, we have observed that these simple methods are very efficient.
However, the columns of learned dictionaries are in general highly correlated,
and we have empirically observed that these 
algorithms become much slower in this setting.  This has led us to
use instead the LARS-Lasso algorithm, a homotopy method
\citep{osborne2,efron} that provides the whole regularization
path---that is, the solutions for all possible values of
$\lambda$. With an efficient Cholesky-based implementation
(see \citealp{efron,zou}) for brief descriptions of such implementations), it has
proven experimentally at least as fast as approaches based on soft
thresholding, while providing the solution with a higher accuracy and
being more robust as well since it does not require an arbitrary stopping criterion.
\subsection{Dictionary Update\label{subsec:dicoupdate}}
Our algorithm for updating the dictionary uses block-coordinate descent with
warm restarts (see \citealp{bertsekas}). One of its main advantages is that it is
parameter free and does not require any learning rate tuning. 
Moreover, the procedure does not require to store all the vectors $\x_i$ and $\alphab_i$,
but only the matrices $\A_t=\sum_{i=1}^t \alphab_i \alphab_i^T$ in $\Real^{k \times k}$ and
$\B_t=\sum_{i=1}^t \x_i\alphab_i^T$ in $\Real^{m \times k}$.
 Concretely,
Algorithm~\ref{fig:algoupdate} sequentially updates each column of $\D$.
A simple calculation shows 
that solving (\ref{eq:dicoupdate}) with respect to the $j$-th column $\d_j$, while keeping the other ones fixed under the constraint $\d_j^T\d_j \leq
1$, amounts to an orthogonal projection of the vector $\u_j$ defined in Eq.~(\ref{eq:updated}), onto the
constraint set, namely the $\ell_2$-ball here, which is solved by
Eq.~(\ref{eq:updated}).  Since the convex optimization
problem (\ref{eq:dicoupdate}) admits separable
constraints in the updated blocks (columns), convergence to a global
optimum is guaranteed~\mbox{\citep{bertsekas}}. In practice, the vectors
$\alphab_i$ are sparse and the coefficients of the matrix $\A_t$ are 
often concentrated on the diagonal, which makes the
block-coordinate descent more efficient.\footnote{We have observed that this is true when the columns of $\D$ are not too correlated. When a group of columns in $\D$ are highly correlated, the coefficients of the matrix $\A_t$ concentrate instead on the corresponding principal submatrices of~$\A_t$.}
After a few iterations of our algorithm, using the value of $\D_{t-1}$ as a warm restart for computing~$\D_t$ 
becomes effective, and a single iteration of Algorithm \ref{fig:algoupdate} has
empirically found to be sufficient to achieve
convergence of the dictionary update step. Other approaches have been proposed to update $\D$: For
instance,
\citet{lee} suggest using a Newton method on the dual of
Eq.~(\ref{eq:dicoupdate}), but this requires inverting a $k
\times k$ matrix at each Newton iteration, which is impractical for
an online algorithm.
\subsection{Optimizing the Algorithm\label{sec:variants}}
We have presented so far the basic building blocks of our algorithm.
This section discusses a few simple improvements that significantly
enhance its performance.
\subsubsection{Handling Fixed-Size Data Sets}
In practice, although it may be very large, the size of the training
set often has a predefined finite size (of course this may not be the case when the data
must be treated on the fly like a video stream for
example). In this situation, the same data points may be examined
several times, and it is very common in online algorithms to simulate an
i.i.d.~sampling of $p(\x)$ by cycling over a randomly permuted
training set (see \citealp{bottou} and references therein). This method works experimentally well in our
setting but, when the training set is small enough, it is
possible to further speed up convergence: In Algorithm~\ref{fig:algoonline},
the matrices~$\A_t$ and~$\B_t$ carry all the
information from the past coefficients
$\alphab_1,\ldots,\alphab_{t}$. Suppose that at time $t_0$, a
signal $\x$ is drawn and the vector~$\alphab_{t_0}$ is computed. If the
same signal $\x$ is drawn again at time $t > t_0$, then it is natural
to replace the ``old'' information~$\alphab_{t_0}$ by the new
vector $\alphab_t$ in the matrices $\A_t$ and $\B_t$---that is, $\A_{t} \leftarrow \A_{t-1} +
\alphab_{t}\alphab_{t}^T - \alphab_{t_0}\alphab_{t_0}^T$ and $\B_{t} \leftarrow \B_{t-1} +
\x_{t}\alphab_{t}^T - \x_t\alphab_{t_0}^T$.
In this setting, which requires storing all the past coefficients
$\alphab_{t_0}$, this method amounts to a block-coordinate descent for the
problem of minimizing Eq.~(\ref{eq:joint}).
When dealing with large but finite sized training sets, storing
all coefficients~$\alphab_{i}$ is impractical, but it is
still possible to partially exploit the same idea, by removing
the information from $\A_t$ and $\B_t$ that is older than two
{\em epochs} (cycles through the data), through the use of two auxiliary
matrices~$\A_t'$ and~$\B_t'$ of size $k \times k$ and $m \times k$
respectively.  These two matrices should be built with the same rules as~$\A_t$
and~$\B_t$, except that at the end of an epoch, $\A_t$ and $\B_t$ are
respectively replaced by~$\A_t'$ and $\B_t'$, while~$\A_t'$ and $\B_t'$ are set
to $0$. Thanks to this strategy, $\A_t$ and $\B_t$ do not carry any coefficients~$\alphab_i$ older than two epochs.
\subsubsection{Scaling the ``Past'' Data}
At each iteration, the ``new'' information
$\alphab_t$ that is added to the matrices~$\A_t$ and~$\B_t$ has the same weight as the ``old'' one. A simple and
natural modification to the algorithm is to rescale the ``old'' information
so that newer coefficients~$\alphab_t$ have more weight, which 
is classical in online learning. For instance,
\citet{neal} present an online algorithm for EM, where sufficient statistics
are aggregated over time, and an exponential decay is used to forget
out-of-date statistics.
In this paper, we propose to
replace lines $5$ and $6$ of Algorithm \ref{fig:algoonline} by
\begin{displaymath}
\begin{split}
   \A_t & \leftarrow \beta_t \A_{t-1} + \alphab_{t}\alphab_{t}^T, \\
\B_t & \leftarrow \beta_t \B_{t-1} + \x_t\alphab_{t}^T, \\
\end{split}
\end{displaymath}
where $\beta_t \defin \big(1-\frac{1}{t}\big)^\rho$,
and $\rho$ is a new parameter.
In practice, one can apply this strategy after a few iterations, once $\A_t$
is well-conditioned. Tuning $\rho$
improves the convergence
rate, when the training sets are large, even though, as shown in
Section \ref{sec:exp_dict}, it is not critical.  To understand better the effect
of this modification, note that Eq.~(\ref{eq:dicoupdate}) becomes
\begin{displaymath}
   \begin{split}
   \D_t  &\defin \argmin_{\D \in \C} \frac{1}{\sum_{j=1}^t ( j/t)^\rho}\sum_{i=1}^t
   \Big(\frac{i}{t}\Big)^\rho\Big(\frac{1}{2}||\x_i-\D\alphab_i||_2^2 + \lambda||\alphab_i||_1\Big), \\
   &= \argmin_{\D \in \C} \frac{1}{\sum_{j=1}^t ( j/t)^\rho  }\Big( \frac{1}{2}\trace(\D^T\D\A_t) - \trace(\D^T\B_t)\Big).
\end{split}
\end{displaymath}
When $\rho=0$, we obtain the original version of the algorithm. Of course,
different strategies and heuristics could also be investigated.
In practice, this parameter $\rho$ is useful for large data sets only ($n \geq 100\,000$).
For smaller data sets, we have not observed a better performance when using 
this extension.
\subsubsection{Mini-Batch Extension}
In practice, we can also improve the convergence speed of our algorithm by
drawing $\eta > 1$ signals at each iteration instead of a single
one, which is a classical heuristic in stochastic gradient descent
algorithms. In our case, this is further motivated by the fact that
the complexity of computing $\eta$ vectors $\alphab_i$ is not
linear in $\eta$. A Cholesky-based implementation of LARS-Lasso for decomposing
a single signal has a complexity of $\O(kms+ks^2)$, where $s$ is
the number of nonzero coefficients. When decomposing~$\eta$ signals,
it is possible to pre-compute the Gram matrix $\D_t^T\D_t$
and the total complexity becomes $\O(k^2m + \eta(km+ks^2))$, which 
is much cheaper than $\eta$ times the previous complexity when $\eta$ is
large enough and $s$ is small.  Let us denote by $\x_{t,1},\ldots,\x_{t,\eta}$ the signals drawn at
iteration~$t$. We can now replace lines $5$ and $6$ of
Algorithm
\ref{fig:algoonline} by
\begin{displaymath}
\begin{split}
   \A_t & \leftarrow \A_{t-1} + \frac{1}{\eta}\sum_{i=1}^\eta \alphab_{t,i}\alphab_{t,i}^T, \\
\B_t & \leftarrow \B_{t-1} + \frac{1}{\eta}\sum_{i=1}^\eta \x_{t,i}\alphab_{t,i}^T. \\
\end{split}
\end{displaymath}
\subsubsection{Slowing Down the First Iterations}
As in the case of stochastic gradient descent, the first iterations of our algorithm may
update the parameters with large steps, immediately leading to large
deviations from the initial dictionary.  To prevent this phenomenon,
classical implementations of stochastic gradient descent use gradient steps
of the form $a / (t+b)$, where~$b$ ``reduces'' the step size.  An
initialization of the form $\A_0 = t_0 \I$ and $\B_0 = t_0 \D_0$ with $t_0 \geq
0$ also slows down the first steps of our algorithm by forcing the solution of the
dictionary update to stay close to $\D_0$. As shown in Section
\ref{sec:exp_dict}, we have observed that our method does not require this extension
to achieve good results in general.
\subsubsection{Purging the Dictionary from Unused Atoms}
Every dictionary learning technique sometimes encounters situations where
some of the dictionary atoms are never (or very seldom) used, which typically
happens with a very bad initialization.  A common practice is to replace
these during the optimization by randomly chosen elements of the training
set, which solves in practice the problem in most cases. For more difficult
and highly regularized cases, it is also possible to choose 
a continuation strategy consisting of starting
from an easier, less regularized problem, and gradually increasing $\lambda$.
This continuation method has not been used in this paper.
\subsection{Link with Second-order Stochastic Gradient Descent} \label{subsec:link}
For unconstrained learning problems with twice differentiable expected cost, 
the second-order stochastic gradient descent algorithm (see
\citealp{bottou} and references therein) improves upon its first-order version,
by replacing the learning rate by the inverse of the Hessian. When this matrix 
can be computed or approximated efficiently, this method usually yields a faster
convergence speed and removes the problem of tuning the learning
rate. However, it cannot be applied easily to constrained optimization problems
and requires at every iteration an inverse of the Hessian.
For these two reasons, it cannot be used for the dictionary learning problem,
but nevertheless it shares some similarities with our algorithm, which we
illustrate with the example of a different problem.

Suppose that two major modifications are brought to our original formulation: (i) the
vectors~$\alphab_t$ are independent of the dictionary $\D$---that is, they are
drawn at the same time as $\x_t$;  (ii) the optimization is unconstrained---that is, $\C =
\Real^{m \times k}$.  This setting leads to the least-square estimation
problem
\begin{equation}
   \min_{\D \in \Real^{m \times k}} \E_{(\x,\alphab)}\big[||\x-\D\alphab||_2^2\big],\label{eq:kalman}
\end{equation}
which is of course different from the original dictionary learning formulation.
Nonetheless, it is possible to address Eq.~(\ref{eq:kalman}) with our method and show that
it amounts to using the recursive formula 
\begin{displaymath}
   \D_t \leftarrow \D_{t-1} + (\x_t-\D_{t-1}\alphab_t)\alphab_t^T \Big(
   \sum_{i=1}^{t} \alphab_i \alphab_i^T \Big)^{-1},
\end{displaymath}
which is equivalent to a second-order stochastic gradient descent
algorithm: The gradient obtained at
$(\x_t,\alphab_t)$ is the term \mbox{$-(\x_t-\D_{t-1}\alphab_t)\alphab_t^T$}, and the sequence \mbox{$(1/t)\sum_{i=1}^{t}\alphab_i\alphab_i^T$} converges to the Hessian of the objective function.
Such sequence of updates admit a fast implementation called Kalman
algorithm (see \citealp{kushner} and references therein).
\section{Convergence Analysis} \label{sec:conv}
The main tools used in our proofs are
the convergence of empirical processes \citep{vaart} and,
following \citet{bottou2}, the convergence of quasi-martingales \citep{fisk}.
Our analysis is limited to the basic version of the algorithm, although it
can in principle be carried over to the optimized versions discussed in
Section~\ref{sec:variants}. 
Before proving our main result,
let us first discuss the (reasonable) assumptions under
which our analysis holds.
\subsection{Assumptions} \label{subsec:assumptions}
{\bf (A) The data admits a distribution with compact support $K$}.
Assuming a compact support for the data is natural in audio, image, and video
processing applications, where it is imposed by the data acquisition process.

\noindent {\bf (B) The quadratic surrogate functions $\hatf_t$ are strictly convex with
lower-bounded Hessians.}
We assume that the smallest eigenvalue of the
positive semi-definite matrix~$\frac{1}{t}\A_t$ defined in Algorithm~\ref{fig:algoonline} is greater than or equal to some constant $\kappa_1$. As a consequence,
$\A_t$ is invertible and $\hatf_t$ is strictly convex with Hessian $\I \otimes \frac{2}{t}\A_t
$. This  
hypothesis is in practice verified experimentally after a few iterations
of the algorithm when the initial dictionary is reasonable, consisting for example of a
few elements from the training set, or any common dictionary,
such as DCT (bases of cosines products) or wavelets \citep{mallat}. Note that it is
easy to enforce this assumption by adding a term $\frac{\kappa_1}{2}||\D||_F^2$ to the
objective function, which is equivalent to replacing the
positive semi-definite matrix $\frac{1}{t}\A_t$ by
$\frac{1}{t}\A_t+\kappa_1\I$. We have omitted for simplicity this
penalization in our analysis.\\ 
{\bf (C) A particular sufficient condition for the uniqueness of the sparse coding solution is
satisfied.} 
Before presenting this assumption, let us briefly recall classical optimality conditions for the $\ell_1$ decomposition problem in Eq.~(\ref{eq:ell1penalty}) \citep{fuchs}. For $\x$ in~$K$ and $\D$ in $\C$, $\alphab$ in $\Real^k$ is 
a solution of Eq.~(\ref{eq:ell1penalty}) if and only if
\begin{equation}
   \begin{split}
      \d_j^T(\x-\D\alphab) &=\lambda \sign(\alphab[j]) ~~\text{if}~~ \alphab[j] \neq 0, \\
      |\d_j^T(\x-\D\alphab)| &\leq\lambda ~~\text{otherwise}.
   \end{split} \label{eq:optimell1}
\end{equation}
Let $\alphab^\star$ be such a solution. Denoting by $\Lambda$ the set of
indices $j$ such that $|\d_j^T(\x-\D\alphab^\star)|=\lambda$, and $\D_\Lambda$
the matrix composed of the columns from $\D$ restricted to the set $\Lambda$, it is easy
to see from Eq.~(\ref{eq:optimell1}) that the solution $\alphab^\star$ is necessary unique
if $(\D_\Lambda^T\D_\Lambda)$ is invertible and that
\begin{equation}
   \alphab^\star_\Lambda =
   (\D_\Lambda^T\D_\Lambda)^{-1}(\D_\Lambda^T\x-\lambda \epsilonb_\Lambda), \label{eq:closed}
\end{equation}
where $\alphab^\star_\Lambda$ is the vector containing the values of $\alphab^\star$ corresponding to the set 
$\Lambda$ and $\epsilonb_\Lambda[j]$ is equal to the sign of $\alphab^\star_\Lambda[j]$ for all $j$. 
With this preliminary uniqueness condition in hand, we can 
now formulate our assumption:
\emph{We assume that there exists $\kappa_2 > 0$ such that, for
all~$\x$ in~$K$ and all dictionaries~$\D$ in the subset of $\C$ considered
by our algorithm, the smallest eigenvalue of $\D_{\Lambda}^T\D_\Lambda$ is
greater than or equal to $\kappa_2$.} 
This guarantees the invertibility of~$(\D_\Lambda^T\D_\Lambda)$ and
therefore the uniqueness of the solution of Eq.~(\ref{eq:ell1penalty}).
It is of course easy to build a
dictionary $\D$ for which this assumption fails.  However, having
$\D_{\Lambda}^T\D_\Lambda$ invertible is a common assumption in
linear regression and in methods such as the LARS algorithm aimed at
solving Eq.~(\ref{eq:ell1penalty}) \citep{efron}. 
It is also possible to enforce this condition using an elastic net
penalization \citep{zou}, replacing $||\alphab||_1$ by
$||\alphab||_1+\frac{\kappa_2}{2}||\alphab||_2^2$ and thus improving the numerical
stability of homotopy algorithms, which is the choice
made by \citet{zou2}. Again, we
have omitted this penalization in our analysis.
\subsection{Main Results}
Given assumptions {\bf (A)}--{\bf (C)}, let us now show that our algorithm
converges to a stationary point of the objective function.  Since this paper
is dealing with non-convex optimization, neither our algorithm nor any one in
the literature is guaranteed to find the global optimum of the optimization
problem. However, such stationary points have often been found to
be empirically good enough for practical applications, for example, for image
restoration \citep{elad,mairal}.

Our first result (Proposition \ref{prop:regul} below) states that given {\bf
(A)}--{\bf (C)}, 
$f(\D_t)$ converges almost surely and 
$f(\D_t)-\hatf_t(\D_t)$ converges almost surely to $0$, meaning that
$\hatf_t$ acts as a converging surrogate of~$f$.
First, we prove a lemma to show that $\D_t-\D_{t-1}=O(1/t)$. It 
does not ensure the convergence of~$\D_t$, but guarantees 
the convergence of the positive sum $\sum_{t=1}^\infty ||\D_t-\D_{t-1}||_F^2$,
a classical condition in gradient descent convergence proofs
\citep{bertsekas}.  
\begin{lemma}{\bf [Asymptotic variations of $\D_t$].\\} \label{lemma:prelim}
   Assume {\bf (A)}--{\bf (C)}. Then,
   \begin{displaymath}
      \D_{t+1}-\D_{t} = \O\Big(\frac{1}{t}\Big) \as
   \end{displaymath}
\end{lemma}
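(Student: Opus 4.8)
The plan is to compare the two minimizers $\D_t=\argmin_{\D\in\C}\hatf_t(\D)$ and $\D_{t+1}=\argmin_{\D\in\C}\hatf_{t+1}(\D)$ by exploiting two facts: the surrogates $\hatf_t$ are uniformly strongly convex (assumption {\bf (B)}), and consecutive surrogates $\hatf_t$ and $\hatf_{t+1}$ differ by an amount that shrinks like $1/t$. Strong convexity converts a small perturbation of the objective into a small displacement of its constrained minimizer, and the passage from time $t$ to $t+1$ (adding one new term and renormalizing $\frac1t\mapsto\frac1{t+1}$) is exactly an $\O(1/t)$ perturbation, so the two effects combine to give the claimed rate.

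First I would record the a priori bounds that keep everything finite. By assumption {\bf (A)} the signals $\x_t$ live in the compact set $K$, and since $\alphab=0$ is feasible in the sparse coding problem~(\ref{eq:ell1penalty}) we get $\lambda||\alphab_t||_1\le\frac12||\x_t||_2^2$, so the codes $\alphab_t$ are uniformly bounded. Consequently $\frac1t\A_t=\frac1t\sum_i\alphab_i\alphab_i^T$ and $\frac1t\B_t=\frac1t\sum_i\x_i\alphab_i^T$ are bounded, uniformly in $t$, and $\D$ ranges over the bounded set $\C$. In particular $\nabla\hatf_t(\D)=\frac1t(\D\A_t-\B_t)$ is bounded uniformly in $t$ and in $\D\in\C$.

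Next comes the strong-convexity comparison. The Hessian of $\hatf_t$ has smallest eigenvalue at least $\kappa_1$ by {\bf (B)}, so $\hatf_t$ is strongly convex with modulus at least $\kappa_1$ uniformly in $t$; combined with the first-order optimality $\langle\nabla\hatf_t(\D_t),\D-\D_t\rangle\ge0$ of $\D_t$ on the convex set $\C$, this yields $\hatf_t(\D)\ge\hatf_t(\D_t)+\frac{\kappa_1}{2}||\D-\D_t||_F^2$ for every $\D\in\C$. Writing this inequality for $\hatf_t$ at $\D=\D_{t+1}$ and for $\hatf_{t+1}$ at $\D=\D_t$ and adding the two, the quadratic terms combine and, setting $h\defin\hatf_t-\hatf_{t+1}$, I obtain $h(\D_{t+1})-h(\D_t)\ge\kappa_1||\D_{t+1}-\D_t||_F^2$. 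On the other hand, the mean value theorem along the segment $[\D_t,\D_{t+1}]\subset\C$ gives $|h(\D_{t+1})-h(\D_t)|\le\big(\sup_{\D\in\C}||\nabla h(\D)||_F\big)\,||\D_{t+1}-\D_t||_F$, so after dividing by $||\D_{t+1}-\D_t||_F$,
\begin{displaymath}
   ||\D_{t+1}-\D_t||_F\le\frac{1}{\kappa_1}\sup_{\D\in\C}||\nabla h(\D)||_F.
\end{displaymath}

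It remains to show $\sup_{\D\in\C}||\nabla h(\D)||_F=\O(1/t)$. Using $\A_{t+1}=\A_t+\alphab_{t+1}\alphab_{t+1}^T$ and $\B_{t+1}=\B_t+\x_{t+1}\alphab_{t+1}^T$, a direct computation gives
\begin{displaymath}
   \nabla h(\D)=\frac{1}{t+1}\nabla\hatf_t(\D)-\frac{1}{t+1}(\D\alphab_{t+1}-\x_{t+1})\alphab_{t+1}^T,
\end{displaymath}
and both terms are $\O(1/t)$ by the bounds above (the first because $\nabla\hatf_t$ is uniformly bounded, the second because $\D\alphab_{t+1}-\x_{t+1}$ and $\alphab_{t+1}$ are bounded), yielding $\D_{t+1}-\D_t=\O(1/t)$ almost surely. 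The main obstacle is less any single estimate than correctly assembling the two ingredients: one must verify that strong convexity (assumption {\bf (B)}) holds with a constant $\kappa_1$ uniform in $t$, and that the difference of consecutive surrogates is genuinely $\O(1/t)$ uniformly over the bounded domain $\C$---which is where the compact-support assumption {\bf (A)}, through the uniform boundedness of the codes $\alphab_t$, does the real work. Assumption {\bf (C)} is not needed here; it will matter only later, for the regularity of the expected cost $f$.
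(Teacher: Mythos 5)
Your proof is correct and follows essentially the same route as the paper's: assumption {\bf (B)} gives a second-order growth inequality for the constrained minimizers, the difference $h = \hatf_t - \hatf_{t+1}$ of consecutive surrogates is Lipschitz on $\C$ with constant $\O(1/t)$, and combining the two yields $||\D_{t+1}-\D_t||_F = \O(1/t)$ (your symmetric use of two growth inequalities is equivalent to the paper's single growth inequality plus the minimality of $\D_{t+1}$ for $\hatf_{t+1}$). The only substantive refinement is your observation that the uniform boundedness of the codes needs only {\bf (A)}, via feasibility of $\alphab = 0$ in Eq.~(\ref{eq:ell1penalty}), whereas the paper invokes {\bf (C)} together with the closed form of Eq.~(\ref{eq:closed}) for the same purpose; both arguments are valid.
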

\begin{proof}
   This proof is inspired by Prop 4.32 of \citet{bonnans2} on the Lipschitz
   regularity of solutions of optimization problems.
   Using assumption {\bf (B)}, for all $t$, the surrogate $\hatf_t$ is 
   strictly convex with a Hessian lower-bounded by~$\kappa_1$. Then, a short
   calculation shows that it verifies the {\em second-order growth condition}
   \begin{equation}
      \hatf_t(\D_{t+1})-\hatf_t(\D_t) \geq \kappa_1 ||\D_{t+1}-\D_t||_F^2. \label{eq:lipsh1}
   \end{equation}
   Moreover,
   \begin{displaymath}
      \begin{split}
      \hatf_t(\D_{t+1})-\hatf_t(\D_t) &=
      \hatf_t(\D_{t+1})-\hatf_{t+1}(\D_{t+1}) + \hatf_{t+1}(\D_{t+1}) -
      \hatf_{t+1}(\D_{t}) + \hatf_{t+1}(\D_{t}) - \hatf_t(\D_t) \\
      & \leq  \hatf_t(\D_{t+1})-\hatf_{t+1}(\D_{t+1}) + \hatf_{t+1}(\D_{t}) - \hatf_t(\D_t),
   \end{split}
   \end{displaymath}
   where we have used that $\hatf_{t+1}(\D_{t+1})-\hatf_{t+1}(\D_{t}) \leq 0$ because $\D_{t+1}$ minimizes
   $\hatf_{t+1}$ on $\C$.
   Since $\hatf_t(\D) = \frac{1}{t}(\frac{1}{2}\trace(\D^T\D\A_t)-\trace(\D^T\B_t))$, and $||\D||_F \leq \sqrt{k}$, 
   it is possible to show that
   $\hatf_t - \hatf_{t+1}$ is Lipschitz with constant $c_t=(1/t)(||\B_{t+1}-\B_t||_F+\sqrt{k}||\A_{t+1}-\A_t||_F)$,
   which gives
   \begin{equation}
      \hatf_t(\D_{t+1})-\hatf_t(\D_t)  \leq c_t||\D_{t+1}-\D_t||_F. \label{eq:lipsh2}
   \end{equation}
   From Eq.~(\ref{eq:lipsh1}) and (\ref{eq:lipsh2}), we obtain
   \begin{displaymath}
      ||\D_{t+1}-\D_t||_F \leq \frac{c_t}{\kappa_1}.
   \end{displaymath}
   Assumptions {\bf (A)}, {\bf (C)} and Eq.~(\ref{eq:closed}) ensure that the
   vectors $\alphab_i$ and $\x_i$ are bounded with probability one and therefore
   $c_t = \O(1/t) \as$
\end{proof}

We can now state and prove our first proposition, which shows that we are indeed minimizing a smooth function.
\begin{proposition}{\bf [Regularity of $f$]. \\} \label{prop:regul}
   Assume {\bf (A)} to {\bf (C)}. For $\x$ in the support $K$ of the probability distribution $p$, and $\D$ in the feasible set~$\C$, let us define 
   \begin{equation}
      \alphab^\star(\x,\D)=\argmin_{\alphab \in \Real^k}
      \frac{1}{2}||\x-\D\alphab||_2^2 + \lambda ||\alphab||_1. \label{eq:optim}
   \end{equation}
   Then, 
   \begin{enumerate}
      \item the function $\l$ defined in Eq.~(\ref{eq:ell1penalty}) is
         continuously differentiable and 
         \begin{displaymath}
            \nabla_{\D}\l(\x,\D) = -(\x-\D\alphab^\star(\x,\D))\alphab^{\star}(\x,\D)^T.
      \end{displaymath}
      \item $f$ is continuously differentiable and $\nabla f(\D) =
         \E_\x\big[\nabla_{\D}\l(\x,\D)\big]$;
      \item $\nabla f(\D)$ is Lipschitz on $\C$.
   \end{enumerate}
\end{proposition}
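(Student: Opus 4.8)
The plan is to view $\l(\x,\D)$ as the optimal value of a parametric minimization problem and to differentiate through it, then lift the resulting regularity to $f$ by differentiating under the expectation, and finally to obtain the Lipschitz bound from the closed-form description of the minimizer in Eq.~(\ref{eq:closed}). The guiding observation is that the only nonsmooth term, $\lambda\|\alphab\|_1$, is independent of $\D$, so the inner objective $g(\alphab,\x,\D)\defin\frac{1}{2}\|\x-\D\alphab\|_2^2+\lambda\|\alphab\|_1$ has a well-defined $\D$-gradient $\nabla_\D g=-(\x-\D\alphab)\alphab^T$ that is jointly continuous in $(\alphab,\x,\D)$. For part~1, I would first invoke assumption {\bf (C)} to get uniqueness of the minimizer $\alphab^\star(\x,\D)$, and then prove that $\alphab^\star$ is continuous in $(\x,\D)$: since $g$ is continuous and coercive in $\alphab$ (because $g\geq\lambda\|\alphab\|_1$ confines the minimizer to a bounded set, uniformly over the compact $K\times\C$), Berge's maximum theorem gives upper hemicontinuity of the $\argmin$ correspondence, and uniqueness upgrades this to continuity of the single-valued map. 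With continuity in hand, a standard envelope (Danskin-type) argument, as in \citet{bonnans2}, shows that $\l$ is differentiable with $\nabla_\D\l(\x,\D)=\nabla_\D g(\alphab^\star,\x,\D)=-(\x-\D\alphab^\star)\alphab^{\star T}$, and this gradient is continuous because $\alphab^\star$ is.

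Part~2 is then a routine application of differentiation under the integral sign. Assumption {\bf (A)} confines $\x$ to a compact set $K$, and assumption {\bf (C)} together with Eq.~(\ref{eq:closed}) bounds $\alphab^\star$ uniformly on $K\times\C$; since $\C$ is itself bounded, the gradient $-(\x-\D\alphab^\star)\alphab^{\star T}$ is bounded uniformly in both arguments. This uniform bound is an integrable dominating function (the data measure being finite), which justifies exchanging $\nabla_\D$ and $\E_\x$ to obtain $\nabla f(\D)=\E_\x[\nabla_\D\l(\x,\D)]$, with continuity of $\nabla f$ following from the continuity of $\nabla_\D\l$ by dominated convergence.

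For part~3 I would show that $\D\mapsto\nabla_\D\l(\x,\D)$ is Lipschitz with a constant independent of $\x\in K$, so that the estimate survives the expectation. The heart of the matter is the Lipschitz continuity of $\alphab^\star$ in $\D$: on any region where the active set $\Lambda$ is fixed, Eq.~(\ref{eq:closed}) writes $\alphab^\star_\Lambda=(\D_\Lambda^T\D_\Lambda)^{-1}(\D_\Lambda^T\x-\lambda\epsilonb_\Lambda)$, and assumption {\bf (C)}---the lower bound $\kappa_2$ on the smallest eigenvalue of $\D_\Lambda^T\D_\Lambda$---together with the boundedness of $\x$ and $\D$ makes this a Lipschitz function of $\D$ with a constant depending only on $\kappa_2$ and the uniform bounds. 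Because there are at most finitely many active sets, this per-branch constant can be taken uniform. Combining this with the global continuity of $\alphab^\star$ from part~1 yields a global Lipschitz bound, after which substitution into $-(\x-\D\alphab^\star)\alphab^{\star T}$ and boundedness give the Lipschitz property of $\nabla_\D\l$, hence of $\nabla f$. I expect the main obstacle to be exactly this last gluing step: across changes of the active set $\Lambda$ the closed form switches branches and $\alphab^\star$ fails to be differentiable, so one must argue carefully---using continuity plus the uniform per-branch constant along line segments in the convex domain $\C$---that the Lipschitz constant does not blow up at the transitions.
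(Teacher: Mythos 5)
Your treatment of parts 1 and 2 is essentially the paper's own: boundedness of the minimizer (from {\bf (A)} and {\bf (C)}) lets one restrict to a compact coefficient set, uniqueness plus the Danskin-type result (Theorem~\ref{theo:bonnans}, from \citealp{bonnans}) gives the gradient formula, and compactness of $K$ justifies passing the derivative through the expectation. The genuine divergence is part 3. You bound the variation of $\D \mapsto \alphab^\star(\x,\D)$ by invoking the closed form of Eq.~(\ref{eq:closed}) on each region where the active set is constant (note you must also fix the sign vector $\epsilonb_\Lambda$ for the formula to be a single smooth branch), and then glue finitely many branches. The paper instead runs a local perturbation argument in the style of its Lemma~\ref{lemma:prelim} (Prop.~4.32 of \citealp{bonnans2}): around any $(\x,\D)$ it chooses a neighborhood $V$ on which perturbed solutions vanish outside $\Lambda$, observes that both solutions then minimize the reduced objective $\tildel$ over the \emph{same} index set $\Lambda$, and combines the second-order growth condition ($\tildel(\x,\D_\Lambda,\cdot)$ is $\kappa_2$-strongly convex by {\bf (C)}) with the Lipschitz property of the difference $\tildel(\x,\D_\Lambda,\cdot)-\tildel(\x',\D'_\Lambda,\cdot)$, yielding a local Lipschitz constant independent of the base point; compactness and convexity of $K\times\C$ then globalize it. The paper's route buys exactly what you flag as your main obstacle: its neighborhood only requires the off-$\Lambda$ coordinates to stay zero (the active set may strictly shrink inside $V$), so there is no enumeration of branches and no transition to control. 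Your route is completable, but the gluing needs an actual argument, because the branch regions are not open and per-branch bounds do not give local Lipschitzness at transition points: one should note that the closures of the finitely many (active set, sign) regions cover $\C$, that Eq.~(\ref{eq:closed}) and the eigenvalue bound extend to these closures by continuity (with the Lipschitz estimate between any two points of the same closed branch obtained directly from the identity $\A^{-1}-\B^{-1}=\A^{-1}(\B-\A)\B^{-1}$, so no convexity of the branch is needed), and then apply a pasting lemma along segments of $\C$---a short induction on the number of closed sets covering an interval, using the continuity of $\alphab^\star$ you established in part 1. As written you acknowledge this step without carrying it out; it is the hardest step of your plan, and it is precisely the step the paper's local strong-convexity argument is designed to bypass.
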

\begin{proof}
Assumption {\bf (A)} ensures that the vectors $\alphab^\star$ are
bounded for $\x$ in~$K$ and $\D$ in~$\C$. Therefore, one can
restrict the optimization problem~(\ref{eq:optim}) to a
compact subset of $\Real^k$.  Under assumption {\bf (C)}, the solution of
Eq.~(\ref{eq:optim}) is unique and $\alphab^\star$ is
well-defined.  Theorem \ref{theo:bonnans} in Appendix \ref{appendix:th} from
\citet{bonnans} can be applied and gives us
directly the first statement.  Since $K$ is compact, and $\l$ is continuously differentiable, the
second statement follows immediately.

To prove the third claim, we will show that for all~$\x$ in~$K$,
$\alphab^\star(\x,.)$ is Lipschitz with a constant independent of
$\x$,\footnote{ From now on, for a vector $\x$ in $\Real^m$, 
$\alphab^\star(\x,.)$ denotes the function that associates to a matrix
$\D$ verifying Assumption~{\bf (C)}, the optimal solution $\alphab^\star(\x,\D)$.
For simplicity, we will use these slightly abusive
notation in the rest of the paper.}
which is a sufficient condition for $\nabla f$ to be
Lipschitz.  First, the function optimized in Eq.~(\ref{eq:optim}) is continuous
in $\alphab$, $\D$, $\x$ and has a unique minimum, implying that
$\alphab^\star$ is continuous in $\x$ and $\D$.

Consider a matrix $\D$ in $\C$ and $\x$ in $K$ and denote by $\alphab^\star$ the vector $\alphab^\star(\x,\D)$, and
again by $\Lambda$ the set of indices $j$
such that $|\d_j^T(\x-\D\alphab^\star)|=\lambda$.
Since $\d_j^T(\x-\D\alphab^\star)$ is 
continuous in $\D$ and $\x$, there exists an open neighborhood $V$ around $(\x,\D)$ such that
for all $(\x',\D')$ in $V$, and $j \notin \Lambda$, $|{\d_j^T}'(\x'-\D'{\alphab^\star}')| < \lambda$
and ${\alphab^\star}'[j]=0$, where
${\alphab^\star}'=\alphab^\star(\x',\D')$.

Denoting by $\U_{\Lambda}$ the matrix composed of the columns of a matrix $\U$ 
corresponding to the index set $\Lambda$ and similarly by $\u_\Lambda$ the vector composed of the values of a vector $\u$ corresponding to $\Lambda$, 
we consider the function $\tildel$ 
\begin{displaymath}
   \tildel(\x,\D_\Lambda,\alphab_\Lambda) \defin \frac{1}{2}||\x-\D_\Lambda\alphab_\Lambda||_2^2+\lambda||\alphab_\Lambda||_1,
\end{displaymath}
Assumption {\bf (C)} tells us that $\tildel(\x,\D_\Lambda,.)$ is strictly convex with
a Hessian lower-bounded by~$\kappa_2$. Let us consider $(\x',\D')$ in $V$. A simple calculation shows that 
\begin{displaymath}
   \tildel(\x,\D_\Lambda,{\alphab^\star_\Lambda}')-\tildel(\x,\D_\Lambda,\alphab^\star_\Lambda)
   \geq \kappa_2 ||{\alphab^\star_\Lambda}'-\alphab^\star_\Lambda||_2^2.
\end{displaymath}
Moreover, it is easy to show that $\tildel(\x,\D_\Lambda,.)-\tildel(\x',\D_\Lambda',.)$ is Lipschitz with constant  
 $e_1||\D_\Lambda-\D_\Lambda'||_F+e_2||\x-\x'||_2$, where $e_1,e_2$ are constants independent of
 $\D,\D',\x,\x'$ and then, one can show that 
\begin{displaymath}
   ||{\alphab^\star}'-\alphab^\star||_2 = ||{\alphab^\star_\Lambda}'-\alphab^\star_\Lambda||_2 \leq \frac{1}{\kappa_2}\big(e_1||\D-\D'||_F + e_2||\x-\x'||_2\big).
\end{displaymath}
Therefore, $\alphab^\star$ is locally Lipschitz. Since $K \times \C$ is compact, $\alphab^\star$ is uniformly Lipschitz on $K \times \C$, which concludes the proof. 
\end{proof}

Now that we have shown that $f$ is a smooth function, we can state our first result showing that the sequence of functions $\hatf_t$ acts asymptotically as a surrogate of $f$ and that $f(\D_t)$ converges almost surely in the following proposition.
\begin{proposition}{\bf [Convergence of $f(\D_t)$ and of the
surrogate function].} \label{prop:conv1}
Let $\hatf_t$ denote the surrogate function defined in Eq.~(\ref{eq:surrog}).
Assume {\bf (A)} to {\bf (C)}. Then,
\begin{enumerate}
   \item $\hatf_t(\D_t)$  converges almost surely;
   \item $f(\D_t)-\hatf_t(\D_t)$ converges almost surely to $0$; 
   \item $f(\D_t)$ converges almost surely.
\end{enumerate}
\end{proposition}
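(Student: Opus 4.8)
The plan is to run the standard quasi-martingale argument on $u_t \defin \hatf_t(\D_t)$, with $\FF_t$ the $\sigma$-field generated by $\x_1,\ldots,\x_t$ (so $\D_t$ is $\FF_t$-measurable). First I would record the exact recursion $\hatf_{t+1} = \frac{t}{t+1}\hatf_t + \frac{1}{t+1}\big(\frac12\|\x_{t+1}-\D\alphab_{t+1}\|_2^2 + \lambda\|\alphab_{t+1}\|_1\big)$ and use that $\D_{t+1}$ minimizes $\hatf_{t+1}$ over $\C$ to obtain
\[
   u_{t+1}-u_t \leq \frac{\l(\x_{t+1},\D_t) - \hatf_t(\D_t)}{t+1},
\]
where the new term equals $\l(\x_{t+1},\D_t)$ precisely because $\alphab_{t+1}$ is the optimal code for $\D_t$. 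Splitting the numerator as $[\l(\x_{t+1},\D_t)-f(\D_t)] + [f(\D_t)-f_t(\D_t)] + [f_t(\D_t)-\hatf_t(\D_t)]$ and conditioning on $\FF_t$ annihilates the first bracket (since $\x_{t+1}$ is independent of $\FF_t$ and $f(\D)=\E_\x[\l(\x,\D)]$), while the last bracket is nonpositive because $\hatf_t \geq f_t$ pointwise.

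Second I would verify the quasi-martingale condition. The positive part of the conditional drift is bounded by $\|f-f_t\|_\infty/(t+1)$. Here the empirical-process machinery enters: since $\l(\x,\cdot)$ is uniformly bounded (Assumption~{\bf (A)}) and uniformly Lipschitz on the compact set $\C$ (third claim of Proposition~\ref{prop:regul}), the class $\{\l(\cdot,\D):\D\in\C\}$ has controlled covering numbers and is Donsker, so $\E\|f-f_t\|_\infty = \O(t^{-1/2})$. Then $\sum_t (t+1)^{-1}\E\|f-f_t\|_\infty < \infty$, and the quasi-martingale convergence theorem of \citet{fisk} yields that $u_t=\hatf_t(\D_t)$ converges almost surely, which is claim~1.

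Third, summing the same decomposition over $t$ I would extract $\sum_t (t+1)^{-1}\big(\hatf_t(\D_t)-f_t(\D_t)\big) < \infty \as$: the martingale part $\sum_t (t+1)^{-1}\big(\l(\x_{t+1},\D_t)-f(\D_t)\big)$ converges a.s.\ by the $L^2$ martingale convergence theorem (bounded increments under {\bf (A)}), the term $\sum_t (t+1)^{-1}\big(f(\D_t)-f_t(\D_t)\big)$ converges absolutely by the same empirical-process bound, and $u_{T+1}$ is bounded below. Writing $b_t \defin \hatf_t(\D_t)-f_t(\D_t)\geq 0$, I would then invoke Lemma~\ref{lemma:prelim} ($\D_{t+1}-\D_t=\O(1/t)$) together with the uniform Lipschitz regularity of $\hatf_t$ and $f_t$ to bound $|b_{t+1}-b_t|=\O(1/t)$; a standard lemma converting $\sum_t b_t/t<\infty$ with slowly varying $b_t$ into $b_t\to0$ gives $\hatf_t(\D_t)-f_t(\D_t)\to0 \as$ Finally, the uniform strong law of large numbers gives $\|f-f_t\|_\infty\to0$, hence $f(\D_t)-f_t(\D_t)\to0$; adding the two limits proves $f(\D_t)-\hatf_t(\D_t)\to0$ (claim~2), and combining claim~1 with claim~2 shows $f(\D_t)=\hatf_t(\D_t)+\big(f(\D_t)-\hatf_t(\D_t)\big)$ converges a.s.\ (claim~3).

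The main obstacle is the empirical-process step: establishing that the loss class is Donsker (equivalently, a Glivenko--Cantelli class with an $\O(t^{-1/2})$ rate in expectation) is what simultaneously makes the positive drift summable and controls $f-f_t$; it rests on the uniform Lipschitz property just proved in Proposition~\ref{prop:regul} and on compactness of $K\times\C$. The secondary delicate point is the slow-variation bound $|b_{t+1}-b_t|=\O(1/t)$, which is exactly where Lemma~\ref{lemma:prelim} is indispensable---without the $\O(1/t)$ control on $\D_{t+1}-\D_t$ one cannot upgrade summability of $\sum_t b_t/t$ to the pointwise limit $b_t\to0$.
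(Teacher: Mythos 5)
Your proof is correct and follows essentially the same route as the paper: a quasi-martingale argument on $u_t = \hatf_t(\D_t)$ with the same variation decomposition, the same Donsker-type bound $\E[||f-f_t||_\infty] = \O(t^{-1/2})$ to make the positive drift summable (Fisk's theorem giving claim~1), Lemma~\ref{lemma:prelim} together with the positive-sums lemma to upgrade $\sum_t (\hatf_t(\D_t)-f_t(\D_t))/(t+1) < \infty$ into $\hatf_t(\D_t)-f_t(\D_t)\to 0$, and Glivenko--Cantelli to pass from $f_t$ to $f$. The only (immaterial) difference is that you obtain the summability of the positive series by telescoping plus the $L^2$ martingale convergence theorem, whereas the paper reads it off from the second conclusion of the quasi-martingale theorem, namely $\sum_t |\E[u_{t+1}-u_t\,|\,\FF_t]| < \infty$ almost surely.
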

\begin{proof}
Part of this proof is inspired by \citet{bottou2}.
We prove the convergence of the sequence~$\hatf_t(\D_t)$ by
showing that the stochastic positive process
\begin{displaymath}
   u_t \defin \hatf_t(\D_t) \geq 0,
\end{displaymath}
is a quasi-martingale and use Theorem \ref{theo:martingales} from \citet{fisk} (see Appendix \ref{appendix:th}),
which states that if the sum of the ``positive'' variations of $u_t$ are
bounded, $u_t$ is a quasi-martingale, which converges with
probability one (see Theorem \ref{theo:martingales} for details).  Computing the variations of $u_t$, we obtain
\begin{equation}
   \begin{split}
      u_{t+1}-u_t &= \hatf_{t+1}(\D_{t+1}) - \hatf_t(\D_t) \\
      &= \hatf_{t+1}(\D_{t+1}) - \hatf_{t+1}(\D_{t})  + \hatf_{t+1}(\D_t)-
      \hatf_t(\D_t) \\
      &= \hatf_{t+1}(\D_{t+1}) - \hatf_{t+1}(\D_{t})  +
      \frac{\l(\x_{t+1},\D_t)-f_t(\D_t)}{t+1} + \frac{f_t(\D_t) -
      \hatf_t(\D_t)}{t+1}, \label{eq:ut}
   \end{split}
\end{equation}
using the fact that $\hatf_{t+1}(\D_t) = \frac{1}{t+1}\l(\x_{t+1},\D_t) + \frac{t}{t+1}\hatf_t(\D_t)$.
Since $\D_{t+1}$ minimizes $\hatf_{t+1}$ on~$\C$ and $\D_t$ is in $\C$,
$\hatf_{t+1}(\D_{t+1}) - \hatf_{t+1}(\D_{t}) \leq 0$.
Since the surrogate $\hatf_t$ upperbounds the empirical cost $f_t$,
we also have $f_t(\D_t) -\hatf_t(\D_t) \leq 0$.
To use Theorem \ref{theo:martingales}, we consider the filtration of
the past information~$\FF_t$ and take the expectation
of Eq.~(\ref{eq:ut}) conditioned on $\FF_t$, obtaining the following bound
\begin{displaymath}
   \begin{split}
      \E[u_{t+1}-u_t | \FF_t] & \leq
      \frac{\E[\l(\x_{t+1},\D_t)|\FF_t] -f_t(\D_t)}{t+1} \\ & \leq
      \frac{f(\D_t) -f_t(\D_t)}{t+1} \\
      &\leq \frac{||f-f_t||_\infty}{t+1},  
   \end{split}
\end{displaymath}
For a specific matrix $\D$, the central-limit theorem states that 
$\E[\sqrt{t}(f(\D_t) -f_t(\D_t))]$ is bounded.
However, we need here a stronger result on empirical processes to show that
$\E[\sqrt{t}||f-f_t||_\infty]$ is bounded. To do so,
we use the Lemma \ref{lemma:donsker} in Appendix \ref{appendix:th},
which is a corollary of Donsker theorem \citep[see][chap. 19.2]{vaart}.
It is easy to show that in our case, all the hypotheses are verified,
namely, $\l(\x,.)$ is uniformly Lipschitz and bounded since it is continuously differentiable on a compact set,
the set $\C \subset \Real^{m \times k}$ is bounded, and
$\E_\x[\l(\x,\D)^2]$ exists and is uniformly bounded.
Therefore, Lemma \ref{lemma:donsker} applies and there exists a constant $\kappa>0$
such that
\begin{displaymath}
   \E[\E[u_{t+1}-u_t | {\mathcal F}_t]^+] \leq \frac{\kappa}{t^{\frac{3}{2}}}.
\end{displaymath}
Therefore, defining $\delta_t$ as in Theorem \ref{theo:martingales},
we have 
\begin{displaymath}
   \sum_{t=1}^\infty \E[\delta_t(u_{t+1}-u_t)] = \sum_{t=1}^\infty
   \E[\E[u_{t+1}-u_t | {\mathcal F}_t]^+] < +\infty.
\end{displaymath}
Thus, we can apply Theorem \ref{theo:martingales}, which proves that
$u_t$ converges almost surely and that 
\begin{displaymath}
   \sum_{t=1}^\infty |\E[u_{t+1}-u_t | {\mathcal F}_t]| < +\infty \as
\end{displaymath}
Using Eq.~(\ref{eq:ut}) we can show that it implies the
almost sure convergence of the positive sum
\begin{displaymath}
   \sum_{t=1}^\infty \frac{\hatf_t(\D_t)-f_t(\D_t)}{t+1}.
\end{displaymath}
Using Lemma \ref{lemma:prelim} and the fact that the functions $f_t$ and
$\hatf_t$ are
bounded and Lipschitz, with a constant independent of
$t$, it is easy to show that the hypotheses of Lemma
\ref{lemma:converg} in Appendix \ref{appendix:th} are satisfied. Therefore
\begin{displaymath}
   f_t(\D_t)-\hatf_t(\D_t) \underset{t \to + \infty}{\longrightarrow} 0 \as
\end{displaymath}
Since $\hatf_t(\D_t)$ converges almost surely, this shows that
$f_t(\D_t)$ converges in probability to the same limit. Note that
we have in addition $||f_t-f||_\infty \to_{t\to+\infty} 0 \as$
\citep[see][Theorem 19.4 (Glivenko-Cantelli)]{vaart}.
Therefore, 
\begin{displaymath}
   f(\D_t)-\hatf_t(\D_t) \underset{t \to + \infty}{\longrightarrow} 0 \as
\end{displaymath}
and $f(\D_t)$ converges almost surely, which proves the second and third points.
\end{proof}

With Proposition \ref{prop:conv1} in hand, we can now prove our final and
strongest result, namely that first-order necessary optimality conditions are
verified asymptotically with probability one. 
\begin{proposition}{\bf [Convergence to a stationary point].}
   \label{prop:conv2} 
   Under assumptions {\bf (A)} to {\bf (C)}, the distance between $\D_t$ and
   the set of stationary points of the dictionary learning problem converges
   almost surely to $0$ when $t$ tends to infinity.
\end{proposition}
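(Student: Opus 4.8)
The plan is to show that the iterates $\D_t$ asymptotically satisfy the first-order optimality condition for minimizing $f$ over the convex set $\C$, and then to upgrade this into a statement about the distance to the set of stationary points using the compactness of $\C$. Recall that $\D^\star$ in $\C$ is stationary for $\min_{\D \in \C} f(\D)$ precisely when the variational inequality $\trace\big((\D-\D^\star)^T\nabla f(\D^\star)\big) \geq 0$ holds for every $\D$ in $\C$. Since $\D_t$ is by construction the minimizer of the convex function $\hatf_t$ over the convex set $\C$, it already satisfies the exact condition $\trace\big((\D-\D_t)^T\nabla \hatf_t(\D_t)\big) \geq 0$ for all $\D$ in $\C$. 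The whole proof therefore reduces to controlling the gap between $\nabla \hatf_t(\D_t)$ and $\nabla f(\D_t)$.

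The crucial step is to extract a gradient-level consequence from Proposition~\ref{prop:conv1}, which only delivers convergence of function \emph{values}. I would isolate this as an elementary lemma: if $g$ is a non-negative function on $\Real^{m \times k}$ whose gradient is Lipschitz with constant $L$, then $||\nabla g(\D)||_F^2 \leq 2L\,g(\D)$ for every $\D$, which follows by evaluating the standard quadratic upper bound at $\D - \tfrac1L\nabla g(\D)$ and using $g \geq 0$. I apply this to $g_t \defin \hatf_t - f_t$. This function is non-negative on all of $\Real^{m \times k}$ because the surrogate $\hatf_t$ upper-bounds the empirical cost $f_t$, and Proposition~\ref{prop:conv1} gives $g_t(\D_t)=\hatf_t(\D_t)-f_t(\D_t) \to 0 \as$ Its gradient is Lipschitz with a constant independent of $t$: the Hessian of $\hatf_t$ is $\I \otimes \tfrac{2}{t}\A_t$, bounded above and below by assumptions {\bf (A)}--{\bf (B)} (the $\alphab_i$ being bounded), while $\nabla f_t = \tfrac1t\sum_{i=1}^t \nabla_\D\l(\x_i,\cdot)$ inherits a uniform Lipschitz constant from the termwise Lipschitz bound on $\alphab^\star(\x,\cdot)$ established in the proof of Proposition~\ref{prop:regul}. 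Hence $||\nabla \hatf_t(\D_t) - \nabla f_t(\D_t)||_F^2 \leq 2L\,g_t(\D_t) \to 0 \as$

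Next I must bridge from $f_t$ to $f$ at the level of gradients. Writing $\nabla f_t(\D)-\nabla f(\D) = \tfrac1t\sum_{i=1}^t\big(\nabla_\D\l(\x_i,\D)-\E_\x[\nabla_\D\l(\x,\D)]\big)$, I would invoke the same Glivenko--Cantelli/Donsker machinery already used in Proposition~\ref{prop:conv1}: the class $\{\nabla_\D\l(\cdot,\D) : \D \in \C\}$ is indexed by the compact set $\C$, with $\nabla_\D\l$ continuous, bounded and Lipschitz in $\D$ on the compact domain $K \times \C$, so $||\nabla f_t - \nabla f||_\infty \to 0 \as$ Combining with the previous paragraph yields $\nabla \hatf_t(\D_t) - \nabla f(\D_t) \to 0 \as$

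Finally I pass to the limit in the variational inequality. Fix an outcome in the almost-sure event above; since $\C$ is compact it suffices to show that every limit point of $(\D_t)$ is stationary. Let $\D_{t_j} \to \D_\infty$. For any fixed $\D$ in $\C$, starting from $\trace\big((\D-\D_{t_j})^T\nabla \hatf_{t_j}(\D_{t_j})\big) \geq 0$ and replacing $\nabla\hatf_{t_j}(\D_{t_j})$ by $\nabla f(\D_{t_j})$ up to an error bounded by $||\D-\D_{t_j}||_F\,||\nabla\hatf_{t_j}(\D_{t_j})-\nabla f(\D_{t_j})||_F \to 0$, and then using the continuity of $\nabla f$ from Proposition~\ref{prop:regul}, I obtain $\trace\big((\D-\D_\infty)^T\nabla f(\D_\infty)\big) \geq 0$ for all $\D$ in $\C$, so $\D_\infty$ is stationary. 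A standard compactness argument then turns ``every limit point is stationary'' into ``$\mathrm{dist}(\D_t,\text{stationary set}) \to 0$'': were it to fail, a subsequence would remain at distance at least $\epsilon$, yet would admit a convergent sub-subsequence with a stationary limit, a contradiction. The main obstacle is the second paragraph---promoting value convergence to gradient convergence---since Proposition~\ref{prop:conv1} controls only values; the non-negativity of $\hatf_t - f_t$ together with uniformly Lipschitz gradients is exactly what makes this passage possible.
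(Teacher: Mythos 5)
Your proof is correct, and its outer skeleton---fix an outcome in the almost-sure event, show every limit point of $(\D_t)$ is stationary, then upgrade to convergence of the distance by compactness of $\C$---matches the paper's; but the core mechanism is genuinely different. The paper works entirely at the limit: it extracts convergent subsequences of $\A_t,\B_t$ to form a limit surrogate $\hatf_\infty$, notes that $\hatf_\infty \geq f$ with equality at $\D_\infty$ (from Proposition~\ref{prop:conv1}), and invokes a purely qualitative tangency argument---two differentiable functions, one dominating the other and touching at a point, have equal gradients there---so the normal-cone condition satisfied by $-\nabla \hatf_\infty(\D_\infty)$ (since $\D_\infty$ minimizes $\hatf_\infty$ over $\C$) transfers verbatim to $-\nabla f(\D_\infty)$. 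You instead work at finite $t$: your descent-lemma observation, $||\nabla g||_F^2 \leq 2 L\, g$ for a non-negative $g$ with $L$-Lipschitz gradient, is a quantitative version of the same tangency principle, converting the value convergence $\hatf_t(\D_t)-f_t(\D_t) \to 0$ into gradient convergence; you then need a second, gradient-level empirical-process result, $||\nabla f_t - \nabla f||_\infty \to 0$ almost surely, which the paper never requires (it only uses the value-level Glivenko--Cantelli result $||f_t - f||_\infty \to 0$, already in hand from Proposition~\ref{prop:conv1}). What your route buys is an explicit quantitative bound $||\nabla \hatf_t(\D_t) - \nabla f_t(\D_t)||_F \leq \sqrt{2L(\hatf_t(\D_t)-f_t(\D_t))}$ at every iterate and no need to construct $\hatf_\infty$ at all; what it costs is two extra uniformity requirements: a Lipschitz constant for $\nabla(\hatf_t - f_t)$ that is independent of $t$ and valid on a neighborhood of $\C$ (the test point $\D_t - \frac{1}{L}\nabla g_t(\D_t)$ may leave $\C$, so strictly you need a local variant of your lemma, which does go through once $g_t(\D_t)$ is small), and the vector-valued Glivenko--Cantelli step for the class $\{\nabla_\D \l(\cdot,\D) : \D \in \C\}$. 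Be aware that both proofs share the same mild gloss on this point: assumption {\bf (C)} guarantees differentiability of $\l(\x,\cdot)$ only on $\C$, while the paper's Taylor expansion at $\D_\infty$ in arbitrary directions $\U$, and your Lipschitz-gradient bound, both implicitly use regularity on a neighborhood of $\C$; in both cases this is repaired by observing that the eigenvalue condition in {\bf (C)} extends, with a slightly degraded constant, to a small neighborhood of the feasible set.
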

\begin{proof}
   Since the sequences of matrices $\A_t,\B_t$ are in a compact set, it is
   possible to extract converging subsequences. Let us assume for a moment
   that these sequences converge respectively to two matrices $\A_\infty$ and
   $\B_\infty$. In that case, $\D_t$ converges to a matrix $\D_\infty$ in~$\C$. 
   Let $\U$ be a matrix in $\Real^{m \times k}$. Since
   $\hatf_t$ upperbounds $f_t$ on $\Real^{m \times k}$, for all $t$,
   \begin{displaymath}
      \hatf_t(\D_t+\U) \geq f_t(\D_t+\U).
   \end{displaymath}
   Taking the limit when $t$ tends to infinity, 
   \begin{displaymath}
      \hatf_\infty(\D_\infty+\U) \geq f(\D_\infty+\U).
   \end{displaymath}
   Let $h_t > 0$ be a sequence that converges to $0$.
   Using a first order Taylor expansion, and using the fact that $\nabla f$ is
   Lipschitz and $\hatf_\infty(\D_\infty) = f(\D_\infty) \as$, we have
   \begin{displaymath}
      f(\D_\infty)+\trace(h_t\U^T\nabla \hatf_\infty(\D_\infty)) + o(h_t\U) \geq f(\D_\infty)+\trace(h_t\U^T\nabla f(\D_\infty)) + o(h_t\U),
   \end{displaymath}
   and it follows that
   \begin{displaymath}
      \trace\Big(\frac{1}{||\U||_F}\U^T\nabla \hatf_\infty(\D_\infty)\Big) \geq \trace\Big(\frac{1}{||\U_t||_F}\U^T\nabla f(\D_\infty)\Big),
   \end{displaymath}
   Since this inequality is true for all $\U$, $\nabla\hatf_\infty(\D_\infty)=\nabla f(\D_\infty)$. 
   A first-order necessary optimality condition for $\D_\infty$ being an optimum of $\hatf_\infty$ is that
   $-\nabla \hatf_\infty$ is in the {\em normal cone} of the set $\C$ at $\D_\infty$ \citep{borwein}.
   Therefore, this first-order necessary conditions is verified for $f$ at $\D_\infty$ as well.
   Since $\A_t,\B_t$ are  asymptotically close to their accumulation points, $-\nabla f(\D_t)$ is asymptotically close
   the normal cone at $\D_t$ and these first-order optimality conditions are verified asymptotically with probability one.
\end{proof}
\section{Extensions to Matrix Factorization} \label{sec:variations}
In this section, we present variations of the basic online algorithm to
address different optimization problems. We first present different
possible regularization terms for~$\alphab$ and~$\D$, which can be used
with our algorithm, and then detail some specific cases such as
non-negative matrix factorization, sparse principal component
analysis, constrained sparse coding,
and simultaneous sparse coding.
\subsection{Using Different Regularizers for $\alphab$}
In various applications, different priors for
the coefficients $\alphab$ may lead to different regularizers $\psi(\alphab)$.
As long as the assumptions of Section \ref{subsec:assumptions} are verified,
our algorithm can be used with:
\begin{itemize}
   \item Positivity constraints on $\alphab$ that are added to the $\ell_1$-regularization.
      The homotopy method presented in \citet{efron} is able to handle such constraints.
   \item The Tikhonov regularization, $\psi(\alphab) = \frac{\lambda_1}{2}||\alphab||_2^2$,
      which does not lead to sparse solutions.
   \item The elastic net \citep{zou}, $\psi(\alphab)= \lambda_1
      ||\alphab||_1 + \frac{\lambda_2}{2} ||\alphab||_2^2$, leading to a formulation relatively close to \citet{zou2}.
   \item The group Lasso \citep{yuan,turlach,bach}, $\psi(\alphab) = \sum_{i=1}^s
      ||\alphab_i||_2$, where $\alphab_i$ is a vector corresponding to a 
      group of variables.
\end{itemize}
Non-convex regularizers such as the $\ell_0$ pseudo-norm, $\ell_p$ pseudo-norm
with $p < 1$
can be used as well. However, as with any classical dictionary
learning techniques exploiting non-convex regularizers
(e.g., \citealp{field,engan,aharon}), there is no theoretical 
convergence results in these cases.
Note also that convex smooth approximation of sparse regularizers \citep{bradley},
or structured sparsity-inducing regularizers \citep{jenatton,jacob} 
could be used as well even though we have not tested them.
\subsection{Using Different Constraint Sets for $\D$}
In the previous subsection, we have claimed that our algorithm could be used
with different regularization terms on $\alphab$. 
For the dictionary learning problem, we have considered an $\ell_2$-regularization
on $\D$ by forcing its columns to have less than unit $\ell_2$-norm. 
We have shown that with this constraint set, the dictionary update step 
can be solved efficiently using a block-coordinate descent approach. Updating
the $j$-th column of~$\D$, when keeping the other ones fixed is solved by 
orthogonally projecting the vector $\u_j = \d_j+(1/\A[j,j])(\b_j-\D\a_j)$
on the constraint set $\C$, which in the classical dictionary learning case
amounts to a projection of $\u_j$ on the $\ell_2$-ball. 

It is easy to show that this procedure can be extended to different convex
constraint sets $\C'$ as long as the constraints are a union of independent constraints
on each column of~$\D$ and the orthogonal projections of the vectors $\u_j$ onto the
set $\C'$ can be done efficiently.
Examples of different sets $\C'$ that we propose as an alternative to $\C$ are
\begin{itemize}
   \item The ``non-negative'' constraints: 
      \begin{displaymath}
         \C' = \{ \D \in \Real^{m \times k} \st \forall j=1,\ldots,k,~~ ||\d_j||_2 \leq 1 ~~\text{and}~~ \d_j \geq 0 \}.
      \end{displaymath}
   \item The ``elastic-net'' constraints:
      \begin{displaymath}
         \C' \defin \{ \D \in \Real^{m \times k} \st \forall
         j=1,\ldots,k,~~ ||\d_j||_2^2 + \gamma||\d_j||_1 \leq 1
         \}.
      \end{displaymath}
      These constraints induce sparsity in the dictionary $\D$ (in addition to the
      sparsity-inducing regularizer on the vectors $\alphab_i$).
      By analogy with the regularization proposed by \citet{zou}, we call these
      constraints ``elastic-net constraints.'' Here, $\gamma$ is a new parameter,
      controlling the sparsity of the dictionary $\D$. Adding
      a non-negativity constraint is also possible in this case. 
      Note that the presence of the $\ell_2$ regularization is important here.
      It has been shown
      by \citet{bach3} that using the $\ell_1$-norm only in such problems
      lead to trivial solutions when $k$ is large enough.
      The combination of $\ell_1$ and $\ell_2$ constraints has also been proposed
      recently for the problem of matrix factorization by \citet{witten}, but
      in a slightly different setting. 
   \item The ``fused lasso'' \citep{tibshirani2} constraints. When one is
      looking for a dictionary whose columns are sparse and piecewise-constant, a fused lasso regularization can be used.
      For a vector $\u$ in $\Real^m$, we consider the $\ell_1$-norm of 
      the consecutive differences of $\u$ denoted by
      \begin{displaymath}
         \FL(\u) \defin \sum_{i=2}^m |\u[i]-\u[i-1]|,
      \end{displaymath}
      and define the ``fused lasso'' constraint set
      \begin{displaymath}
         \C' \defin \{ \D \in \Real^{m \times k} \st \forall
         j=1,\ldots,k,~~ ||\d_j||_2^2 + \gamma_1||\d_j||_1 + \gamma_2 \FL(\d_j) \leq 1
         \}.
      \end{displaymath}
      This kind of regularization has proven to be useful for 
      exploiting genomic data such as CGH arrays \citep{tibshirani3}.
\end{itemize}

In all these settings, replacing the projections of the vectors $\u_j$ onto the $\ell_2$-ball by
the projections onto the new constraints, our algorithm is still guaranteed to converge and find a
stationary point of the optimization problem.  The orthogonal projection onto the
``non negative'' ball is simple (additional thresholding) but the projection onto the two
other sets is slightly more involved. In
Appendix~\ref{appendix:proj}, we propose two algorithms for efficiently solving these
problems. The first one is presented in Section \ref{appendix:sec:elas} and computes the 
projection of a vector onto the elastic-net constraint in linear time, by
extending the efficient projection onto the $\ell_1$-ball from \citet{maculan} and \citet{duchi}.
The second one is a homotopy method, which solves the projection on the fused
lasso constraint set in $O(ks)$, where $s$ is the number of
piecewise-constant parts in the solution.  This method also solves
efficiently the fused lasso signal approximation problem
presented in \citet{friedman}:
\begin{displaymath}
   \min_{\u \in \Real^n} \frac{1}{2}||\b-\u||_2^2 + \gamma_1||\u||_1 +
   \gamma_2 \FL(\u) + \gamma_3||\u||_2^2.
\end{displaymath}
Being able to solve this problem efficiently has also numerous applications,
which are beyond the scope of this paper. For instance, it allows us to use
the fast algorithm of \citet{nesterov} for solving the more general fused
lasso problem \citep{tibshirani2}.
Note that the proposed method could be used as well with more complex constraints
for the columns of~$\D$, which we have not tested in this paper, 
addressing for instance the problem of structured sparse PCA \citep{jenatton2}.

Now that we have presented a few possible regularizers for $\alphab$ and $\D$,
that can be used within our algorithm, we focus on a few classical problems
which can be formulated as dictionary learning problems with 
specific combinations of such regularizers.
\subsection{Non Negative Matrix Factorization}
Given a matrix $\X = [\x_1,\ldots,\x_n]$ in $\Real^{m \times n}$, 
\citet{lee2} have proposed the non 
negative matrix factorization problem (NMF), which consists of minimizing 
the following cost
\begin{displaymath}
 \min_{\D \in \C, \alphab \in \Real^{k \times n}}
   \sum_{i=1}^n \frac{1}{2}||\x_i-\D\alphab_i||_2^2 \st \D \geq 0,~\forall~i,~~ \alphab_i \geq 0. 
\end{displaymath}
With this formulation, the matrix $\D$ and the vectors $\alphab_i$ are forced
to have non negative components, which leads to sparse solutions.
When applied to images, such as faces,
\citet{lee2} have shown that the learned features are more localized than the
ones learned with a classical singular value decomposition. 
As for dictionary learning, classical approaches for addressing this problem
are batch algorithms, such as the multiplicative update rules of
\citet{lee2}, or the projected gradient descent algorithm of \citet{lin}.

Following this
line of research, \citet{hoyer,hoyer2} has proposed non negative sparse coding
(NNSC), which extends non-negative matrix factorization by adding a
sparsity-inducing penalty to the objective function to further control the sparsity of the vectors~$\alphab_i$:
\begin{displaymath}
   \min_{\D \in \C, \alphab \in \Real^{k \times n}} \sum_{i=1}^n \Big(\frac{1}{2}||\x_i-\D\alphab_i||_2^2 + \lambda \sum_{j=1}^k \alphab_i[j]\Big) \st \D \geq 0,~\forall~i,~~ \alphab_i \geq 0.
\end{displaymath}
When $\lambda = 0$, this formulation is equivalent to NMF.
The only difference with the dictionary learning problem is that
non-negativity constraints are imposed on $\D$ and the vectors $\alphab_i$.
A simple modification of our algorithm, presented above, allows us to handle
these constraints, while guaranteeing to
find a stationary point of the optimization problem. Moreover,
our approach can work in the setting when $n$ is large.
\subsection{Sparse Principal Component Analysis}
Principal component analysis (PCA) is a classical tool for data analysis,
which can be interpreted as a method for finding orthogonal directions maximizing
the variance of the data, or as a low-rank matrix approximation method.
\citet{jolliffe}, \citet{zou2}, \citet{aspremont}, \citet{bach2}, \citet{witten} and \citet{zass} have proposed different formulations
for sparse principal component analysis (SPCA), which extends PCA by
estimating sparse vectors maximizing the variance of the data,
some of these formulations enforcing orthogonality between the sparse
components, whereas some do not.  In this paper, we formulate SPCA as a
sparse matrix factorization which is equivalent to the dictionary learning
problem with eventually sparsity constraints on the dictionary---that is, we
use the $\ell_1$-regularization term for $\alphab$ and the ``elastic-net''
constraint for $\D$ (as used in a penalty term by \citealt{zou2}):
\begin{displaymath}
   \min_{\alphab \in \Real^{k \times n}} \sum_{i=1}^n
   \Big(\frac{1}{2}||\x_i-\D\alphab_i||_2^2 + \lambda||\alphab_i||_1\Big) \st \forall
   j=1,\ldots,k,~~ ||\d_j||_2^2 + \gamma||\d_j||_1 \leq 1.
\end{displaymath}
As detailed above, our dictionary update procedure amounts to successive
orthogonal projection of the vectors $\u_j$ on the constraint set. More
precisely, the update of $\d_j$ becomes 
\begin{displaymath}
\begin{split}
   \u_j &\leftarrow \frac{1}{\A[j,j]}(\b_j-\D\a_j) + \d_j, \\
   \d_j &\leftarrow \argmin_{\d \in \Real^m} ||\u_j-\d||_2^2 \st  ||\d||_2^2 + \gamma||\d||_1 \leq 1,
\end{split}
\end{displaymath}
which can be solved in linear time using Algorithm \ref{fig:proj_elastic}
presented in Appendix \ref{appendix:proj}.  In addition to that, our SPCA
method can be used with fused Lasso constraints as well.

\subsection{Constrained Sparse Coding}
Constrained sparse coding problems are often encountered in the literature, and lead to 
different loss functions such as
\begin{equation}
   \l'(\x,\D) = \min_{\alphab \in \Real^k} ||\x-\D\alphab||_2^2 \st
   ||\alphab||_1 \leq T, \label{eq:mina}
\end{equation}
or
\begin{equation}
   \l''(\x,\D) = \min_{\alphab \in \Real^k}  ||\alphab||_1  \st
   ||\x-\D\alphab||_2^2 \leq \varepsilon, \label{eq:minb}
\end{equation}
where $T$ and $\varepsilon$ are pre-defined thresholds. 
Even though these loss functions lead to equivalent optimization problems 
in the sense that for given $\x,\D$ and $\lambda$, there exist $\varepsilon$ and
$T$ such that $\l(\x,\D)$, $\l'(\x,\D)$ and $\l''(\x,\D)$ admit
the same solution $\alphab^\star$, the problems of learning $\D$ using $\l$,
$\l'$ of $\l''$ are not equivalent. For instance, using $\l''$ has proven
experimentally to be particularly well adapted to image denoising \citep{elad,mairal}. 

For all $T$, the same analysis as for $\l$ can be carried for $\l'$, and the
simple modification which consists of computing $\alphab_t$ using
Eq.~(\ref{eq:mina}) in the sparse coding step leads to the minimization of the
expected cost $\min_{\D \in \C} \E_{\x}[\l'(\x,\D)]$.

Handling the case $\l''$ is a bit different. We propose to use the same
strategy as for $\l'$---that is, using our algorithm but computing $\alphab_t$
solving Eq.~(\ref{eq:minb}). Even though our analysis does not apply
since we do not have a quadratic surrogate of the expected cost,
experimental evidence shows that this approach is efficient in practice.
\subsection{Simultaneous Sparse Coding}
In some situations, the signals $\x_i$ are not i.i.d samples of an unknown
probability distribution, but are structured in groups (which are however
independent from each other), and one may want to address the problem of
simultaneous sparse coding, which appears also in the literature under various names such
as group sparsity or grouped variable
selection \citep{cotter2,turlach,yuan,obozinski,obozinski2,zhang,tropp2,tropp3}.  Let $\X =
[\x_1,\ldots,\x_q] \in \Real^{m \times q}$ be a set of signals.
Suppose one wants to obtain sparse decompositions of
the signals on the dictionary $\D$ that share the same active set (non-zero coefficients).
Let \mbox{$\alphab = [\alphab_1,\ldots,\alphab_q]$} in $\Real^{k \times q}$ be the matrix
composed of the coefficients. One way of imposing this \emph{joint sparsity}
is to penalize the number of non-zero rows of $\alphab$. A classical convex
relaxation of this joint sparsity measure is to consider the $\ell_{1,2}$-norm
on the matrix~$\alphab$
\begin{displaymath}
||\alphab||_{1,2} \defin \sum_{j=1}^k ||\alphab^j||_2,
\end{displaymath}
where $\alphab^j$ is the $j$-th row of $\alphab$. In that setting, the
$\ell_{1,2}$-norm of $\alphab$ is the $\ell_1$-norm of the $\ell_2$-norm of the rows
of $\alphab$.

The problem of jointly decomposing the signals $\x_i$ can be written as a
$\ell_{1,2}$-sparse decomposition problem, which is a subcase of the group
Lasso \citep{turlach,yuan,bach}, by defining the cost function 
\begin{displaymath}
  \l'''(\X,\D) = \min_{\alphab \in \Real^{k \times q}} \frac{1}{2}||\X-\D\alphab||_F^2 + \lambda ||\alphab||_{1,2},
\end{displaymath}
which can be computed using a block-coordinate descent approach \citep{friedman} or
an active set method \citep{roth2}.

Suppose now that we are able to draw groups of signals $\X_i$, $i=1,\ldots,n$ which have
bounded size and are independent from each other and identically distributed,
one can learn an adapted dictionary by solving the optimization problem 
\begin{displaymath}
   \min_{\D \in \C} \lim_{n \to \infty} \frac{1}{n} \sum_{i=1}^n \l'''(\X_i,\D).
\end{displaymath}
Being able to solve this optimization problem is important for many applications.
For instance, in \citet{mairal8}, state-of-the-art results in image denoising
and demosaicking are achieved with this formulation.
The extension of our algorithm to this case is relatively easy, computing at each
sparse coding step a matrix of coefficients $\alphab$, and keeping the updates
of $\A_t$ and $\B_t$ unchanged. 

All of the variants of this section have been implemented. Next section evaluates some of
them experimentally. An efficient C++ implementation with a Matlab interface 
of these variants is available on the Willow project-team web page \url{http://www.di.ens.fr/willow/SPAMS/}.
\section{Experimental Validation} \label{sec:exp_dict}
In this section, we present experiments on natural images and genomic data to demonstrate the
efficiency of our method for dictionary learning, non-negative matrix factorization, and
sparse principal component analysis.
\subsection{Performance Evaluation for Dictionary Learning} \label{subsec:exp:perf}
For our experiments, we have randomly selected $1.25\times 10^6$ patches from
images in the Pascal VOC'06 image database \citep{pascal2}, which is composed of
varied natural images; $10^6$ of these are kept
for training, and the rest for testing. 
We used these patches to create three
data sets $A$, $B$, and $C$ with increasing patch and dictionary sizes
representing various settings which are typical in image processing applications:
\begin{table}[hbtp] 
   \centering
\begin{tabular}{|l|c|c|l|}\hline
Data set& Signal size $m$ & Nb $k$ of atoms  & Type\\ \hline
$A$ & $8\times 8=64$ & 256 & b\&w\\ \hline
$B$ & $12\times 12\times 3=432$ & 512 & color\\ \hline
$C$ & $16\times 16=256$ & 1024 & b\&w \\ \hline
\end{tabular}
\end{table}
We have centered and normalized the patches to have unit $\ell_2$-norm and used the
regularization parameter $\lambda=1.2/\sqrt{m}$ in all of our experiments. The~$1/\sqrt{m}$
term is a classical normalization factor \citep{tsybakov}, and the
constant $1.2$ has shown to yield about 10 nonzero
coefficients for data set A and 40 for data sets B and C in these experiments.
We have implemented the proposed algorithm in C++ with a Matlab interface. All
the results presented in this section use the refinements from Section
\ref{sec:variants} since this has lead empirically to speed improvements.
Although our implementation is multithreaded, our experiments have been run for
simplicity on a single-CPU, single-core 2.66Ghz machine.

The first parameter to tune is $\eta$, the number of signals drawn at each
iteration.  Trying different powers of 2 for this variable has shown that
$\eta=512$ was a good choice (lowest objective function values on the training
set---empirically, this setting also yields the lowest values on the test
set). Even though this parameter is fairly easy to tune since values of 64,
128, 256 and 1024 have given very similar performances, the difference with
the choice $\eta=1$ is significant.

Our implementation can be used in both the online setting it is intended for,
and in a regular batch mode where it uses the entire data set at each iteration.
We have also
implemented a first-order stochastic gradient descent algorithm that shares
most of its code with our algorithm, except for the dictionary update step.
This setting allows us to draw meaningful comparisons between our algorithm
and its batch and stochastic gradient alternatives, which would have been
difficult otherwise. For example, comparing our algorithm to the Matlab
implementation of the batch approach from \citet{lee} developed by its authors
would have been unfair since our C++ program has a built-in speed
advantage.\footnote{Both LARS and the feature-sign algorithm \citep{lee} require
a large number of low-level operations which are not well optimized in Matlab.
We have indeed observed that our C++ implementation of LARS is up to $50$ times
faster than the Matlab implementation of the feature-sign algorithm of
\citet{lee} for our experiments.}
To measure and compare the performances of the three tested methods, we have
plotted the value of the objective function on {\em the test set},  acting as
a surrogate of the expected cost, as a function of the corresponding training
time.
\subsubsection{Online vs. Batch} 
The left column of Figure~\ref{fig:comparedict}~compares the online and batch
settings of our implementation. The full training set consists of $10^6$
samples.  The online version of our algorithm draws samples from the entire
set, and we have run its batch version on the full data set as well as subsets
of size $10^4$ and $10^5$ (see Figure~\ref{fig:comparedict}). The online setting systematically
outperforms its batch counterpart for every training set size and desired
precision.  We use a logarithmic scale for the computation time, which shows
that in many situations, the difference in performance can be dramatic. Similar
experiments have given similar results on smaller data sets.
Our algorithm uses all the speed-ups from Section \ref{sec:variants}.
The parameter~$\rho$ was chosen by trying the values $0,5,10,15,20,25$, and
$t_0$ by trying different powers of $10$. We have selected
$(t_0=0.001,\rho=15)$, which has given the best performance in terms of
objective function evaluated on the training set for the three data sets.
We have plotted three curves for our method: \textsf{OL1} corresponds
to the optimal setting $(t_0=0.001,\rho=15)$. 
Even though tuning two parameters might seem cumbersome, we have plotted two
other curves showing that, on the contrary, our method is very easy to use.
The curve \textsf{OL2}, corresponding to the setting
$(t_0=0.001,\rho=10)$, is very difficult to distinguish from the first curve
and we have observed a similar behavior with the setting $(t_0=0.001,\rho=20)$.
showing that our method is \emph{robust to the choice of the parameter $\rho$}.
We have also observed that the parameter $\rho$ is useful for large data sets only.
When using smaller ones ($n \leq 100, 000$), it did not bring any benefit.

Moreover, the curve \textsf{OL3} is obtained
without using a tuned parameter $t_0$---that is, $\rho=15$ and $t_0=0$, and shows
that its influence is very limited since very good results are obtained
without using it. On the other hand, we have observed that using a parameter $t_0$ too big, could
slightly slow down our algorithm during the first epoch (cycle on the training set).
\begin{figure}[hbtp]
 \centering
     \includegraphics[width=0.49\linewidth]{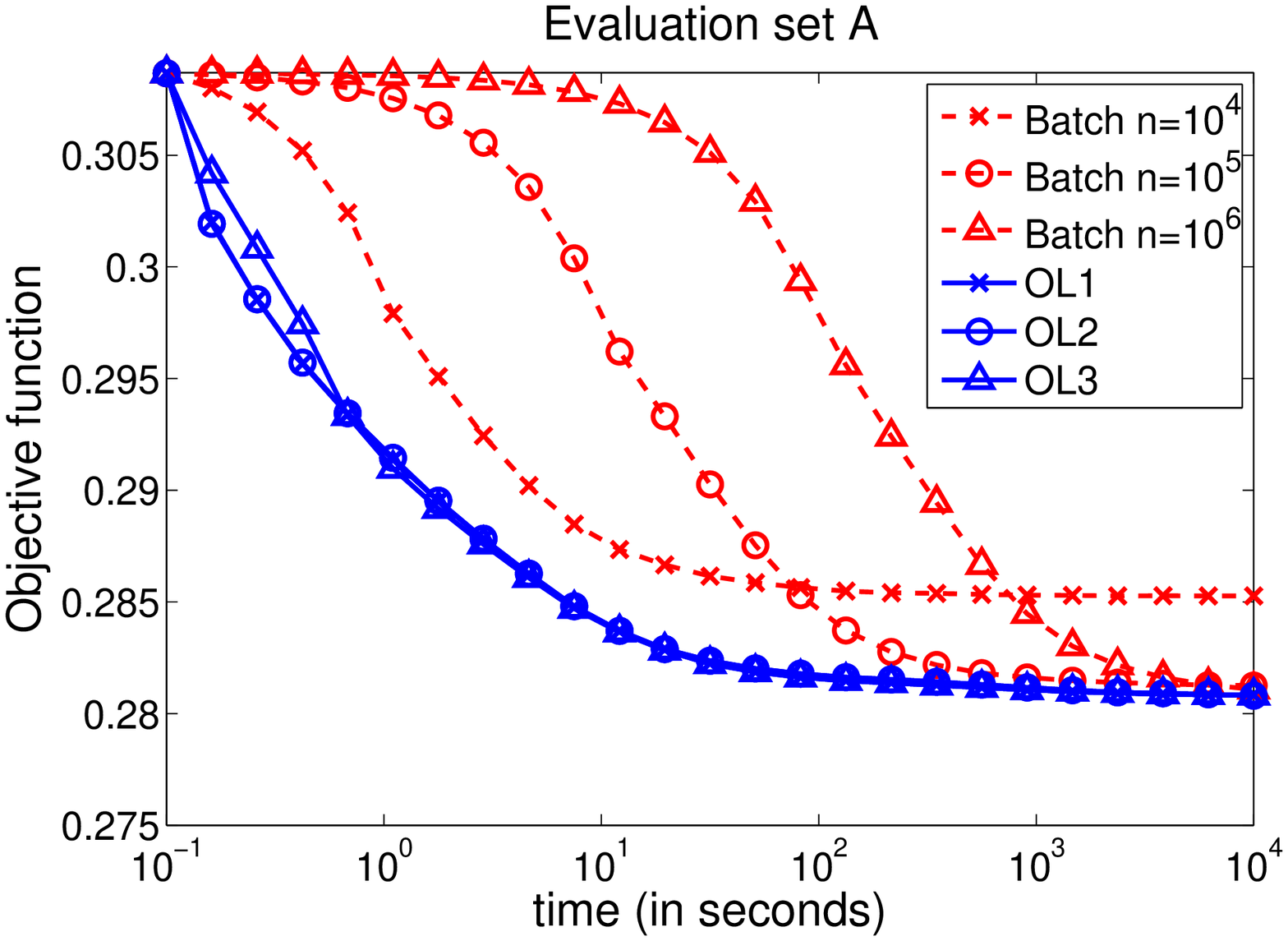} \hfill
     \includegraphics[width=0.49\linewidth]{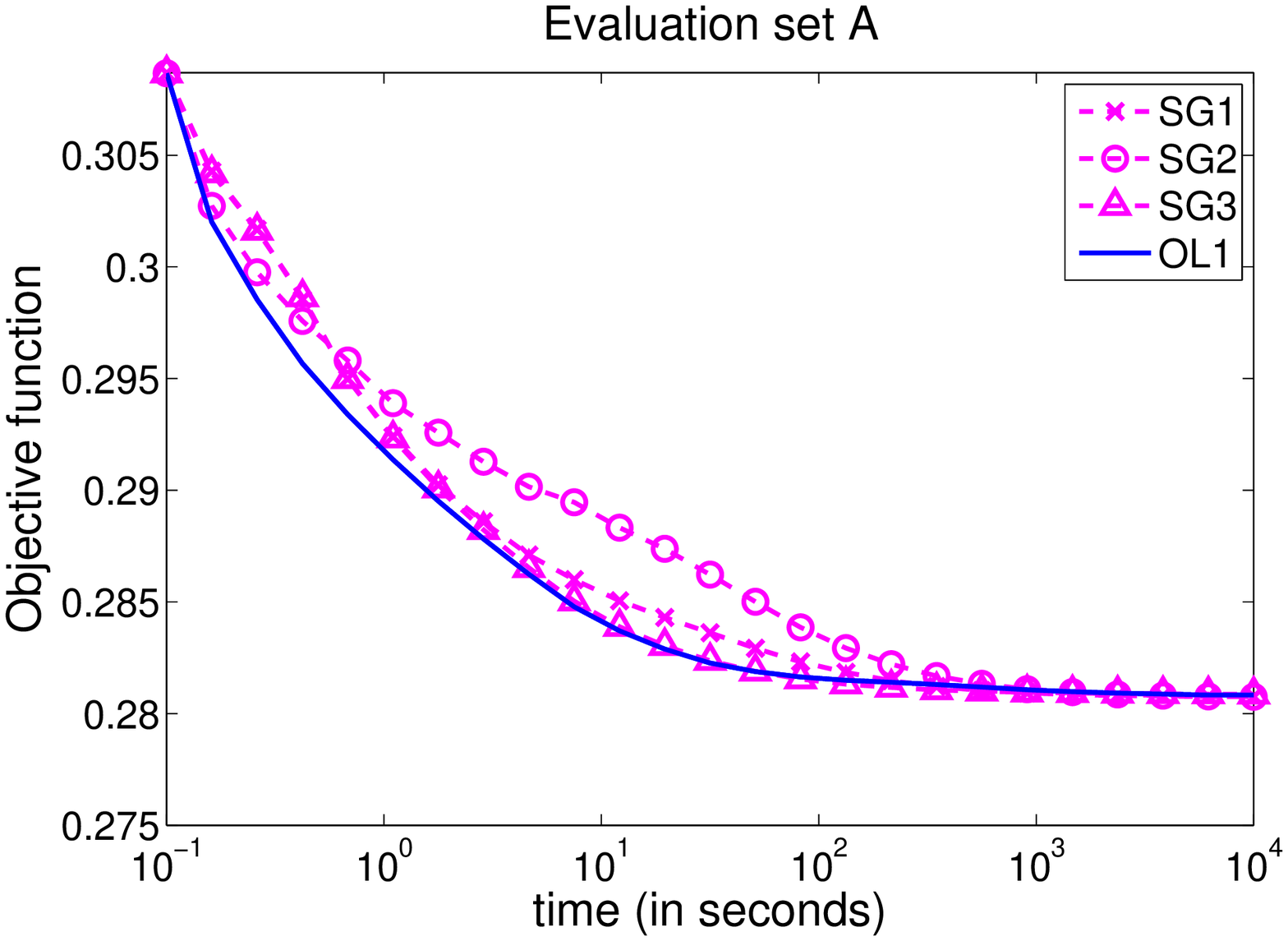}  \\
     \vspace*{0.3cm}
     \includegraphics[width=0.49\linewidth]{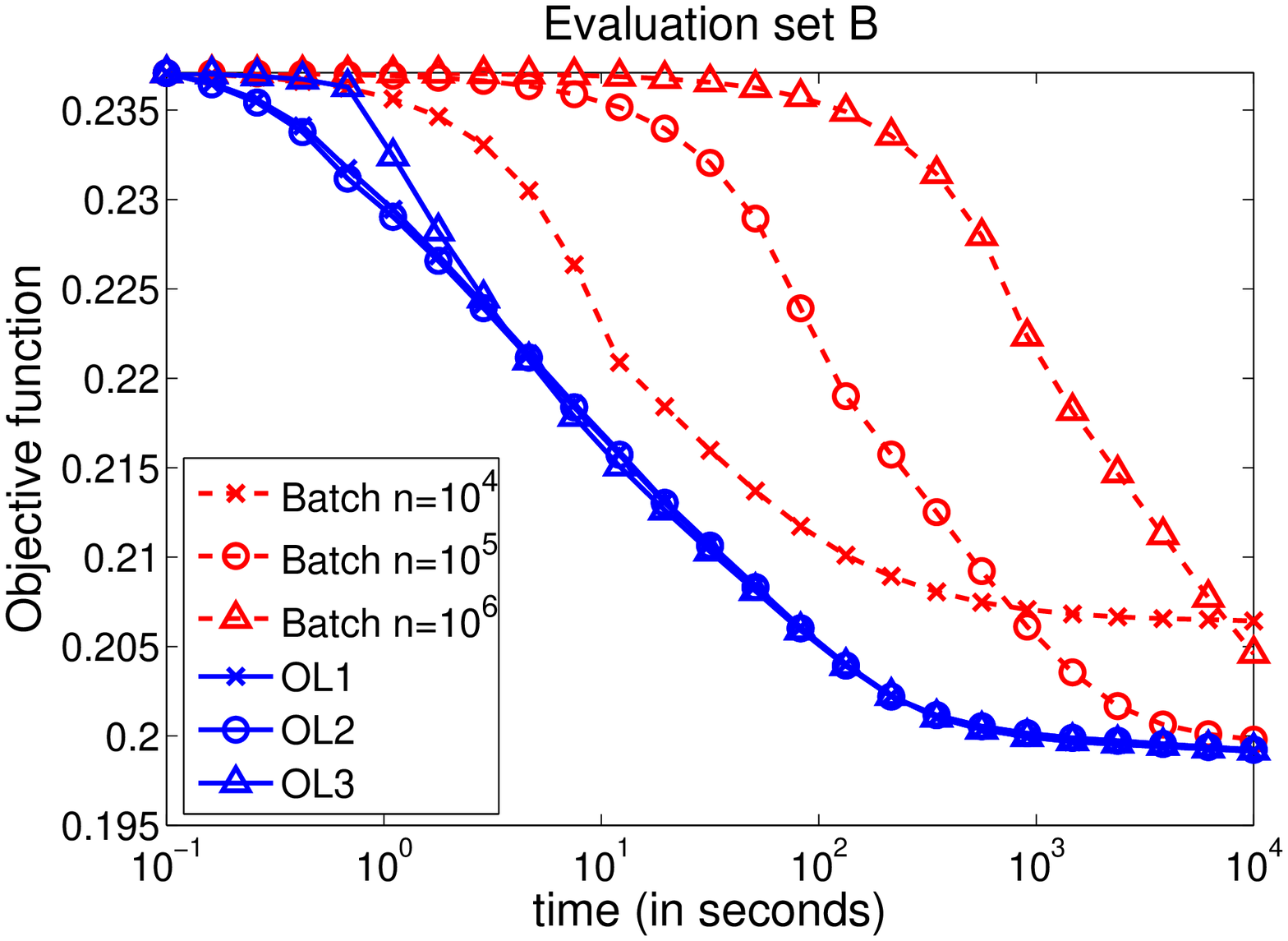} \hfill
     \includegraphics[width=0.49\linewidth]{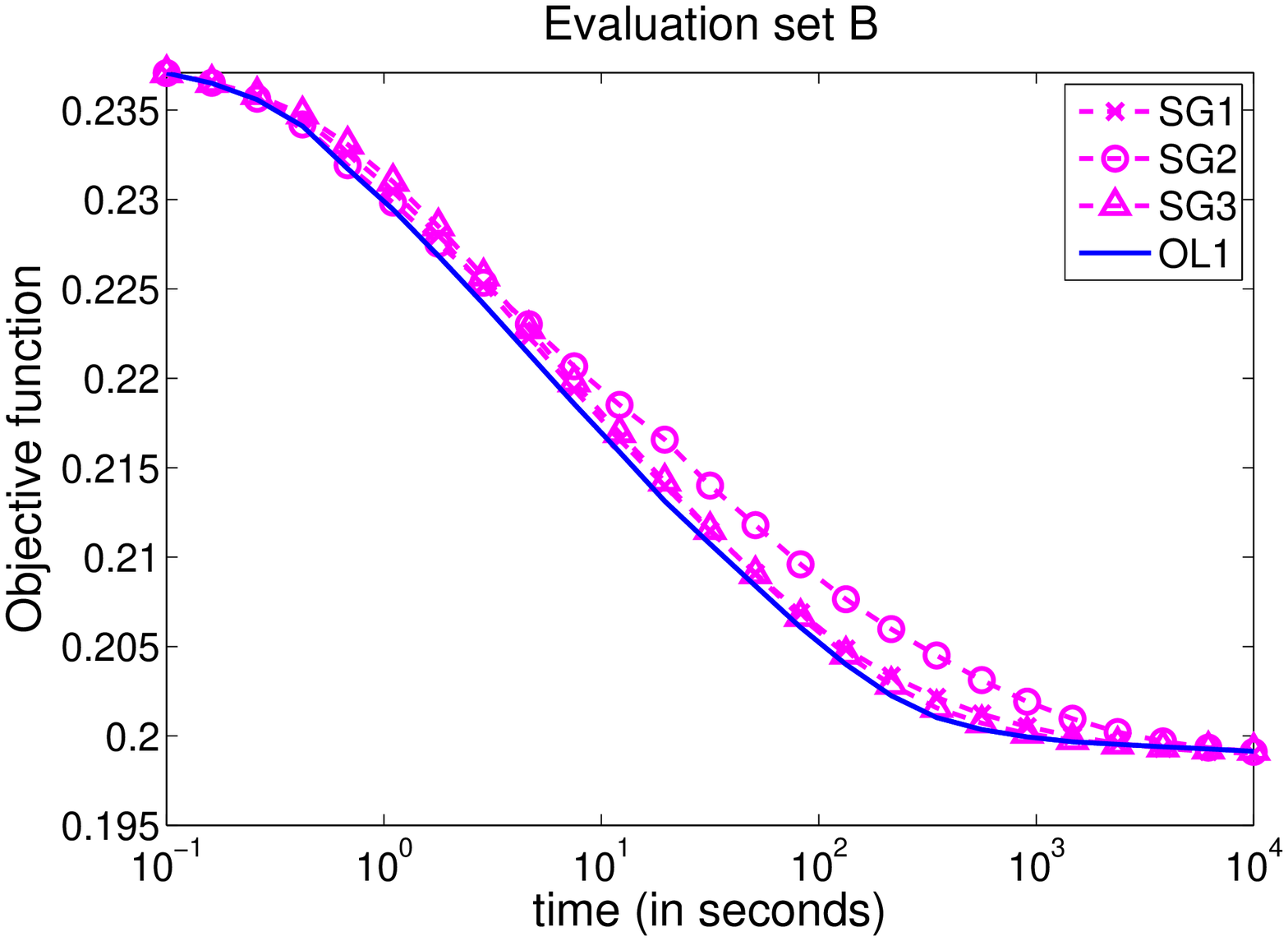} \\
     \vspace*{0.3cm}
     \includegraphics[width=0.49\linewidth]{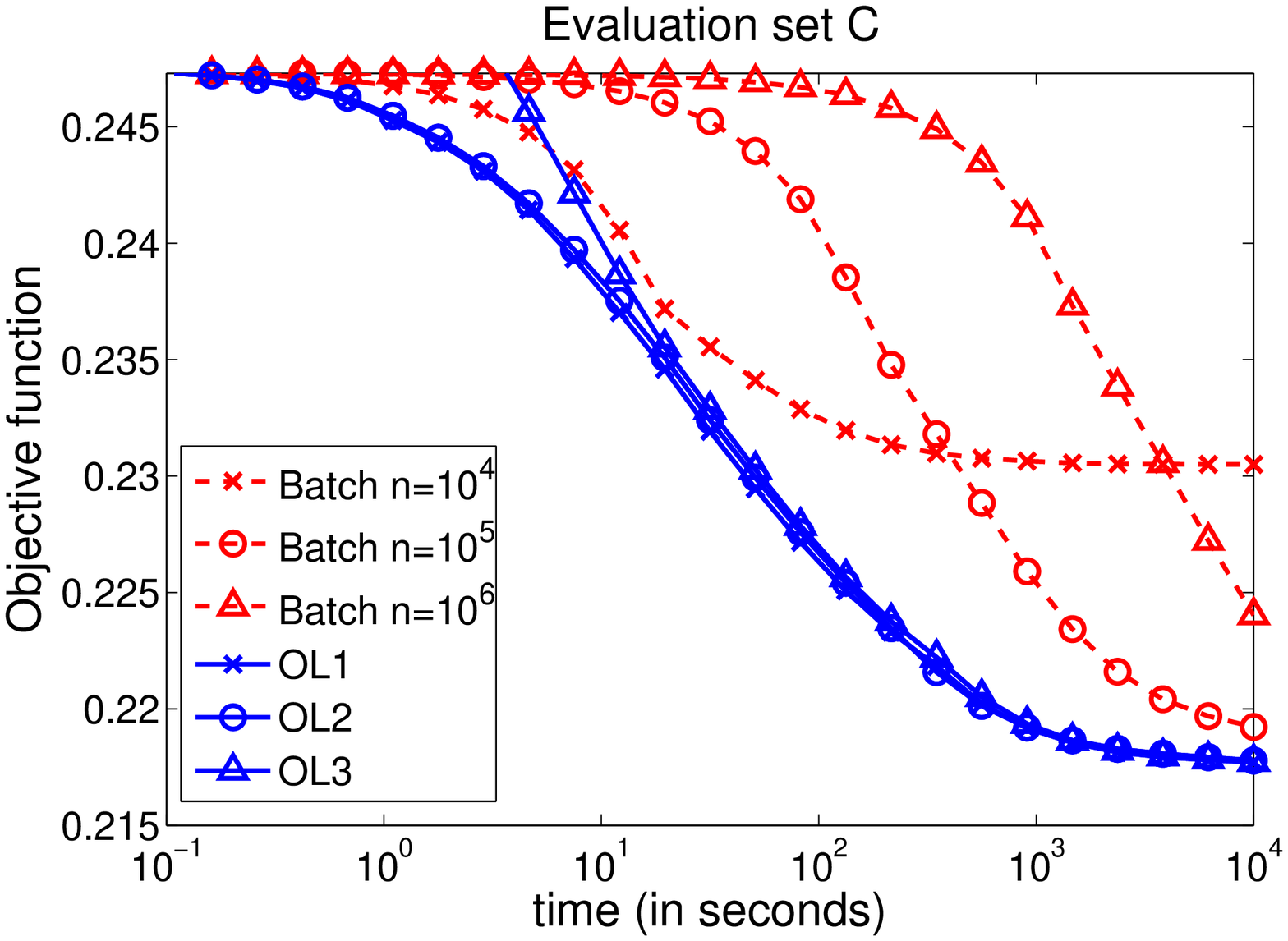} \hfill
     \includegraphics[width=0.49\linewidth]{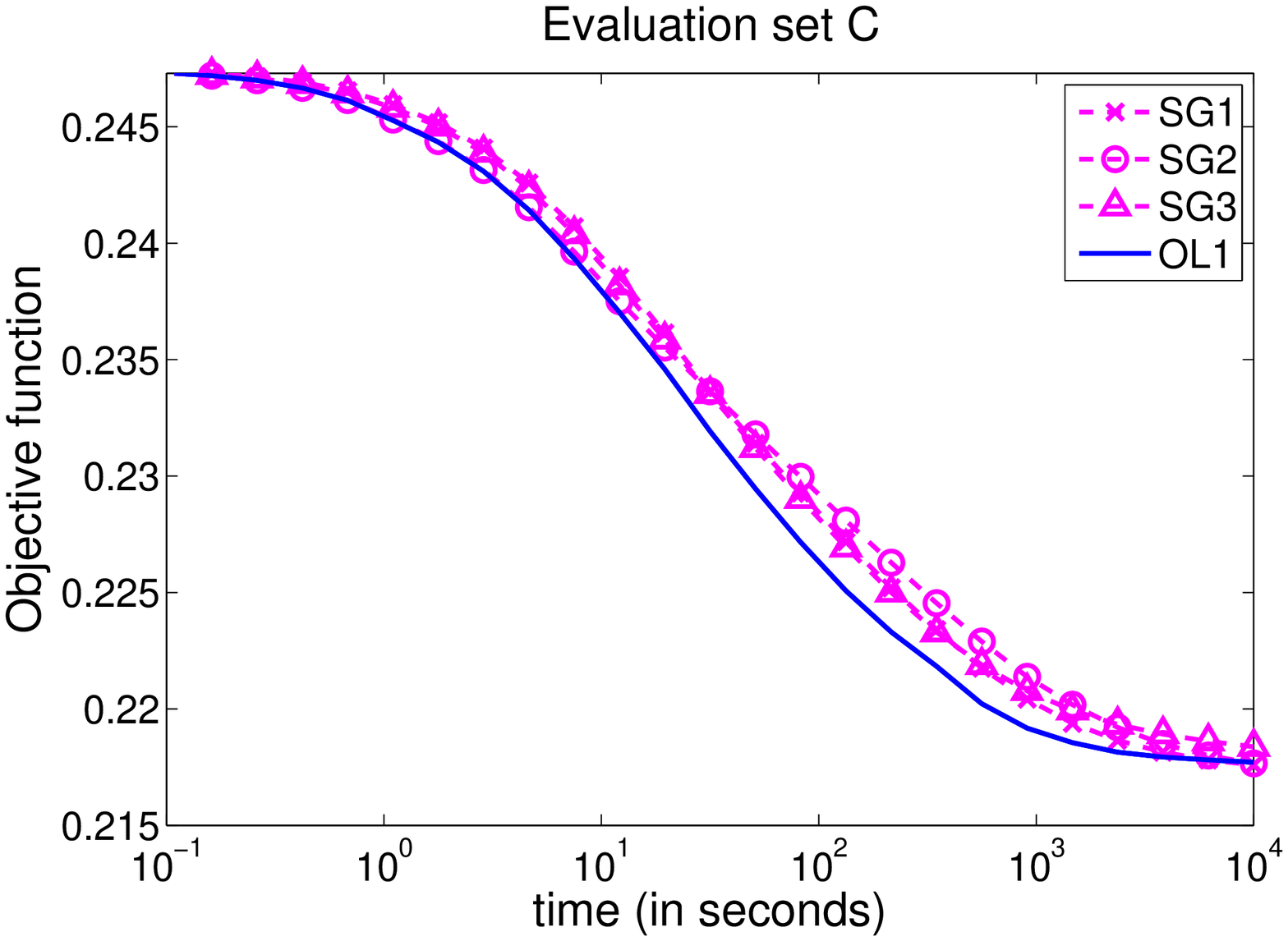}
     \caption{Left: Comparison between our method and the batch approach for dictionary learning. Right: Comparison between our method and
     stochastic gradient descent. The results are reported for three data sets as a function of computation time on a logarithmic
     scale. Note that the times of computation that are less than $0.1s$ are not reported. See text for details.
     }  \label{fig:comparedict}
 \end{figure}
\subsubsection{Comparison with Stochastic Gradient Descent} 
Our experiments have shown that obtaining good performance with stochastic
gradient descent requires using both the mini-batch heuristic {\em and}
carefully choosing a learning rate of the form $a/(\eta t+b)$.  To give the fairest comparison possible,
we have thus optimized these parameters. As for our algorithm, sampling $\eta$
values among powers of 2 (as before) has shown that $\eta=512$ was a good value
and gives a significant better performance than $\eta=1$.

In an earlier version of this work \citep{mairal7}, we have proposed
a strategy for our method which does not require any parameter tuning except
the mini-batch $\eta$ and compared it with the stochastic gradient descent
algorithm (SGD) with a learning rate of the form $a/(\eta t)$. 
While our method has improved in performance using the new parameter~$\rho$,
SGD has also proven to provide much better results when using
a learning rate of the form $a/(\eta t+b)$ instead of $a/(\eta t)$, at the cost
of an extra parameter $b$ to tune. Using the learning rate $a/(\eta t)$ with
a high value for $a$ results indeed in too large initial steps of the
algorithm increasing dramatically the value of the objective function, and a small value
of $a$ leads to bad asymptotic results, while a learning rate of the form
$a/(\eta t+b)$ is a good compromise.

We have tried different powers of $10$ for $a$ and $b$. First selected the
couple ($a=100, 000, b=100, 000$) and then refined it, trying the values $100,
000\times2^i$ for $i=-3,\ldots,3$.  Finally, we have selected $(a=200, 000, b=400,
000)$.  As shown on the right column of Figure \ref{fig:comparedict}, this
setting represented by the curve \textsf{SG1} leads to similar results as our
method.  The curve \textsf{SG2} corresponds to the parameters $(a=400, 000,
b=400, 000)$ and shows that increasing slightly the parameter $a$ makes the
curves worse than the others during the first iterations (see for instance the
curve between $1$ and $10^2$ seconds for data set A), but still lead to good
asymptotic results.  The curve \textsf{SG3} corresponds to a situation where
$a$ and $b$ are slightly too small $(a=50, 000, b=100, 000)$. It is as good as
\textsf{SG1} for data sets A and~B, but asymptotically slightly below the others
for data set C.  All the curves are obtained as the average of three experiments
with different initializations. Interestingly, even though the problem is not
convex, the different initializations have led to very similar values of the
objective function and the variance of the experiments was always insignificant
after $10$ seconds of computations.
\subsection{Non Negative Matrix Factorization and Non Negative Sparse Coding} \label{subsec:exp:nmf}
In this section, we compare our method with the classical algorithm of
\citet{lee2} for NMF and the non-negative sparse coding algorithm of
\citet{hoyer} for NNSC. The experiments have been 
carried out on three data sets with different sizes:
\begin{itemize}
\item Data set D is composed of $n=2, 429$ face images of size $m = 19 \times 19$
   pixels from the the MIT-CBCL Face Database $\# 1$ \citep{sung}.
\item Data set E is composed of $n=2, 414$ face images of size $m= 192 \times 168$
   pixels from the Extended Yale B Database \citep{georghiades,lee3}.
\item Data set F is composed of $n=100, 000$ natural image patches of size $m=16
   \times 16$ pixels from the Pascal VOC'06 image database \citep{pascal2}.
\end{itemize}
We have used the Matlab implementations of NMF and NNSC of P. Hoyer, which are
freely available at \url{http://www.cs.helsinki.fi/u/phoyer/software.html}.
Even though our C++ implementation has a built-in advantage in terms of speed
over these Matlab implementations, most of the computational time of NMF and
NNSC is spent on large matrix multiplications, which are typically well optimized in
Matlab.  All the experiments have been run for simplicity on a
single-CPU, single-core
2.4GHz machine, without using the parameters $\rho$ and $t_0$ presented in
Section \ref{sec:variants}---that is, $\rho=0$ and $t_0=0$.  As in
Section~\ref{subsec:exp:perf}, a minibatch of size $\eta=512$ is chosen.
Following the original experiment of \citet{lee2} on data set D, we have chosen to learn $k=49$ basis vectors for the face images data sets D and E, and we have chosen $k=64$ for data set~F. Each input vector is normalized to
have unit $\ell_2$-norm.

The experiments we present in this section compare the value of the objective
function on the data sets obtained with the different algorithms as a
function of the computation time. Since our algorithm learns the matrix
$\D$ but does not provide the matrix $\alphab$, the computation times reported for our approach
include two steps: First, we run our algorithm to obtain $\D$. Second,
we run one sparse coding step over all the input vectors to obtain~$\alphab$.
Figure \ref{fig:expnnsc} presents the results
for NMF and NNSC. The gradient step for the algorithm of \citet{hoyer} was
optimized for the best performance and $\lambda$ was set to $\frac{1}{\sqrt{m}}$.
Both $\D$ and~$\alphab$ were initialized
randomly. The values reported are those obtained for more than
$0.1$s of computation. Since the random initialization provides
an objective value which is by far greater than the value obtained at
convergence, the curves are all truncated to present significant objective
values.  All the results are obtained using the average of 3 experiments with
different initializations.  As shown on Figure \ref{fig:expnnsc}, our algorithm
provides a significant improvement in terms of speed compared to the other
tested methods, even though the results for NMF and NNSC could be
improved a bit using a C++ implementation.
\begin{figure}[hbtp]
 \centering
     \includegraphics[width=0.48\linewidth]{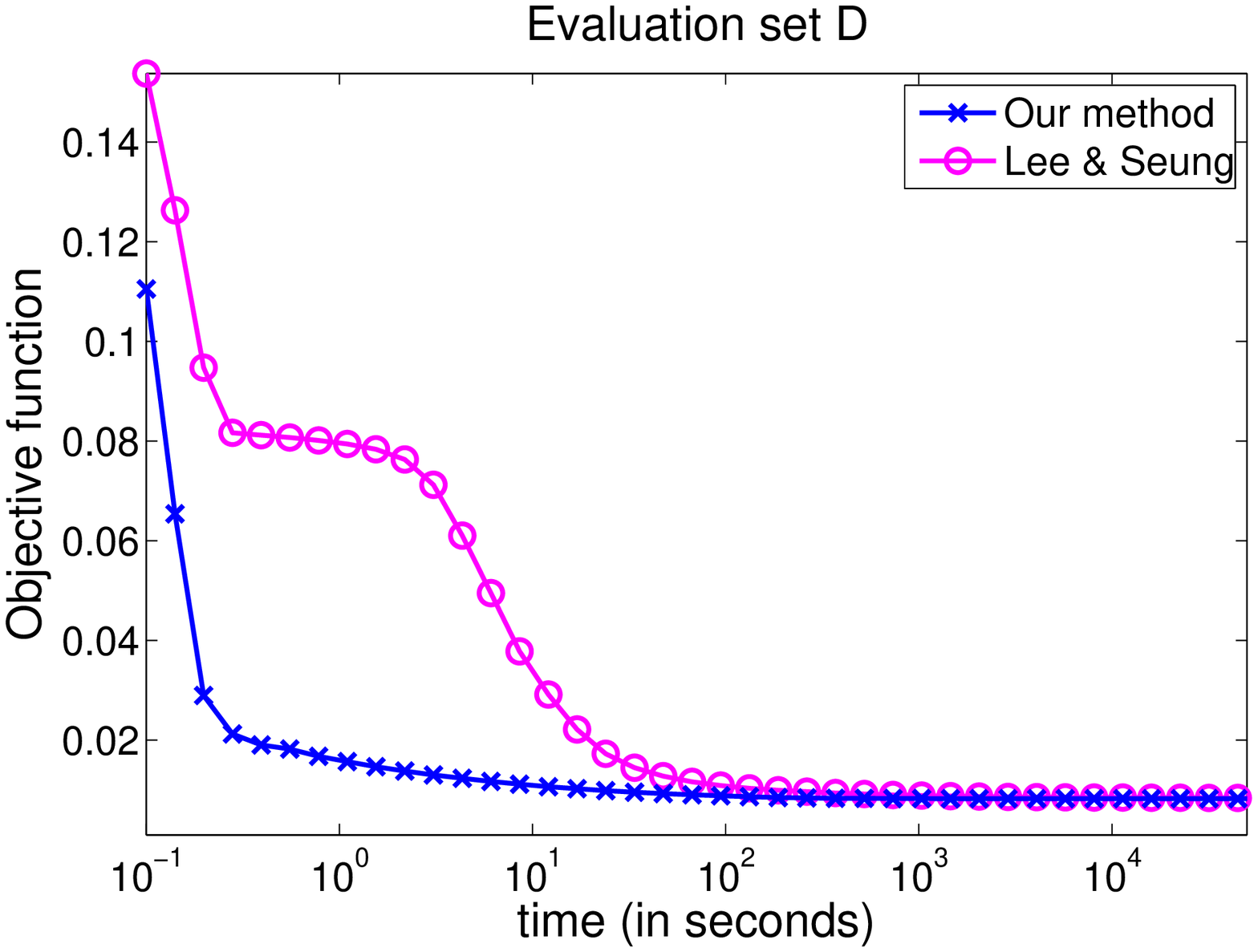} \hfill
     \includegraphics[width=0.49\linewidth]{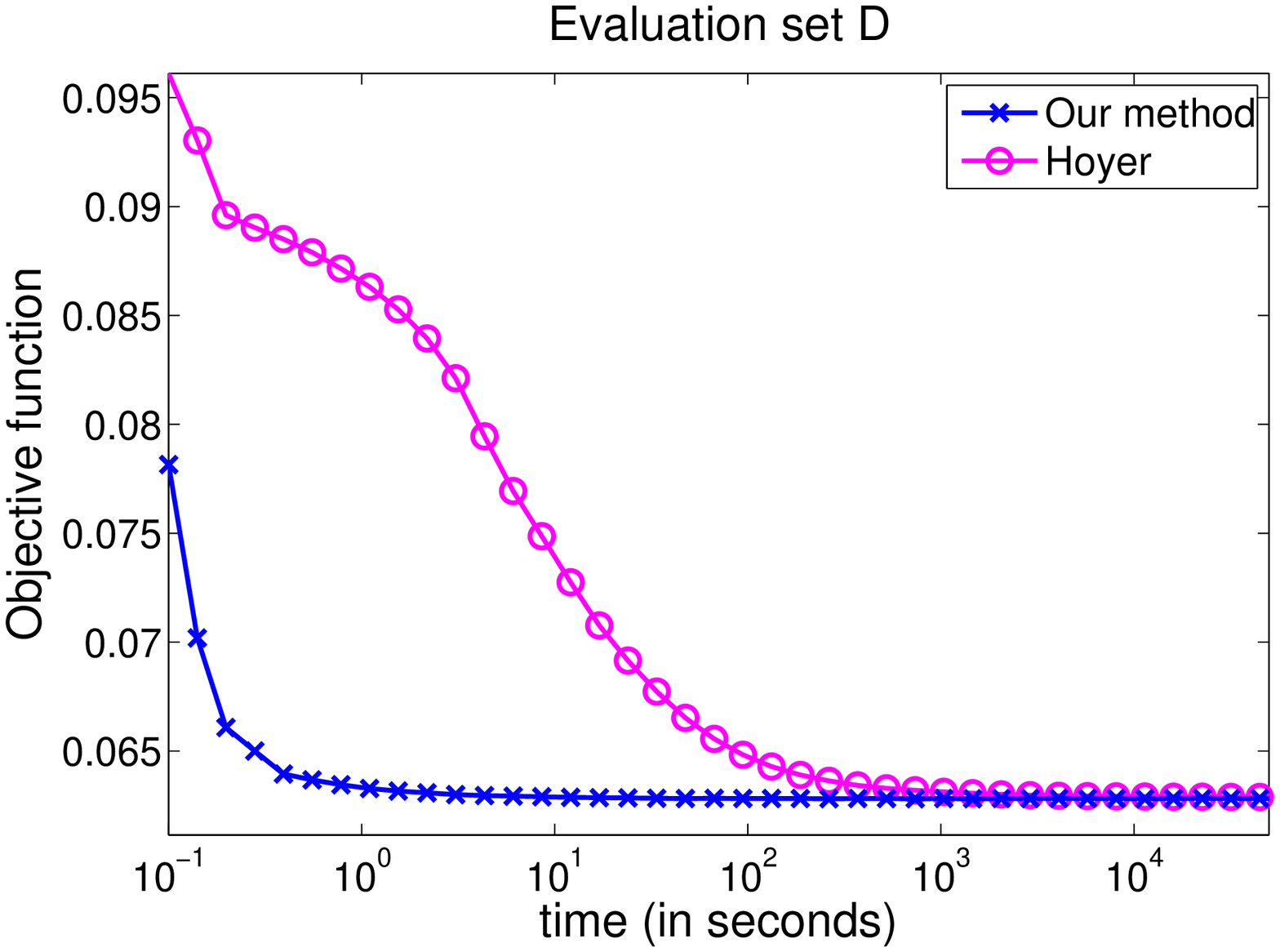}  \\
     \vspace*{0.3cm}
     \includegraphics[width=0.48\linewidth]{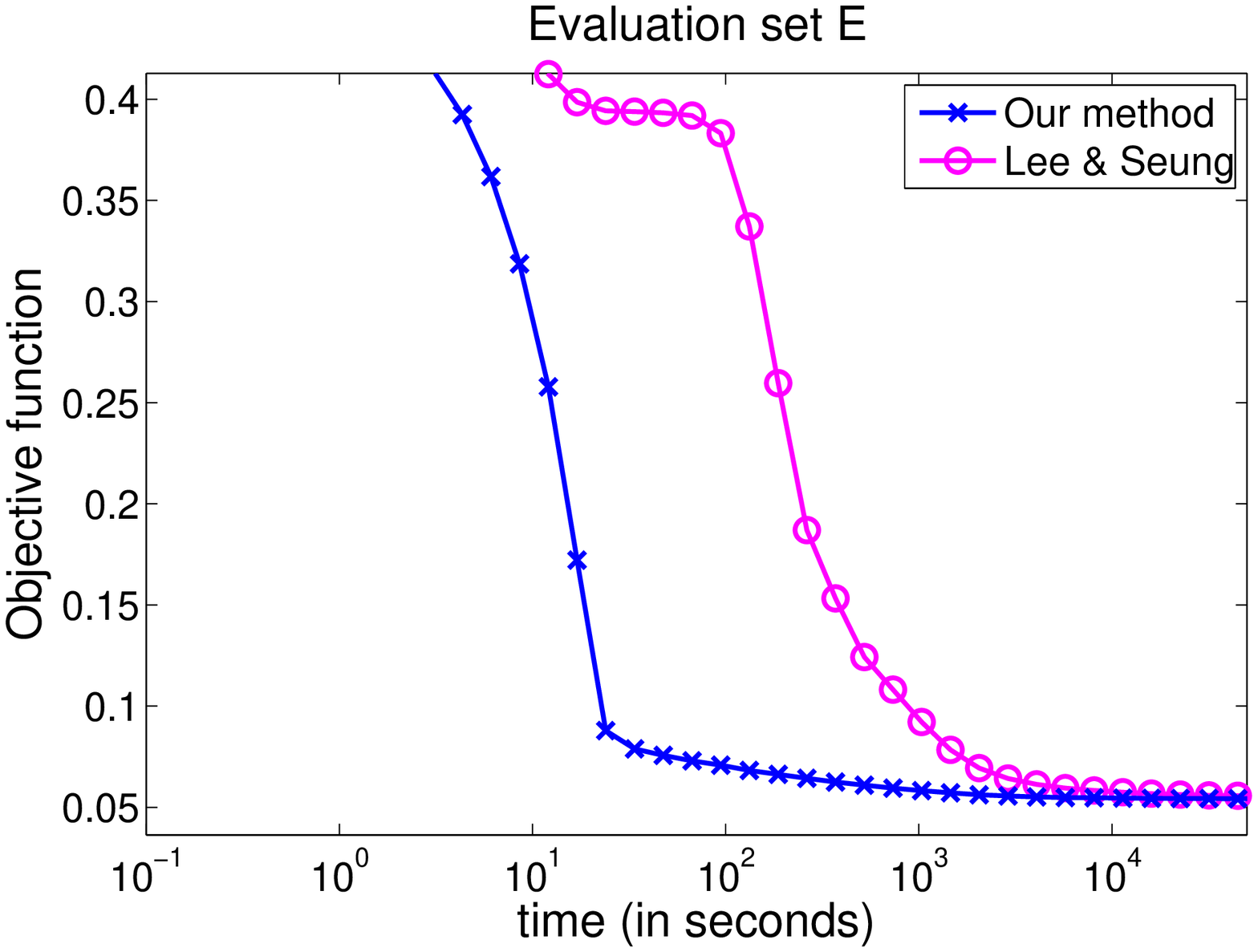} \hfill
     \includegraphics[width=0.49\linewidth]{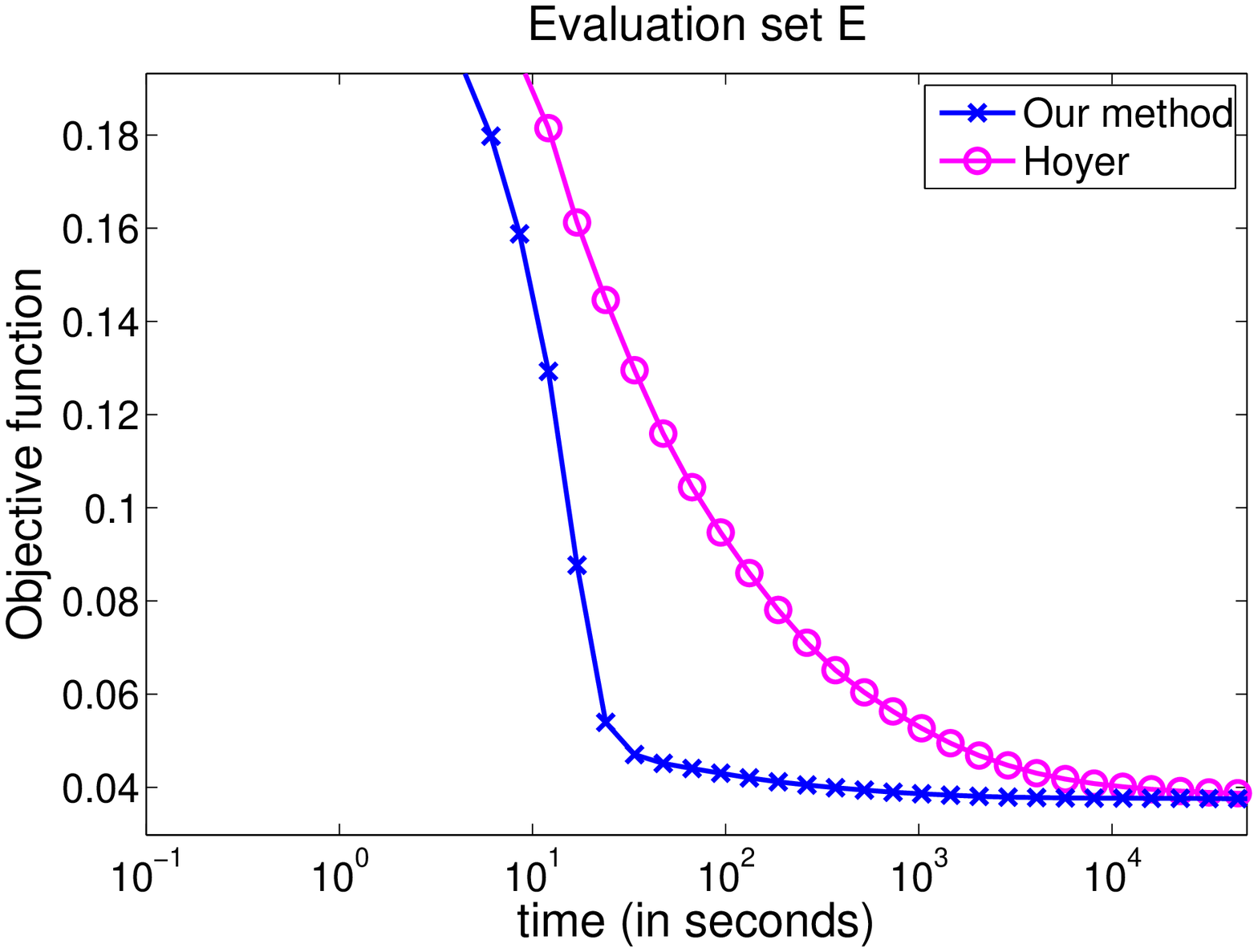} \\
     \vspace*{0.3cm}
     \includegraphics[width=0.48\linewidth]{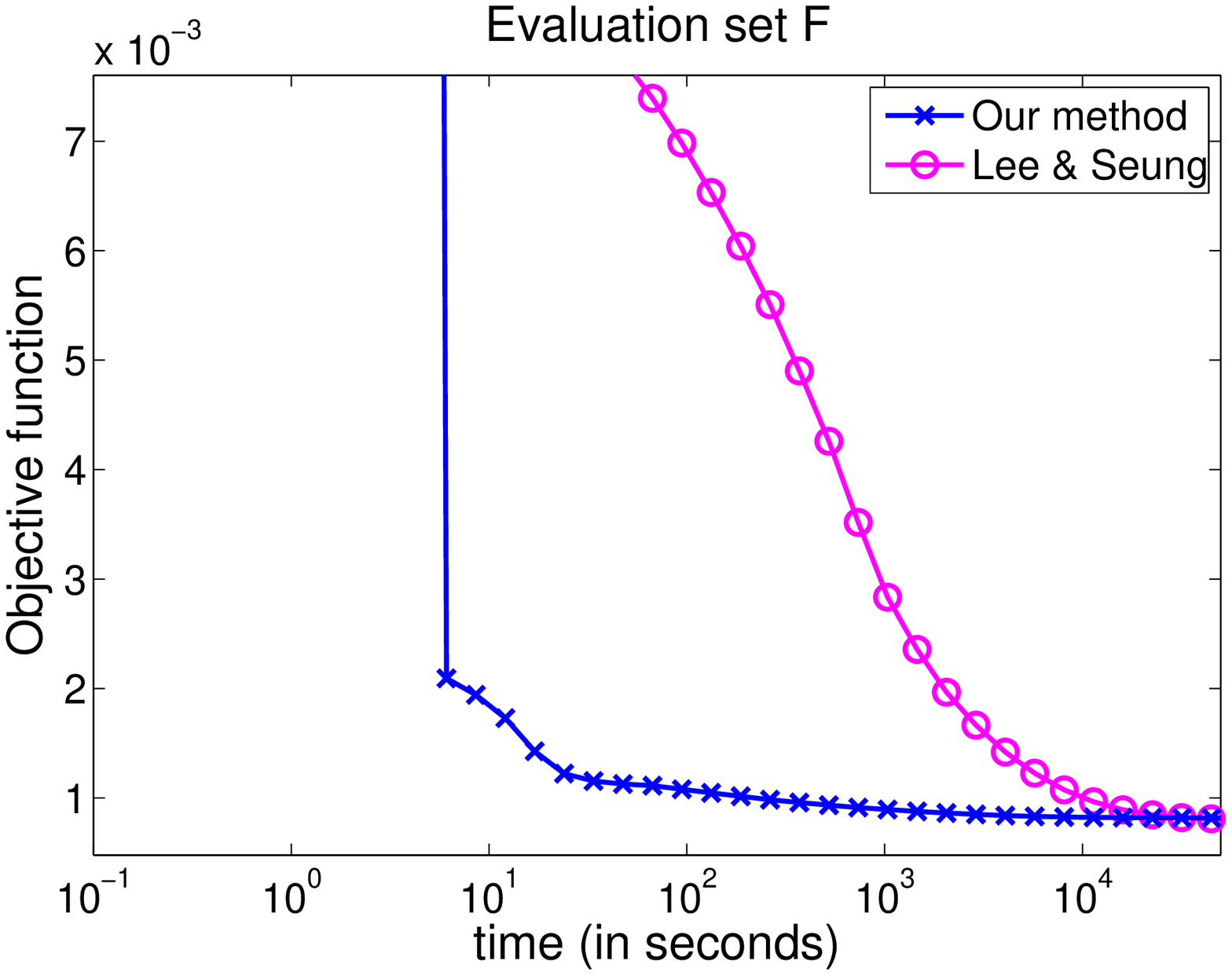} \hfill
     \includegraphics[width=0.51\linewidth]{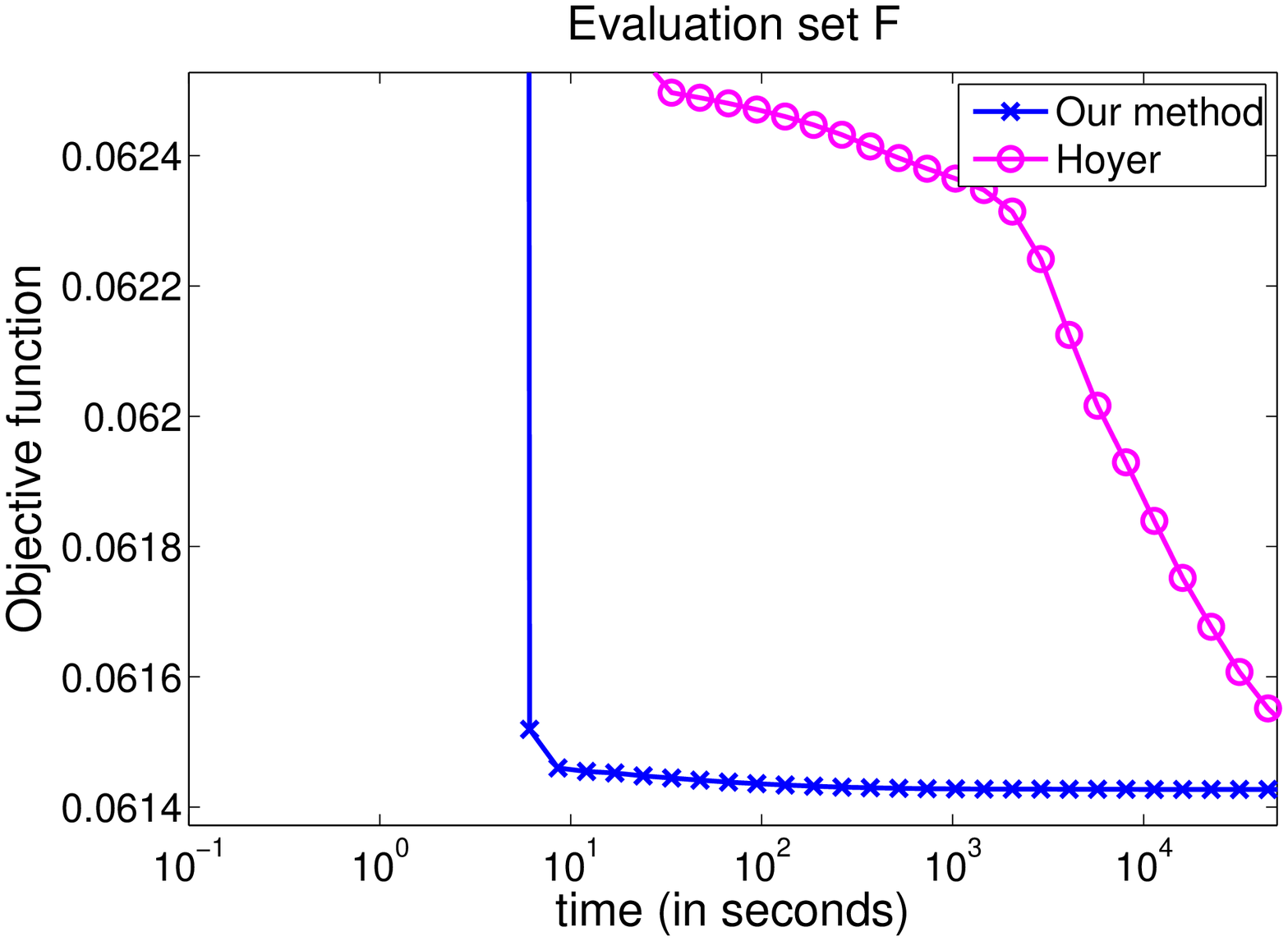}
     \caption{Left: Comparison between our method and the approach of \citet{lee2} for NMF. Right: Comparison between our method and the approach of \citet{hoyer} for NNSC. The value of the objective function is reported
     for three data sets as a function of computation time on a logarithmic
     scale.}  \label{fig:expnnsc}
 \end{figure}
\subsection{Sparse Principal Component Analysis}
We present here the application of our method addressing SPCA with various types of data: faces, natural image patches, and genomic data.
\subsubsection{Faces and Natural Patches} \label{subsubsec:patches}
In this section, we compare qualitatively the results obtained by PCA, NMF,
our dictionary learning and our sparse principal component analysis algorithm on the
data sets used in Section \ref{subsec:exp:nmf}. 
For dictionary learning, PCA and SPCA, the input vectors are first centered
and normalized to have a unit norm. Visual results are presented on figures
\ref{fig:spca:data1}, \ref{fig:spca:data2} and \ref{fig:spca:data3}, respectively
for the data sets D, E and F.
The parameter $\lambda$ for dictionary learning and SPCA was set so that the
decomposition of each input signal has approximately $10$ nonzero coefficients.
The results for SPCA are presented for various values of the parameter
$\gamma$, yielding different levels of sparsity. The scalar $\tau$ indicates
the percentage of nonzero values of the dictionary.

Each image is composed of $k$ small images each representing one learned
feature vector. Negative values are blue, positive values are red and the zero
values are represented in white.  Confirming earlier observations from
\citet{lee2}, PCA systematically produces features spread out over the
images, whereas NMF
produces more localized features on the face databases D and E. However,
neither PCA, nor NMF are able to learn localized features on the set of
natural patches F.  On the other hand, the dictionary learning technique is
able to learn localized features on data set F, and SPCA
is the only tested method that allows
controlling the level of sparsity among the learned matrices.
\begin{figure}
   \centering
   \subfigure[PCA]{\fbox{\includegraphics[width=0.39\linewidth]{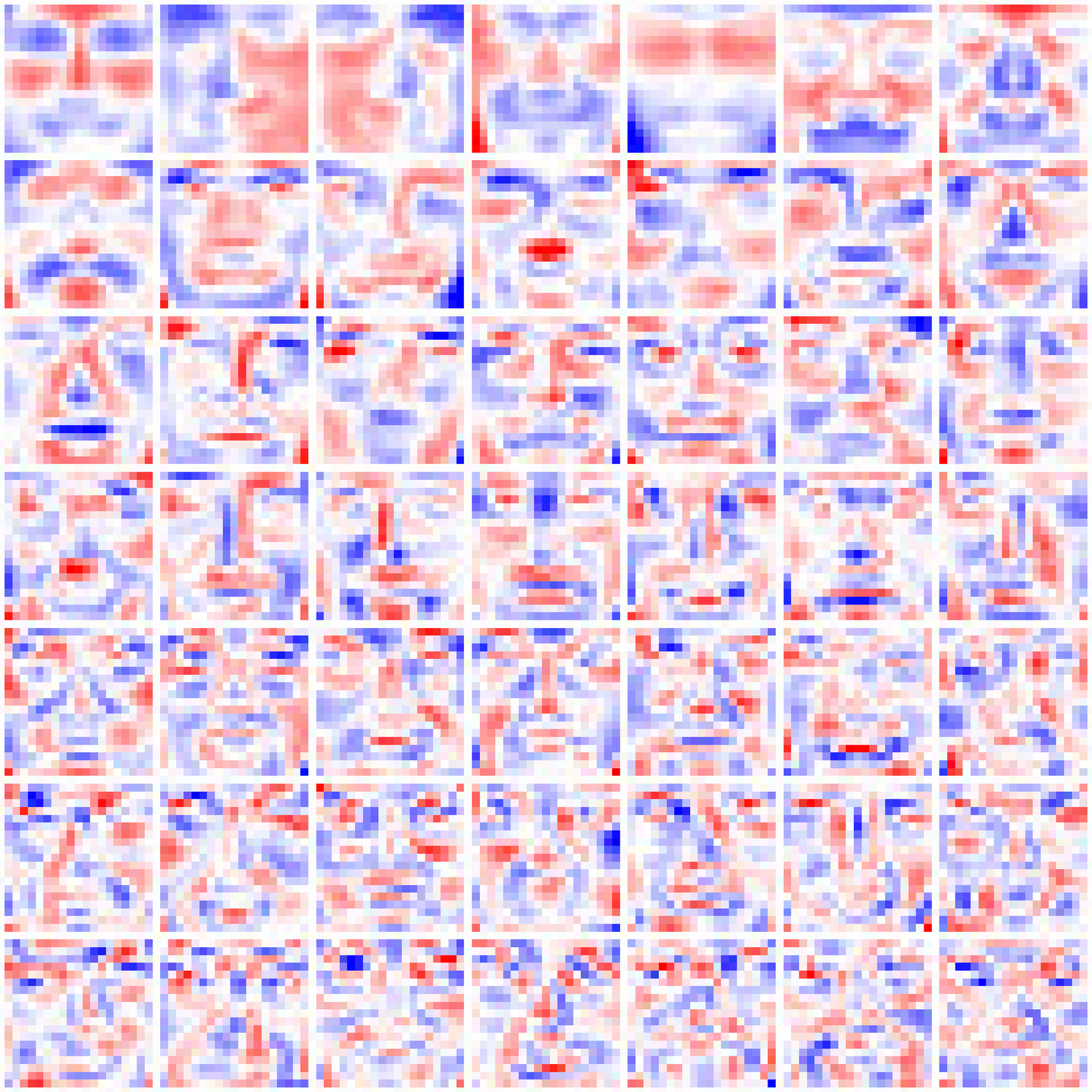}}}
   \subfigure[SPCA, $\tau=70\%$]{\fbox{\includegraphics[width=0.39\linewidth]{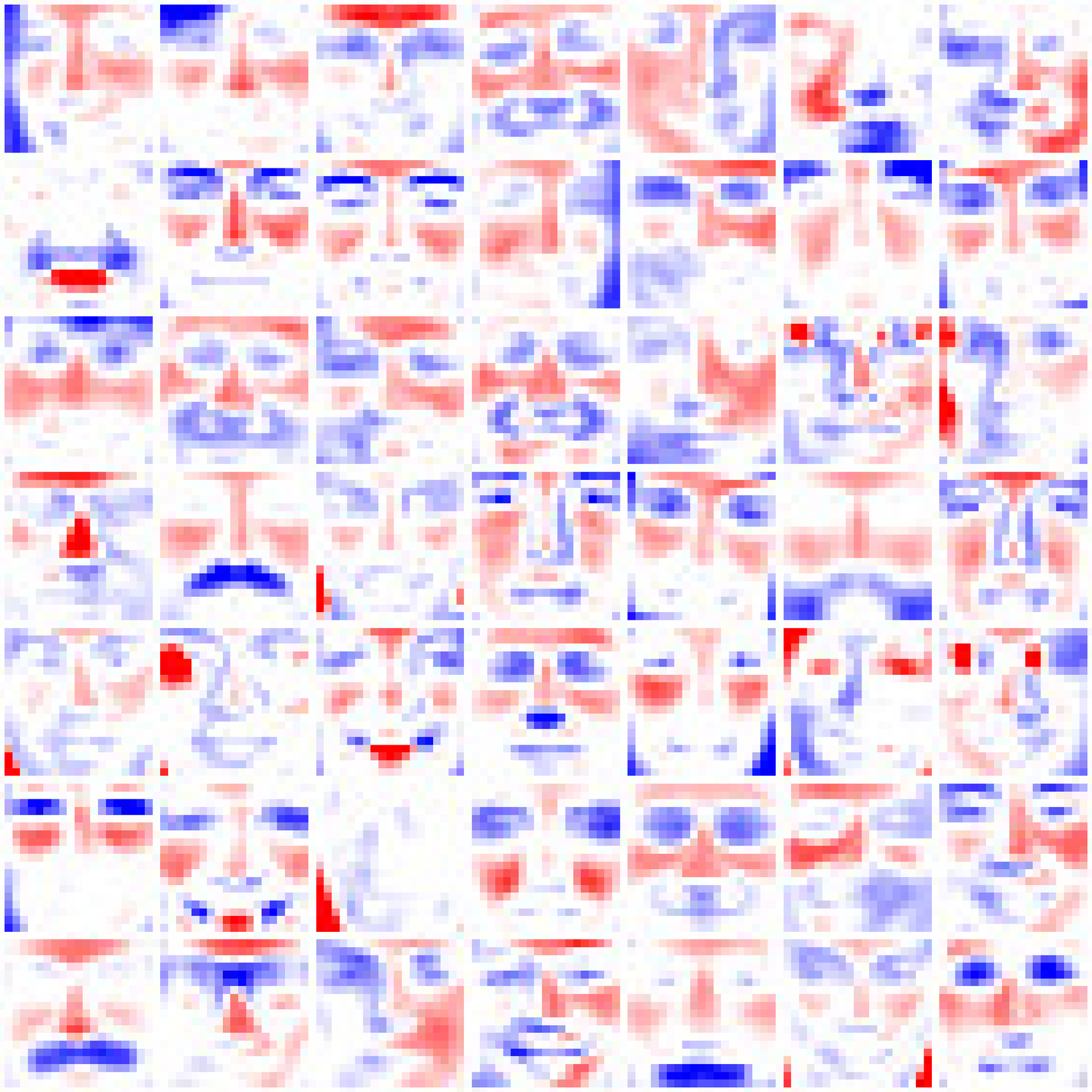}}} \\
   \subfigure[NMF]{\fbox{\includegraphics[width=0.39\linewidth]{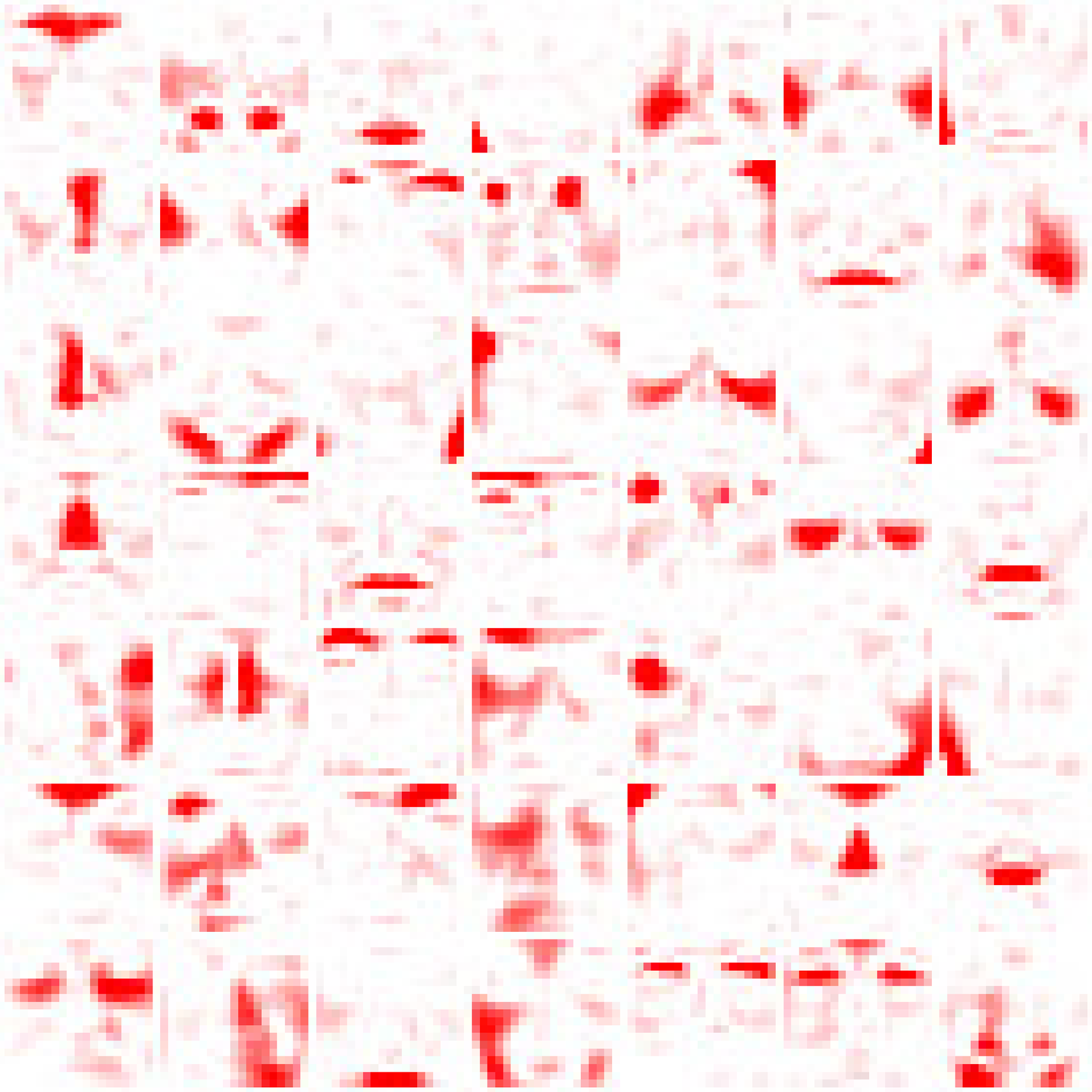}}}
   \subfigure[SPCA, $\tau=30\%$]{\fbox{\includegraphics[width=0.39\linewidth]{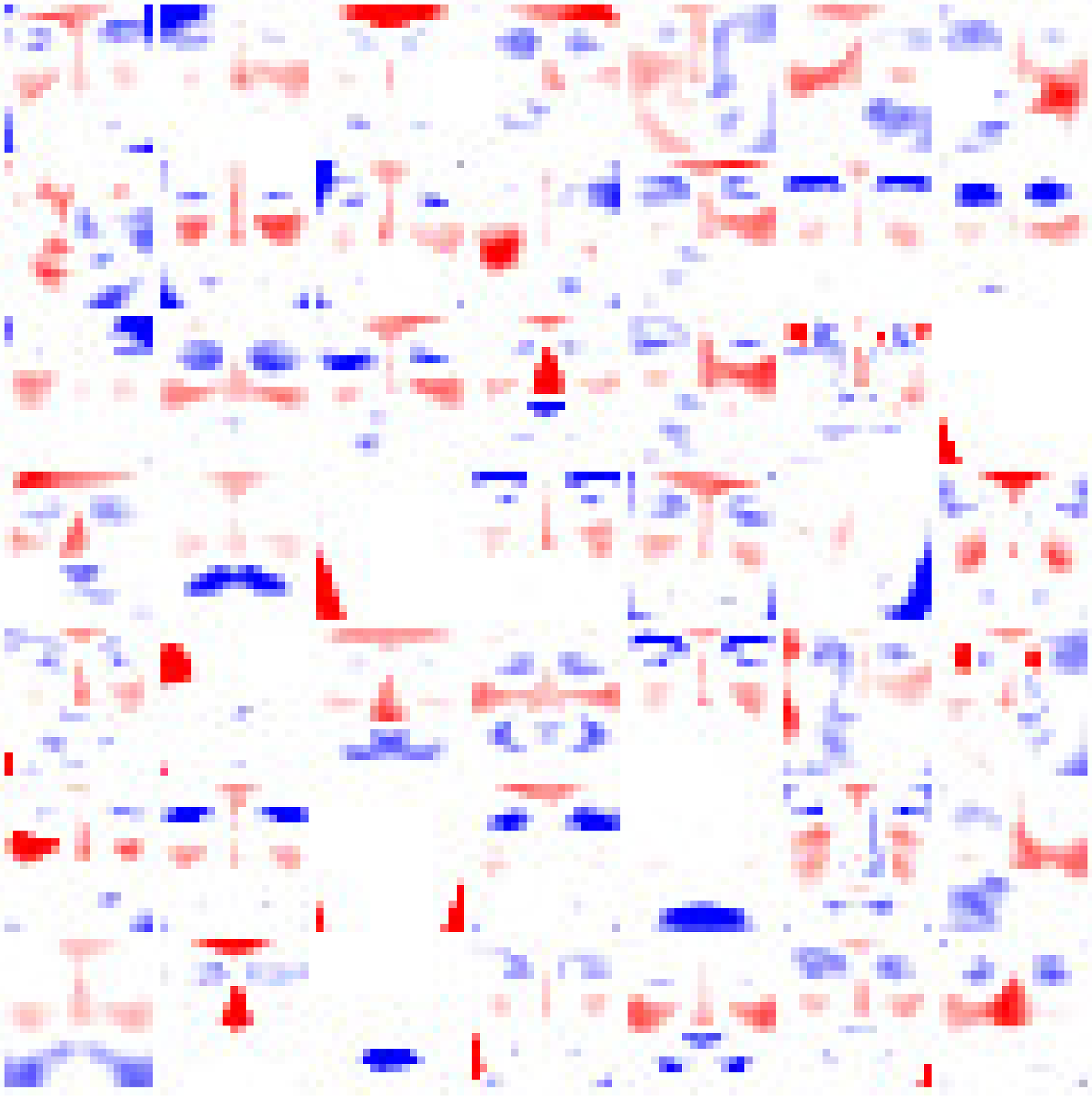}}} \\
   \subfigure[Dictionary Learning]{\fbox{\includegraphics[width=0.39\linewidth]{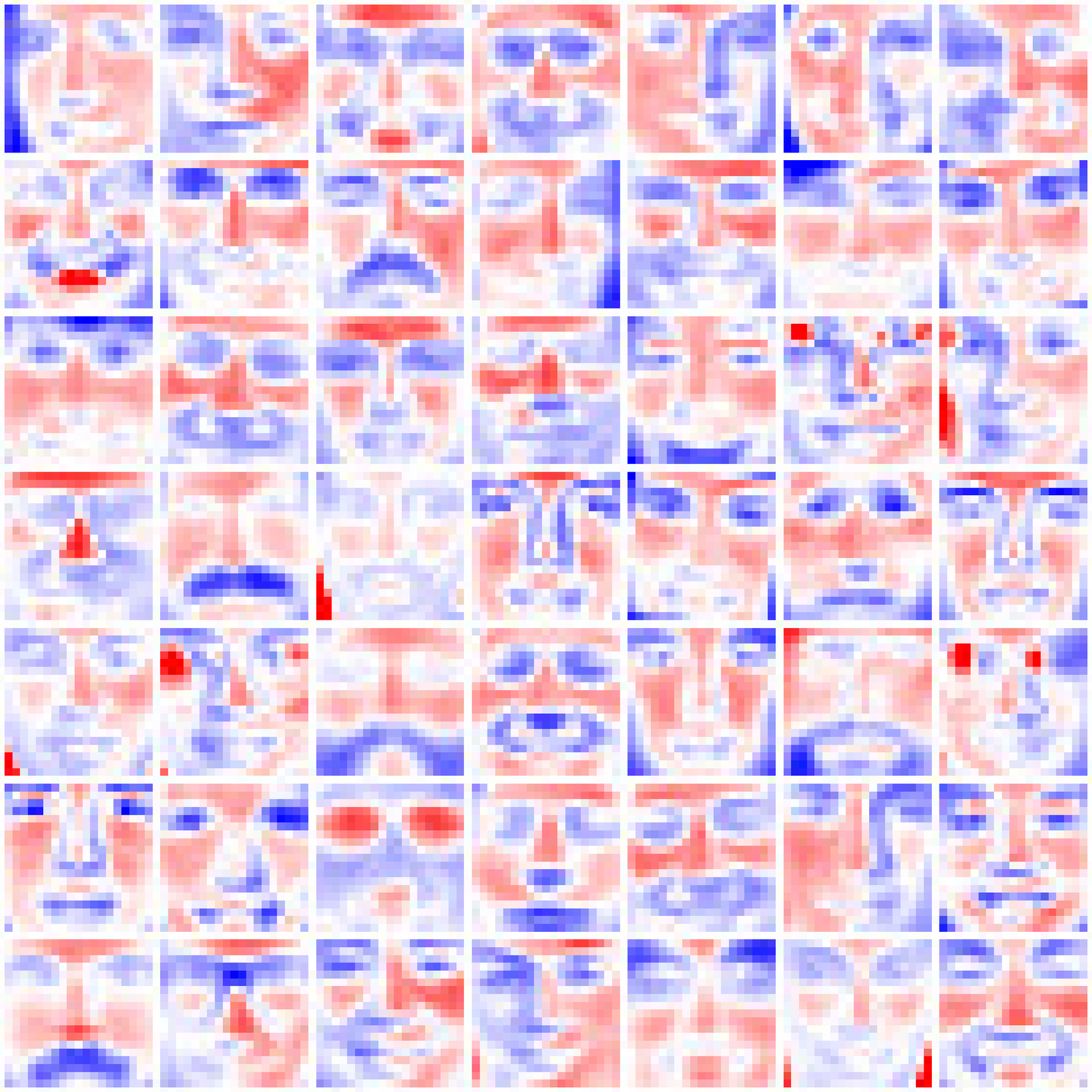}}} 
   \subfigure[SPCA, $\tau=10\%$]{\fbox{\includegraphics[width=0.39\linewidth]{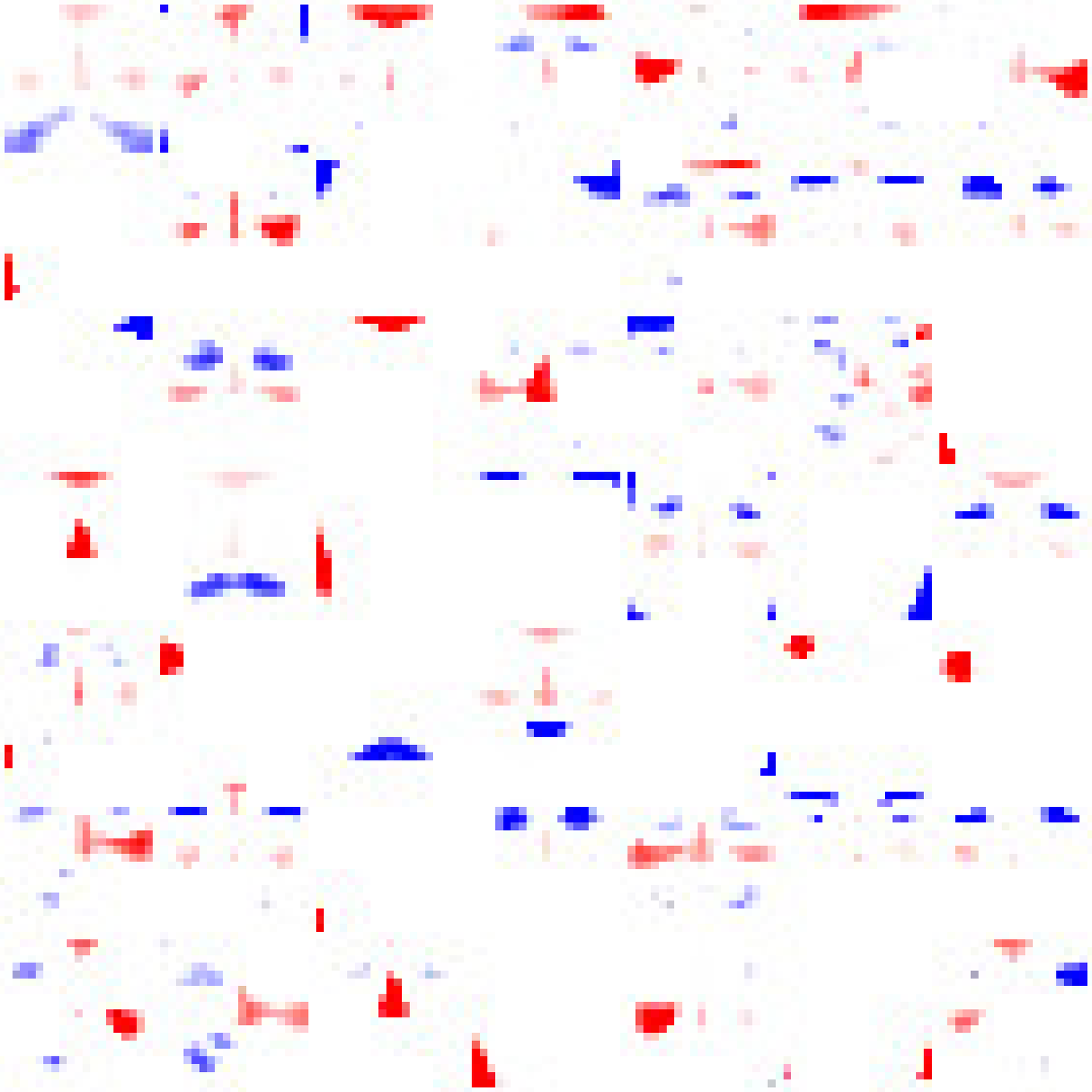}}}
   \caption{Results obtained by PCA, NMF, dictionary learning, SPCA for data set D.}
   \label{fig:spca:data1}
\end{figure}
\begin{figure}
   \centering
   \subfigure[PCA]{\fbox{\includegraphics[height=0.39\linewidth]{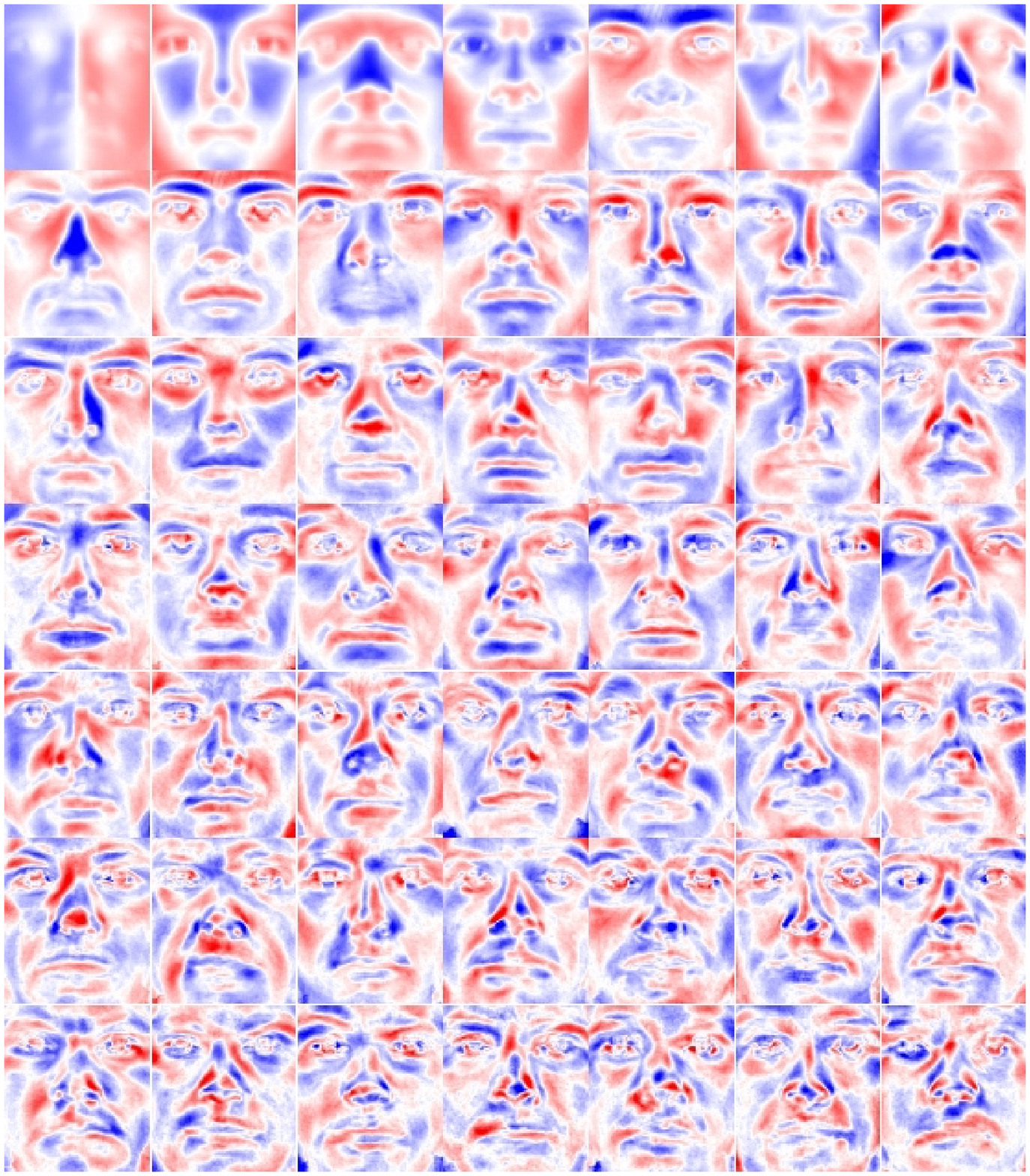}}}
   \subfigure[SPCA, $\tau=70\%$]{\fbox{\includegraphics[height=0.39\linewidth]{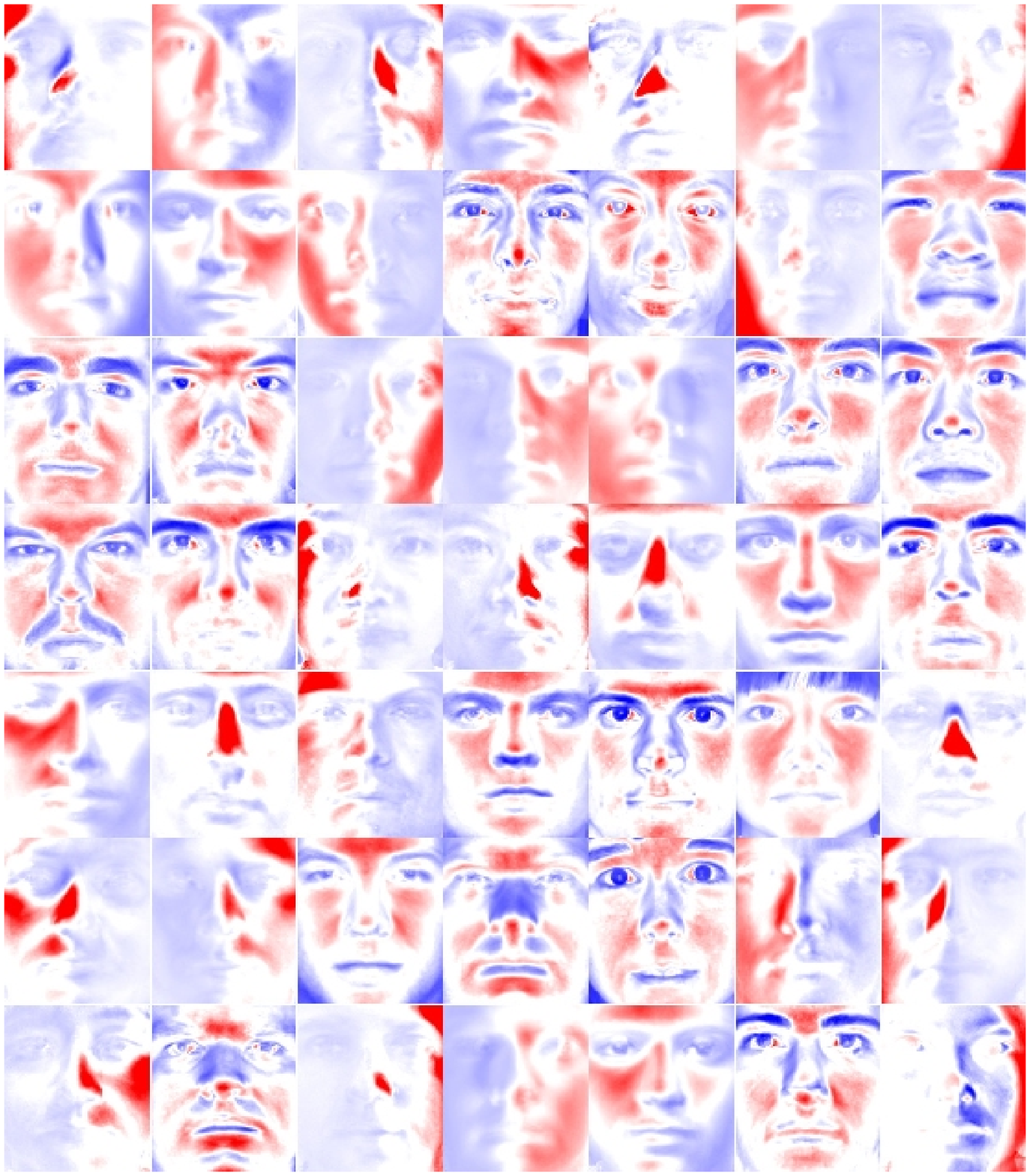}}} \\
   \subfigure[NMF]{\fbox{\includegraphics[height=0.39\linewidth]{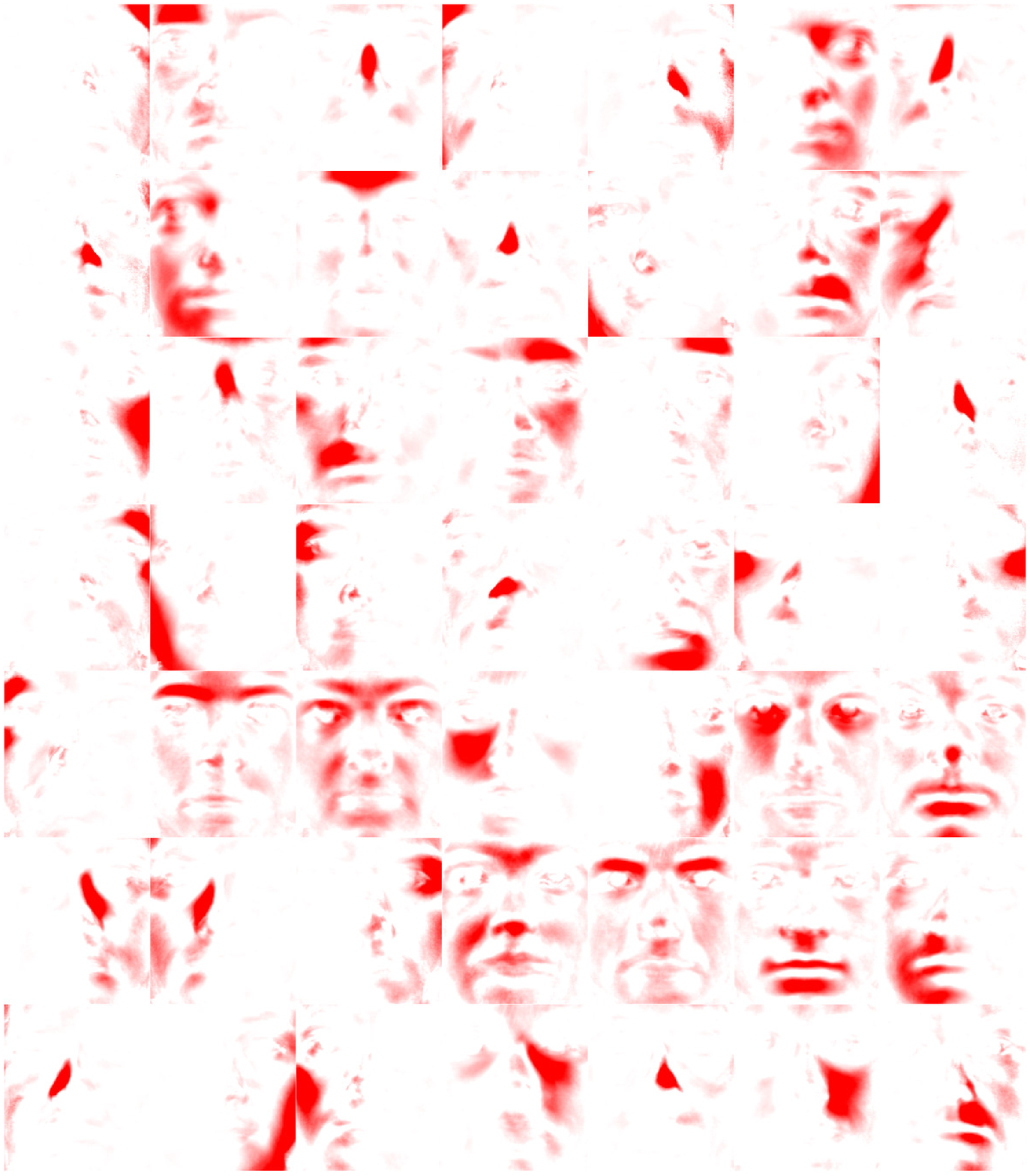}}}
   \subfigure[SPCA, $\tau=30\%$]{\fbox{\includegraphics[height=0.39\linewidth]{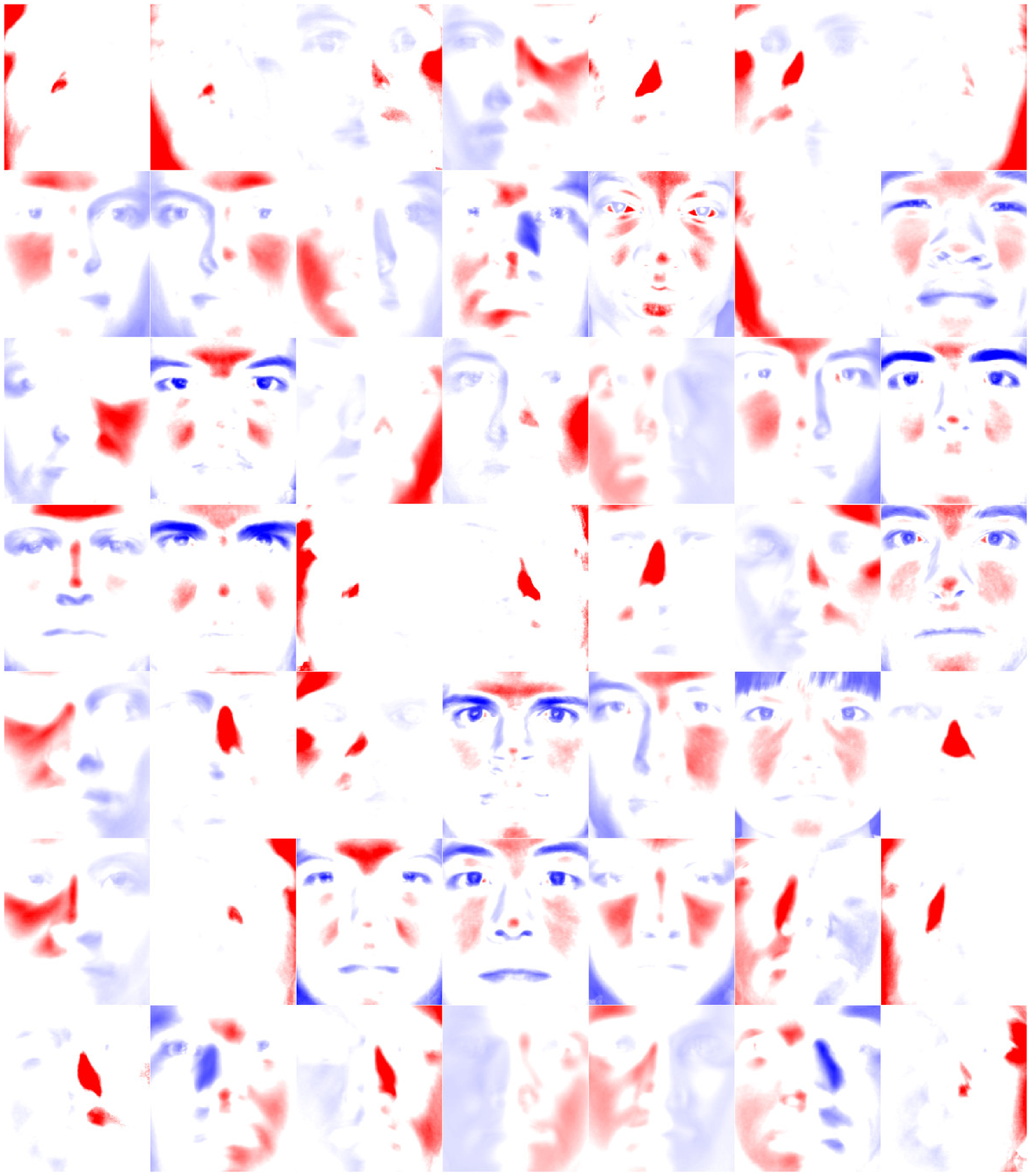}}} \\
   \subfigure[Dictionary Learning]{\fbox{\includegraphics[height=0.39\linewidth]{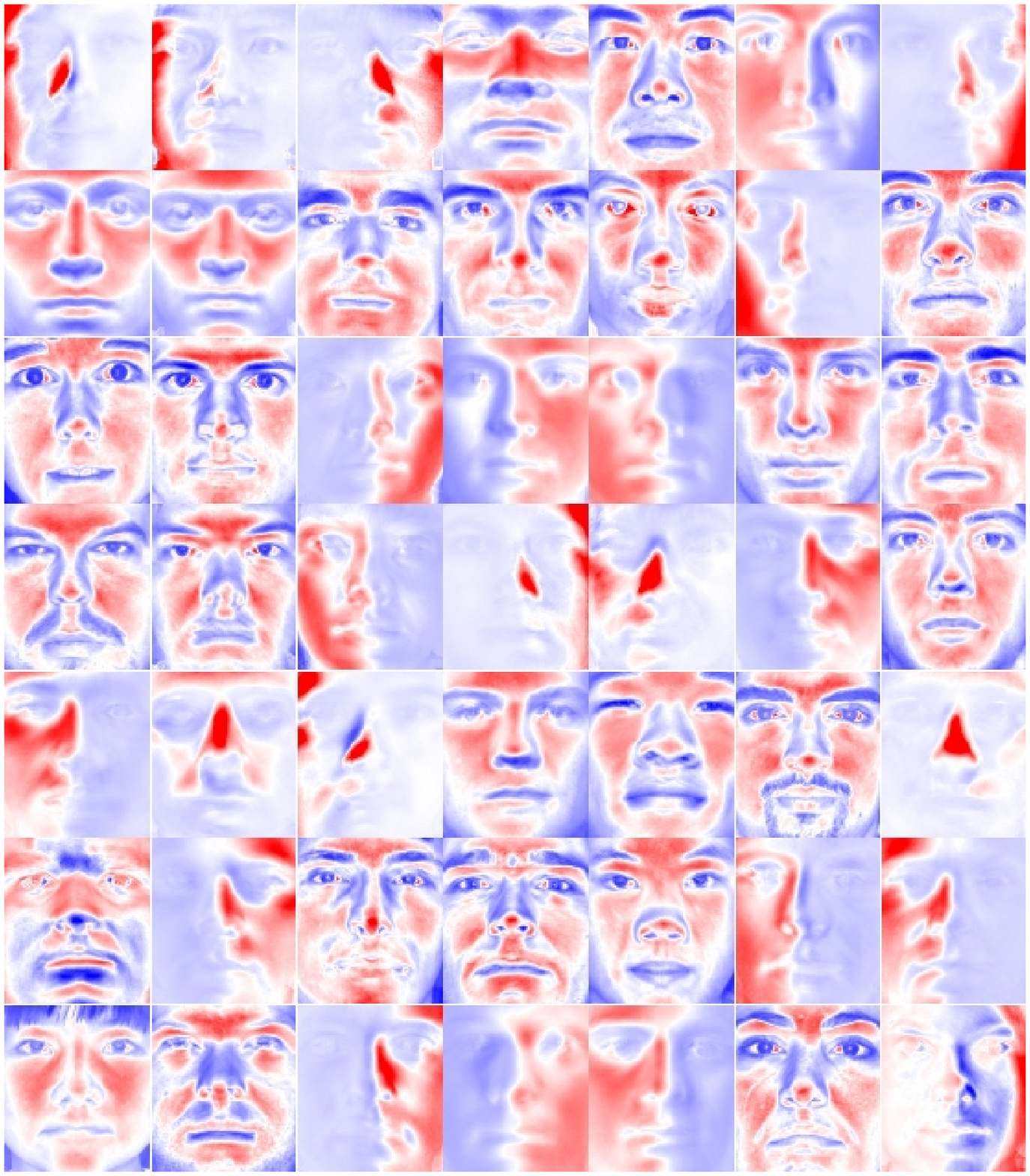}}} 
   \subfigure[SPCA, $\tau=10\%$]{\fbox{\includegraphics[height=0.39\linewidth]{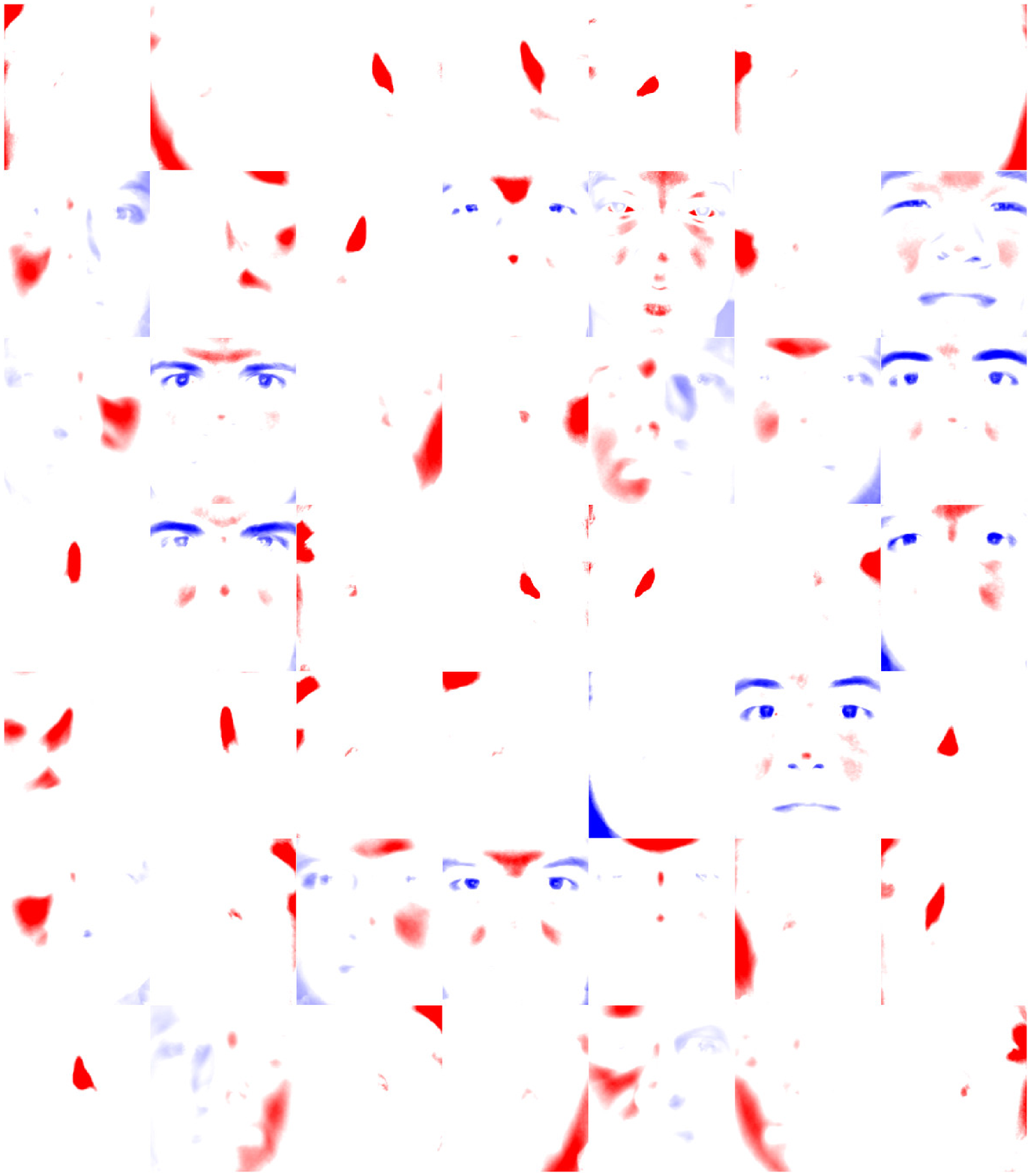}}}
   \caption{Results obtained by PCA, NMF, dictionary learning, SPCA for data set E.}
   \label{fig:spca:data2}
\end{figure}
\begin{figure}
   \centering
   \subfigure[PCA]{\fbox{\includegraphics[width=0.39\linewidth]{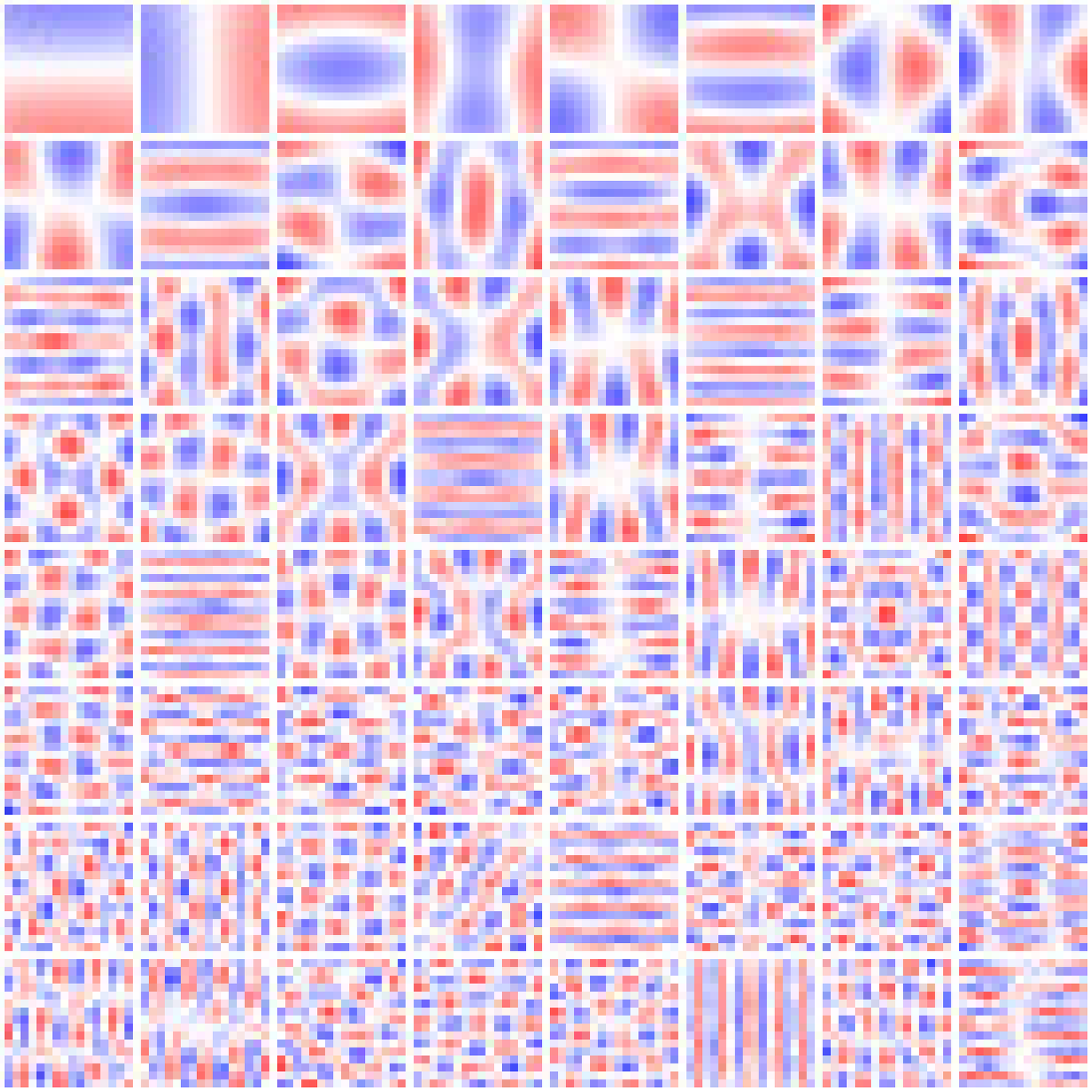}}}
   \subfigure[SPCA, $\tau=70\%$]{\fbox{\includegraphics[width=0.39\linewidth]{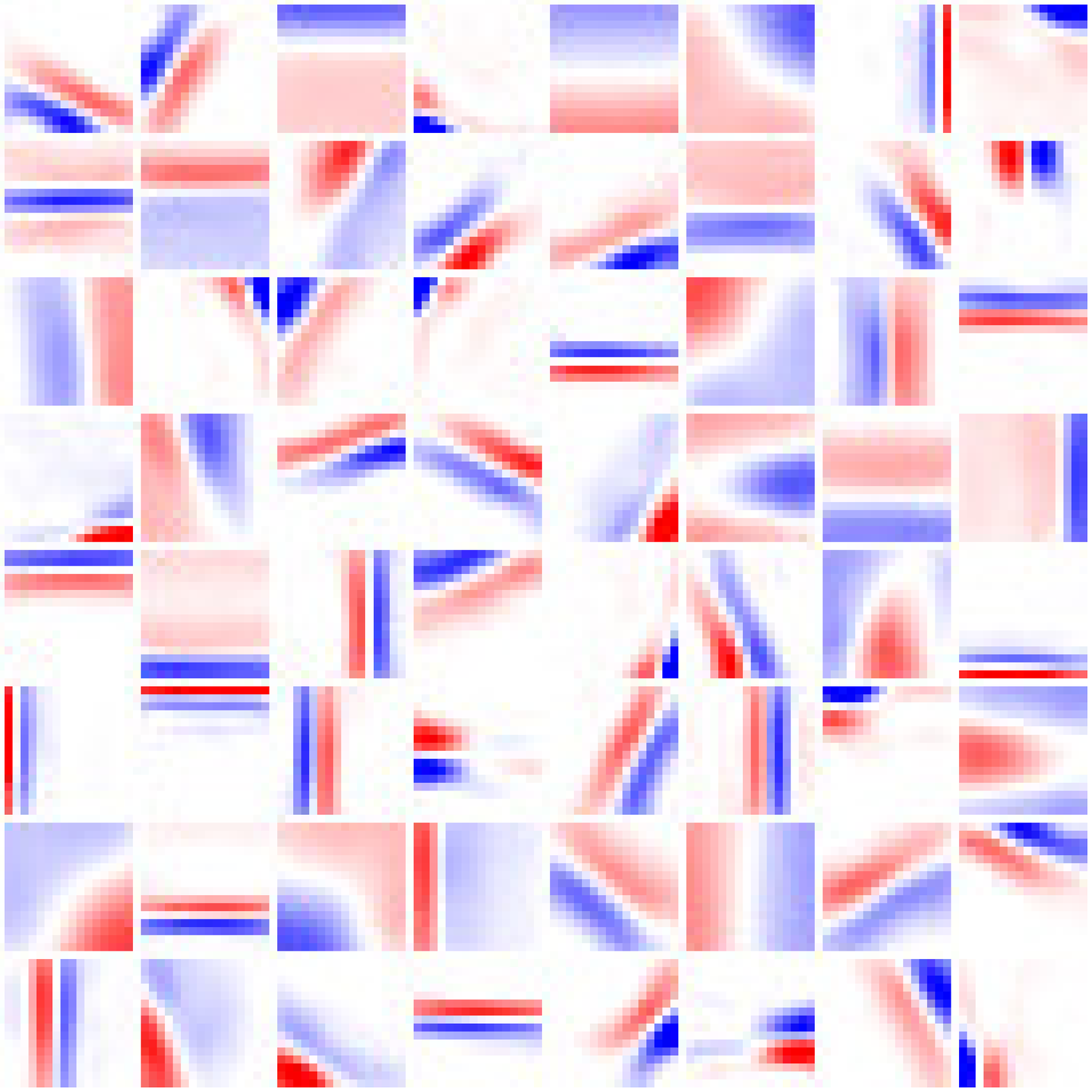}}} \\
   \subfigure[NMF]{\fbox{\includegraphics[width=0.39\linewidth]{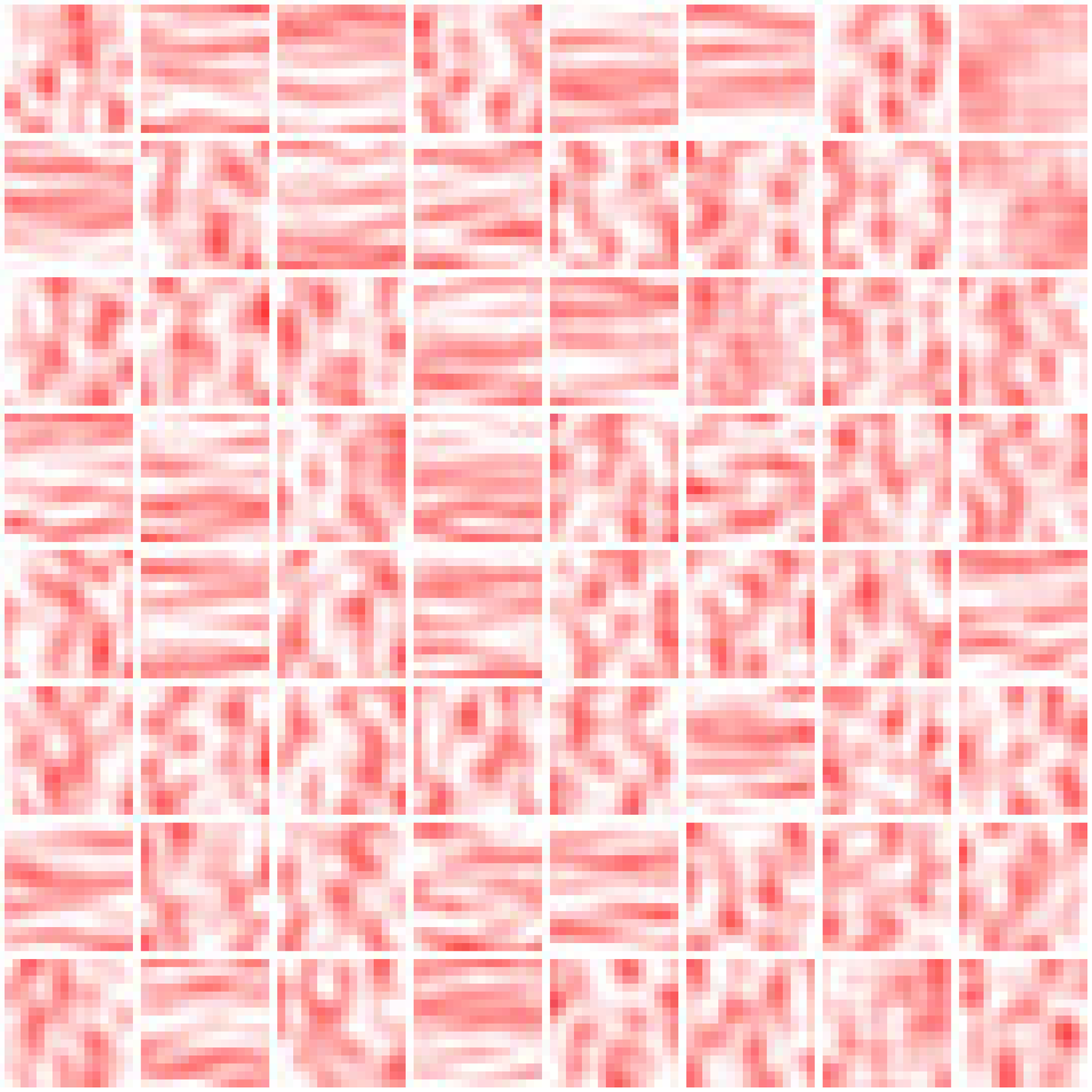}}}
   \subfigure[SPCA, $\tau=30\%$]{\fbox{\includegraphics[width=0.39\linewidth]{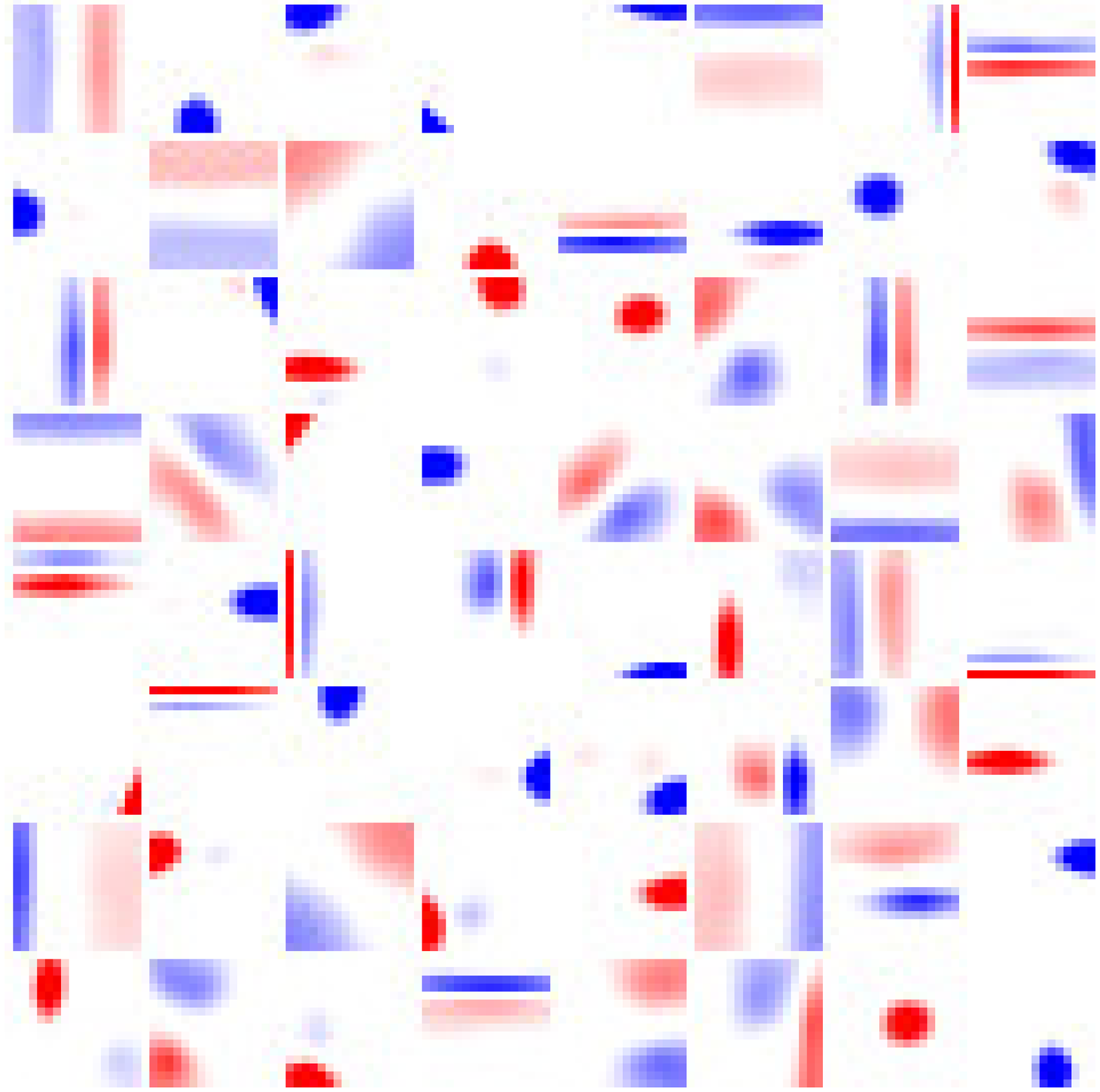}}} \\
   \subfigure[Dictionary Learning]{\fbox{\includegraphics[width=0.39\linewidth]{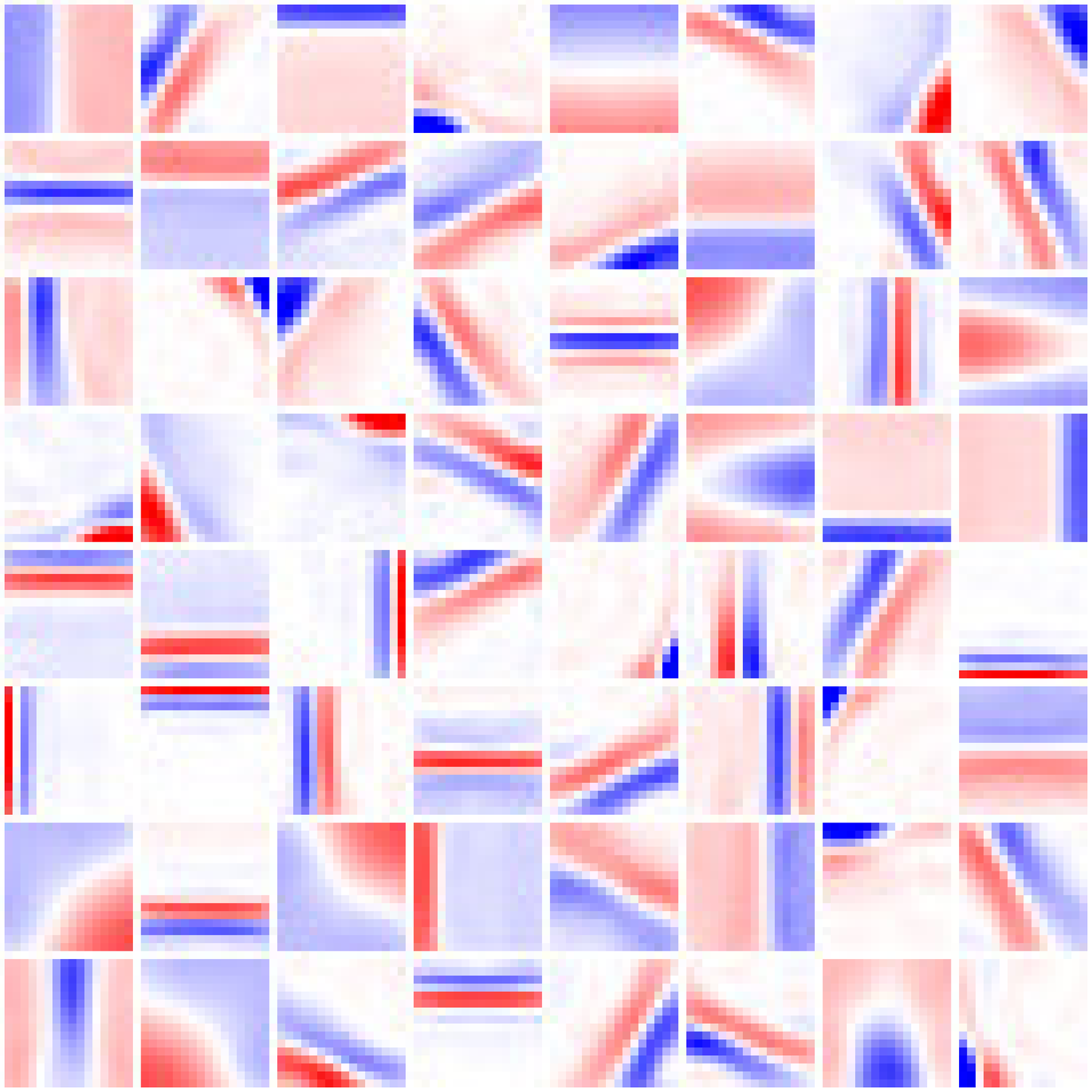}}} 
   \subfigure[SPCA, $\tau=10\%$]{\fbox{\includegraphics[width=0.39\linewidth]{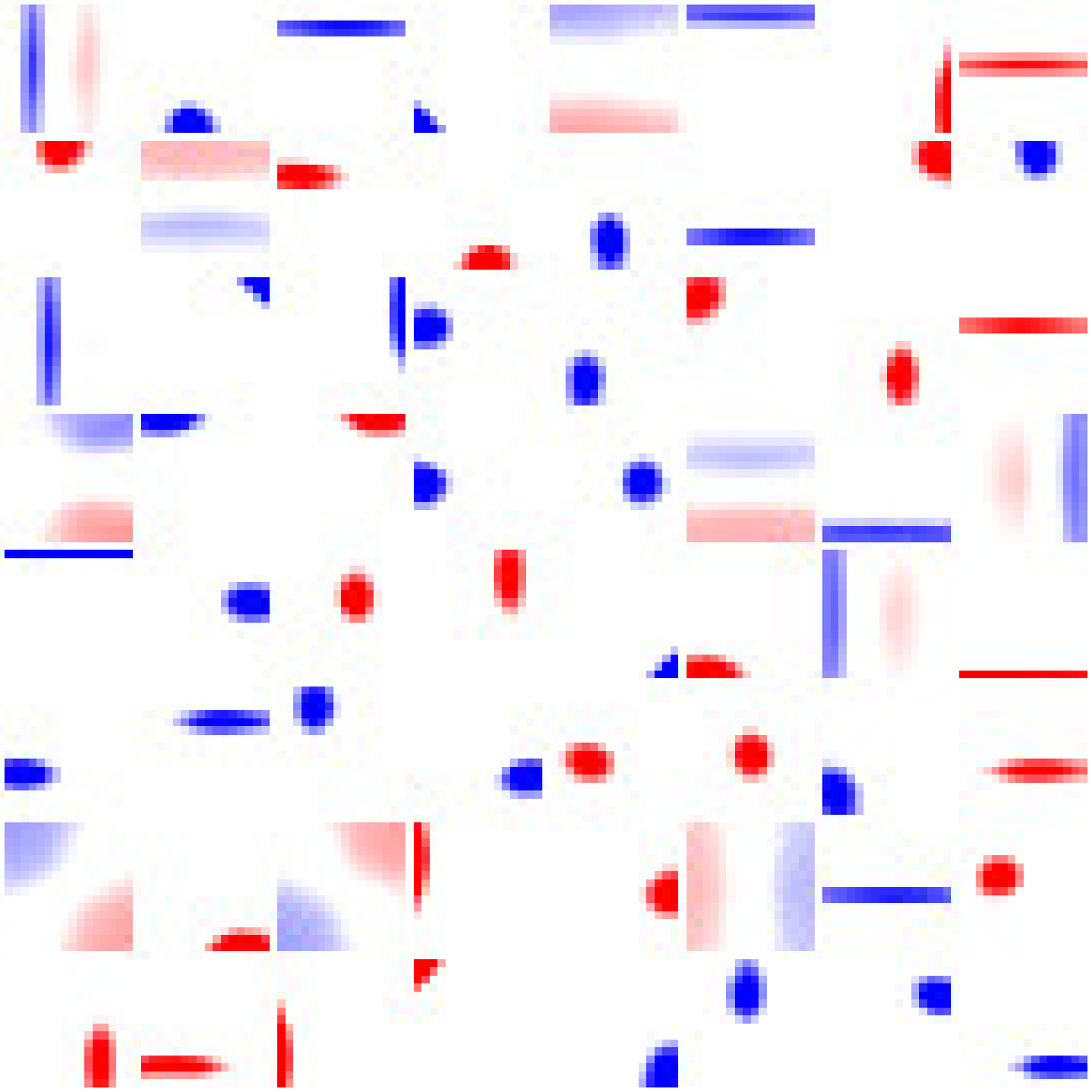}}}
   \caption{Results obtained by PCA, NMF, dictionary learning, SPCA for data set F.}
   \label{fig:spca:data3}
\end{figure}
\subsubsection{Genomic Data}
This experiment follows \citet{witten} and demonstrates that our matrix
decomposition technique can be used for analyzing genomic data.
Gene expression measurements and DNA copy number changes (comparative genomic hybridization CGH) are two popular
types of data in genomic research, which can be used to characterize
a set of abnormal tissue samples for instance.
When these two types of data are available, a recent line of research
tries to analyze the correlation between them---that is, to determine
sets of expression genes which are correlated with sets of chromosomal
gains or losses (see \citealp{witten} and references therein).
Let us suppose that for $n$ tissue samples, we have a matrix $\X$ in $\Real^{n
\times p}$ of gene expression measurements and a matrix $\Y$ in $\Real^{n
\times q}$ of CGH measurements. In order to analyze the
correlation between these two sets of data, recent works have suggested the
use of canonical correlation analysis \citep{hotelling}, which
solves\footnote{Note that when more than one couple of factors are needed, two
sequences
$\u_1,\u_2,\ldots$ and $\v_1,\v_2,\ldots$ of factors can be obtained
recursively subject to orthogonality constraints of the sequences
$\X\u_1,\X\u_2,\ldots$ and $\Y\v_1,\Y\v_2,\ldots$.}
\begin{displaymath}
   \min_{\u \in \Real^p, \v \in \Real^q} \cov(\X\u,\Y\v) \st ||\X\u||_2 \leq 1
   ~~\text{and}~~ ||\Y\v||_2 \leq 1.
\end{displaymath}
When $\X$ and $\Y$ are centered and normalized, it has been further shown that
with this type of data, good results can be obtained by treating the
covariance matrices $\X^T\X$ and $\Y^T\Y$ as diagonal, leading to a rank-one
matrix decomposition problem 
\begin{displaymath}
   \min_{\u \in \Real^p, \v \in \Real^q} ||\X^T\Y-\u\v^T||_F^2 \st ||\u||_2 \leq
   1,~~\text{and}~~ ||\v||_2 \leq 1.
\end{displaymath}
Furthermore, as shown by \citet{witten}, this method can benefit from
sparse regularizers such as the $\ell_1$ norm for the gene expression
measurements and a fused lasso for the CGH arrays, which are classical
choices used for these data. The formulation we have chosen is slightly different 
from the one used by \citet{witten} and can be addressed using our algorithm:
\begin{equation}
   \min_{\u \in \Real^p, \v \in \Real^q} ||\Y^T\X-\v\u^T||_F^2 + \lambda ||\u||_2 
   \st ||\v||_2^2 + \gamma_1||\v||_1 + \gamma_2 \FL(\v) \leq 1.\label{eq:cancer}
\end{equation}
In order to assess the effectivity of our method, we have conducted the same
experiment as \citet{witten} using the breast cancer data set described by
\citet{chin}, consisting of $q=2, 148$ gene expression measurements and
$p=16, 962$ CGH measurements for $n=89$ tissue samples.  The matrix
decomposition problem of Eq.~(\ref{eq:cancer}) was addressed once for each of
the $23$ chromosomes, using each time the CGH data available for the
corresponding chromosome, and the gene expression of all genes. Following the
original choice of \citet{witten}, we have selected a regularization parameter
$\lambda$ resulting in about $25$ non-zero coefficients in $\u$, and selected
$\gamma_1=\gamma_2=1$, which results in sparse and piecewise-constant vectors $\v$.
The
original matrices $(\X,\Y)$ are divided into a training set
$(\X_{tr},\Y_{tr})$ formed with $3/4$ of the $n$ samples, keeping the rest
$(\X_{te},\Y_{te})$ for testing. This experiment is repeated for $10$ random
splits, for each chromosome a couple of factors $(\u,\v)$ are computed, and
the correlations $\corr(\X_{tr}\u,\Y_{tr}\v)$ and
$\corr(\X_{te}\u,\Y_{te}\v)$ are reported on Figure \ref{eq:correlation}. The
average standard deviation of the experiments results was $0.0339$ for the
training set and $0.1391$ for the test set. 

Comparing with the original curves reported by \citet{witten} 
for their penalized matrix decomposition (PMD) algorithm, our
method exhibits in general a performance similar as PMD.\footnote{The curves for PMD were 
generated with the R software package available at \url{http://cran.r-project.org/web/packages/PMA/index.html}
and a script provided by \citet{witten}.} Nevertheless, the purpose
of this section is more to demonstrate that our method can be used with 
genomic data than comparing it carefully with PMD. To draw substantial
conclusions about the performance of both methods, more experiments would of course be needed.
\begin{figure}[hbtp]
   \centerline{\includegraphics[width=0.8\linewidth]{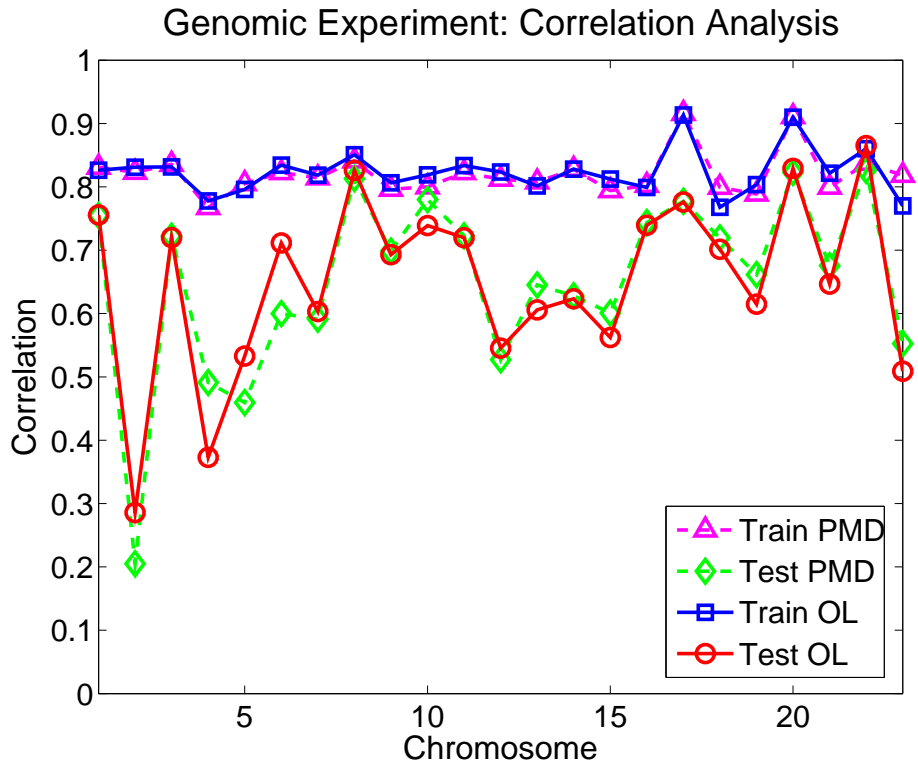}}
   \caption{SPCA was applied to the covariance matrix obtained from the breast cancer data \citep{chin}. A fused lasso regularization is used for the CGH data. $3 / 4$ of the $n$ samples are used as a training set, keeping the rest for testing. Average correlations from $10$ random splits are reported for each of the $23$ chromosomes, for \textsf{PMD} \citep{witten} and our method denoted by \textsf{OL}.
}\label{eq:correlation}
\end{figure}
\subsection{Application to Large-Scale Image Processing}
We demonstrate in this section that our algorithm can be used for a
difficult large-scale image processing task, namely, removing the text
({\em inpainting}) from the damaged $12$-Megapixel image of Figure
\ref{fig:fig3}. Using a multi-threaded version of our implementation,
we have learned a dictionary with $256$ elements from the roughly
$7\times 10^6$ undamaged $12\times12$ color patches in the image with
two epochs in about $8$ minutes on a 2.4GHz machine with eight
cores. Once the dictionary has been learned, the text is removed using
the sparse coding technique for inpainting
of \citet{mairal}. 
Our intent here is of course {\em not} to evaluate our
learning procedure in inpainting tasks, which would require a thorough
comparison with state-the-art techniques on standard
data sets. Instead, we just wish to demonstrate that it can indeed be
applied to a realistic, non-trivial image processing task on a large
image. Indeed, to the best of our knowledge, this is the first time
that dictionary learning is used for image restoration on such
large-scale data. For comparison, the dictionaries used for inpainting
in \citet{mairal} are learned (in batch
mode) on 200,000 patches only.
\begin{figure}[hbtp]
\centerline{\includegraphics[width=0.49\linewidth]{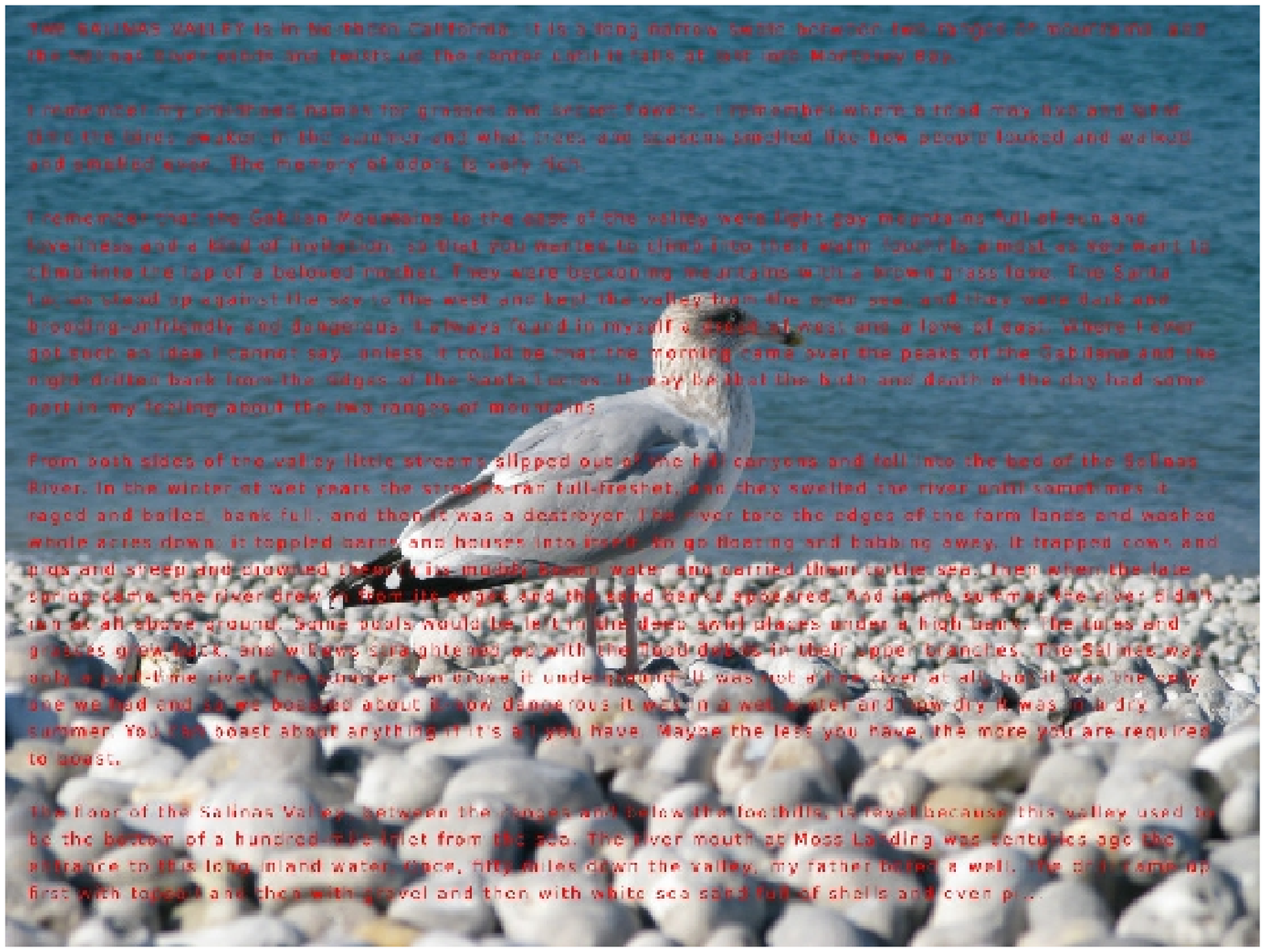}
\includegraphics[width=0.49\linewidth]{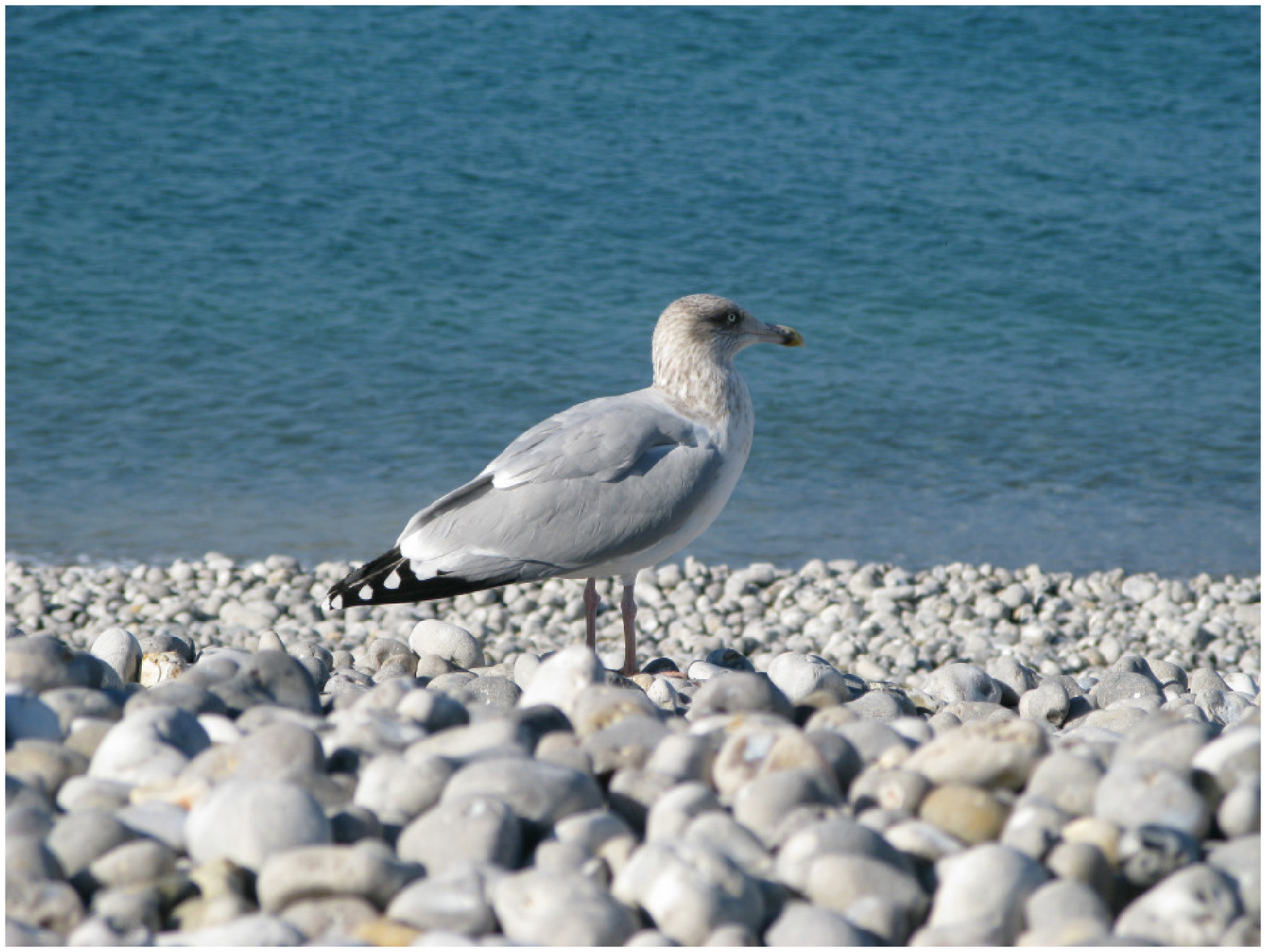}}
\vspace*{0.1cm}
\centerline{\includegraphics[width=0.49\linewidth]{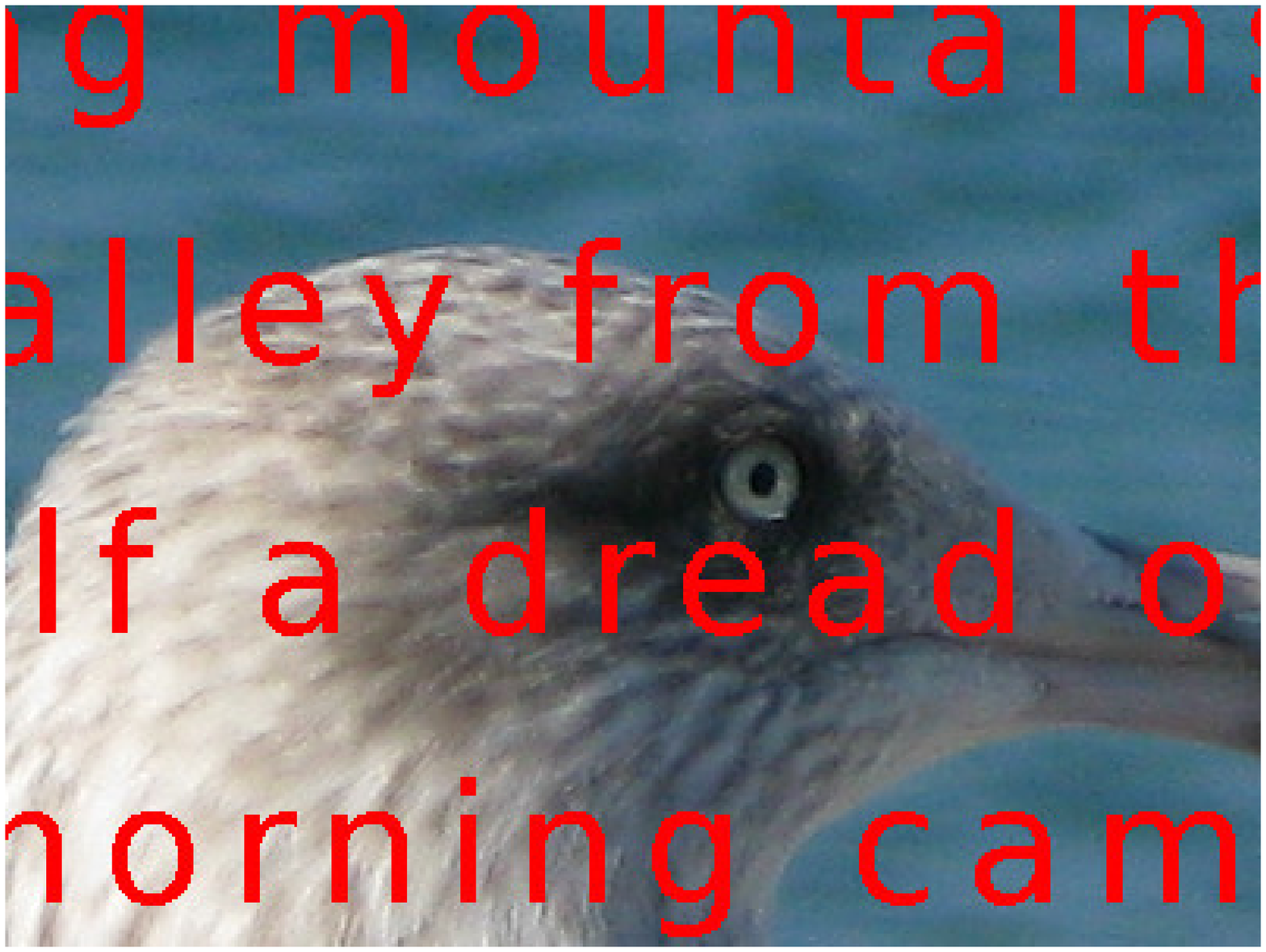}
\includegraphics[width=0.49\linewidth]{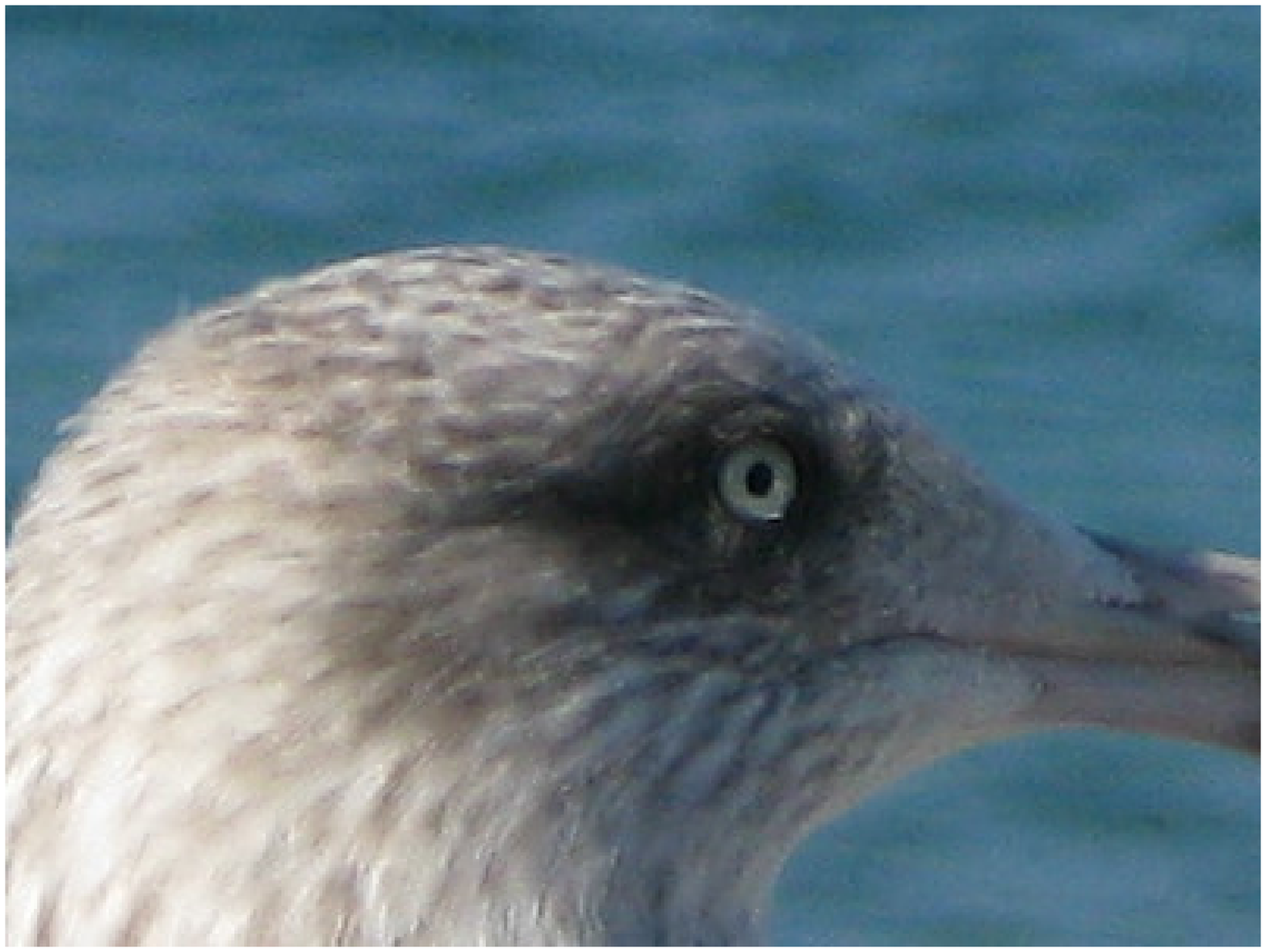}}
\caption{Inpainting example on a $12$-Megapixel image. Top: Damaged
and restored images. Bottom: Zooming on the damaged and restored
images. Note that the pictures
presented here have been scaled down for display. (Best seen in color).}\label{fig:fig3}
\end{figure}
\section{Conclusion}
We have introduced in this paper a new stochastic online algorithm for learning
dictionaries adapted to sparse coding tasks, and proven its convergence.
Experiments demonstrate that it is significantly faster than batch alternatives
such as \citet{engan}, \citet{aharon} and \citet{lee}
on large data sets that may contain millions of training examples, yet it does not
require a careful learning rate tuning like regular stochastic gradient descent
methods.  Moreover, we have extended it to other matrix factorization problems
such as non negative matrix factorization, and we have proposed a formulation
for sparse principal component analysis which can be solved efficiently using
our method. Our approach has already shown to be useful for
image restoration tasks such as denoising \citep{mairal8};
more experiments are of course needed to better assess its promise
in bioinformatics and signal processing.
Beyond this, we plan to use the proposed learning framework for
sparse coding in computationally demanding video restoration
tasks \citep{protter}, with dynamic data sets whose size is not fixed, and
extending this framework to different loss functions \citep{mairal6} to address
discriminative tasks such as image classification, which are more sensitive to
overfitting than reconstructive ones.
\acks{This paper was supported in part by ANR under grant MGA ANR-07-BLAN-0311.  The work of
Guillermo Sapiro is partially supported by ONR, NGA, NSF, ARO, and DARPA. 
The authors would like to thank Sylvain Arlot, L\'eon Bottou,
Jean-Philippe Vert, and the members of the Willow project-team
for helpful discussions, and Daniela Witten for providing us with her code to generate the 
curves of Figure \ref{eq:correlation}.}
\appendix
\section{Theorems and Useful Lemmas} \label{appendix:th}
We provide in this section a few theorems and lemmas from the optimization and probability literature,
which are used in this paper.
\begin{theorem}{ \bf [Corollary of Theorem 4.1 from \citet{bonnans}, due to \citet{danskin}].\\}
   \label{theo:bonnans}
   Let $f: \Real^p \times \Real^q \to \Real$. 
   Suppose that for all $\x \in \Real^p$ the function $f(\x,.)$ is
   differentiable, and that $f$ and $\nabla_{\u}f(\x,\u)$ the derivative of
   $f(\x,.)$ are continuous on $\Real^p \times \Real^q$. Let $v(\u)$ be the
   optimal value function $v(\u) = \min_{\x \in C} f(\x,\u)$,
   where $C$ is a compact subset of $\Real^p$. Then $v(\u)$ is
   directionally differentiable.  Furthermore, if for $\u_0 \in \Real^q$,
   $f(.,\u_0)$ has a unique minimizer $\x_0$ then $v(\u)$ is differentiable in 
   $\u_0$ and $\nabla_{\u} v(\u_0) = \nabla_{\u} f (\x_0,\u_0)$.
\end{theorem}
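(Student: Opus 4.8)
The plan is to prove this by the classical Danskin argument, in two phases: first establish that $v$ is directionally differentiable by squeezing the one-sided difference quotient between matching upper and lower bounds, and then, under the uniqueness hypothesis, upgrade this to full (Fr\'echet) differentiability with the asserted gradient. Throughout I fix $\u_0 \in \Real^q$ and write $S(\u)$ for the (nonempty, by compactness of $C$ and continuity of $f$) set of minimizers of $f(\cdot,\u)$ over $C$.

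First I would handle the \emph{upper bound} on the directional derivative. Fix a direction $\d \in \Real^q$ and $t>0$. For any minimizer $\x \in S(\u_0)$, feasibility of $\x$ gives $v(\u_0+t\d) \leq f(\x,\u_0+t\d)$, while $v(\u_0)=f(\x,\u_0)$. Dividing by $t$ and letting $t\to 0^+$, the differentiability of $f(\x,\cdot)$ yields $\limsup_{t\to 0^+}\frac{v(\u_0+t\d)-v(\u_0)}{t} \leq \nabla_\u f(\x,\u_0)^T\d$. Since $\x \in S(\u_0)$ is arbitrary, the right-hand side may be minimized over $S(\u_0)$.

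The \emph{lower bound} is the main obstacle. For each $t>0$ choose a minimizer $\x_t \in S(\u_0+t\d)$, which exists since $C$ is compact and $f$ continuous. Using $v(\u_0) \leq f(\x_t,\u_0)$ gives $v(\u_0+t\d)-v(\u_0) \geq f(\x_t,\u_0+t\d)-f(\x_t,\u_0)$, and a mean value expansion of $s \mapsto f(\x_t,\u_0+s\d)$ on $[0,t]$ rewrites the right-hand side as $t\,\nabla_\u f(\x_t,\u_0+\theta_t t\d)^T\d$ for some $\theta_t \in (0,1)$. The delicate point is controlling $\x_t$ as $t\to 0^+$: by compactness of $C$ any sequence $\x_{t_n}$ has a convergent subsequence, and a lower-semicontinuity (Berge maximum theorem) argument—passing to the limit in $f(\x_{t_n},\u_0+t_n\d)\leq f(\x,\u_0+t_n\d)$ using continuity of $f$—shows every accumulation point lies in $S(\u_0)$. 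Continuity of $\nabla_\u f$ then lets me pass to the limit in the expansion, producing $\liminf_{t\to 0^+}\frac{v(\u_0+t\d)-v(\u_0)}{t} \geq \min_{\x \in S(\u_0)}\nabla_\u f(\x,\u_0)^T\d$. Combined with the upper bound, this establishes directional differentiability with $v'(\u_0;\d)=\min_{\x \in S(\u_0)}\nabla_\u f(\x,\u_0)^T\d$.

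Finally I would treat the uniqueness case, $S(\u_0)=\{\x_0\}$. Here the accumulation argument of the previous paragraph forces $\x_{\mathbf h}\to\x_0$ as the full perturbation $\mathbf h \to 0$ (every accumulation point of the minimizers $\x_{\mathbf h} \in S(\u_0+\mathbf h)$ equals $\x_0$, so no subsequence is needed). The two bounds then collapse: the upper bound applied with $\x=\x_0$ gives $v(\u_0+\mathbf h)-v(\u_0) \leq \nabla_\u f(\x_0,\u_0)^T\mathbf h + o(\|\mathbf h\|)$, while the lower bound with $\x_{\mathbf h}\to\x_0$ and continuity of $\nabla_\u f$ gives the same linear expression as a lower bound, also with $o(\|\mathbf h\|)$ error. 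Squeezing yields $v(\u_0+\mathbf h)-v(\u_0)=\nabla_\u f(\x_0,\u_0)^T\mathbf h + o(\|\mathbf h\|)$, which is exactly Fr\'echet differentiability of $v$ at $\u_0$ with $\nabla_\u v(\u_0)=\nabla_\u f(\x_0,\u_0)$. The only genuinely nontrivial ingredient is the uniform convergence of the perturbed minimizers $\x_{\mathbf h}\to\x_0$, where both the compactness of $C$ and the uniqueness of the minimizer are indispensable; the remaining steps are routine mean value estimates and limit passages.
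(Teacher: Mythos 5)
Your proof is correct, but there is no proof in the paper to compare it to: Theorem~\ref{theo:bonnans} appears in Appendix~\ref{appendix:th} only as a quoted result (a corollary of Theorem 4.1 of \citet{bonnans}, attributed to \citet{danskin}), invoked as a black box in the proof of Proposition~\ref{prop:regul}. Your argument is the classical Danskin proof, and each step checks out: the upper bound on the one-sided difference quotient follows from feasibility of any minimizer $\x \in S(\u_0)$ at the perturbed parameter; the lower bound correctly combines the mean value theorem for $s \mapsto f(\x_t,\u_0+s\d)$ with the compactness/continuity argument showing that every accumulation point of the perturbed minimizers lies in $S(\u_0)$; together they give $v'(\u_0;\d) = \min_{\x \in S(\u_0)} \nabla_\u f(\x,\u_0)^T\d$ (the minimum being attained because $S(\u_0)$ is a closed subset of the compact set $C$), which is in fact a sharper statement than the bare directional differentiability claimed by the theorem. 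In the uniqueness case, your key observation---that $S(\u_0)=\{\x_0\}$ forces the perturbed minimizers $\x_{\mathbf{h}}$ to converge to $\x_0$ as the full vector $\mathbf{h}$ tends to $0$, not merely along rays---is exactly what turns the two one-sided estimates into the expansion $v(\u_0+\mathbf{h})-v(\u_0)=\nabla_\u f(\x_0,\u_0)^T\mathbf{h}+o(\|\mathbf{h}\|)$, i.e., Fr\'echet differentiability with the asserted gradient; the joint continuity of $\nabla_\u f$ (a hypothesis of the theorem) is what makes the mean-value error term uniformly $o(\|\mathbf{h}\|)$. The only point worth making explicit in a final write-up is that the liminf bound should be applied along a sequence $t_n \to 0^+$ realizing the liminf before extracting the convergent subsequence of minimizers, which is what you implicitly do.
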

\begin{theorem}{\bf [Sufficient condition of convergence for a stochastic process, see \citet{bottou2} and references therein \citep{metivier,fisk}].\\}
   \label{theo:martingales}
   Let ($\Omega$,$\mathcal F$,$P$) be a measurable probability space, $u_t$, for $t \geq 0$,
   be the realization of a stochastic process and ${\mathcal F}_t$ be the
   filtration determined by the past information at time $t$. Let 
   \begin{displaymath} 
      \delta_t = \left\{ 
      \begin{array}{ll}
         1 & ~~\text{if}~~ \E[u_{t+1}-u_t | {\mathcal F}_t]>0, \\
         0 & ~~\text{otherwise.}
      \end{array}
      \right.
   \end{displaymath}
   If for all $t$, $u_t \geq 0$ and $\sum_{t=1}^\infty
   \E[\delta_t(u_{t+1}-u_t)] < \infty$, then $u_t$ is a quasi-martingale and
   converges almost surely. Moreover,
   \begin{displaymath}
      \sum_{t=1}^\infty |\E[u_{t+1}-u_t | {\mathcal F}_t]| < +\infty \as 
   \end{displaymath}
\end{theorem}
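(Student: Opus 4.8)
The plan is to prove the theorem through the Doob decomposition of $u_t$ into a martingale plus a predictable compensator, and then to exploit the nonnegativity constraint $u_t \geq 0$ to upgrade the one-sided hypothesis into full two-sided control of the conditional increments. First I would reformulate the hypothesis: since $\delta_t$ is $\FF_t$-measurable, $\E[\delta_t(u_{t+1}-u_t)\mid\FF_t] = \delta_t\,\E[u_{t+1}-u_t\mid\FF_t] = (\E[u_{t+1}-u_t\mid\FF_t])^+$, so that $\sum_{t}\E[\delta_t(u_{t+1}-u_t)]<\infty$ says precisely that $\sum_t \E[(\E[u_{t+1}-u_t\mid\FF_t])^+]<\infty$, i.e.\ the cumulative positive conditional variation has bounded expectation. (Integrability of each $u_t$ is implicit, since the conditional expectations in the hypothesis are assumed to exist.)

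Next I would set $g_s \defin \E[u_{s+1}-u_s\mid\FF_s]$ and introduce the nondecreasing predictable sums $S_t^+ \defin \sum_{s=0}^{t-1} g_s^+$ and $S_t^- \defin \sum_{s=0}^{t-1} g_s^-$. A direct computation then shows that $W_t \defin u_t - \sum_{s=0}^{t-1} g_s = u_t - S_t^+ + S_t^-$ is a martingale: using $\E[u_{t+1}\mid\FF_t] - g_t = u_t$, one gets $\E[W_{t+1}\mid\FF_t]=u_t-\sum_{s=0}^{t-1}g_s = W_t$. This yields the decomposition $u_t = W_t + S_t^+ - S_t^-$. Because $S_t^+$ is nondecreasing with $\sup_t \E[S_t^+]=\sum_s \E[g_s^+]<\infty$ by the reformulated hypothesis, monotone convergence gives $S_\infty^+<\infty\as$

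The crux is then to control $S_t^-$ using only the floor $u_t\geq 0$. Taking expectations in the decomposition and using the martingale identity $\E[W_t]=\E[W_0]=\E[u_0]$ together with $\E[u_t]\geq 0$, I obtain $\E[S_t^-]=\E[u_0]+\E[S_t^+]-\E[u_t]\leq \E[u_0]+\E[S_\infty^+]<\infty$ uniformly in $t$. Since $S_t^-$ is nondecreasing, monotone convergence again gives $S_\infty^-<\infty\as$ This already proves the ``moreover'' claim, because $\sum_t|g_t|=S_\infty^++S_\infty^-<\infty\as$; taking expectations in the same identity shows $\sum_t\E[|g_t|]<\infty$, so $u_t$ is a quasi-martingale in the classical sense. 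The same estimates bound $\E[u_t]\leq \E[u_0]+\E[S_\infty^+]$ uniformly, whence $\E[|W_t|]\leq \E[u_t]+\E[S_t^+]+\E[S_t^-]$ is bounded uniformly in $t$. Thus $W_t$ is an $L^1$-bounded martingale and converges almost surely by Doob's martingale convergence theorem; writing $u_t=W_t+S_t^+-S_t^-$ as a sum of three almost surely convergent sequences concludes that $u_t$ converges almost surely.

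I expect the main obstacle to be exactly this step of turning the one-sided assumption into two-sided summability. The hypothesis controls only the positive variations of $u_t$, and without a lower barrier the negative variations---and hence $u_t$ itself---could run off to $-\infty$. Nonnegativity supplies that barrier: combined with the martingale property it forces $\E[S_t^-]$ to remain bounded, which is simultaneously what delivers the quasi-martingale property and the $L^1$-boundedness needed to apply the martingale convergence theorem.
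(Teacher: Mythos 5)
The paper states this theorem without proof---it is imported from the literature (\citealp{fisk,metivier}, and \citealp{bottou2})---so there is no internal argument to compare against; your proof is correct and is essentially the classical argument underlying those references. Each step checks out: since $\delta_t$ is ${\mathcal F}_t$-measurable, the hypothesis is indeed $\sum_t \E\big[(\E[u_{t+1}-u_t\mid{\mathcal F}_t])^+\big]<\infty$; the Doob decomposition $u_t = W_t + S_t^+ - S_t^-$ with $W_t$ a martingale is valid under the (correctly flagged) implicit integrability; the identity $\E[S_t^-]=\E[u_0]+\E[S_t^+]-\E[u_t]$ combined with $u_t\geq 0$ gives the uniform bound on $\E[S_t^-]$ that yields both the almost sure summability $\sum_t|\E[u_{t+1}-u_t\mid{\mathcal F}_t]|<+\infty$ and the $L^1$-boundedness of $W_t$, after which Doob's martingale convergence theorem and the monotone convergence of $S_t^{\pm}$ deliver almost sure convergence of $u_t$.
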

\begin{lemma}{\bf [A corollary of Donsker theorem \citealp[see][chap. 19.2, lemma
   19.36 and example 19.7]{vaart}].\\} \label{lemma:donsker}
   Let $F =\{ f_\theta: \chi \to \Real,  \theta \in \Theta \}$ be a
   set of measurable functions indexed by a bounded subset $\Theta$ of
   $\Real^d$. Suppose that there exists a constant $K$ such that
   \begin{displaymath}
      |f_{\theta_1}(x)-f_{\theta_2}(x)| \leq K ||\theta_1-\theta_2||_2,
   \end{displaymath}
   for every $\theta_1$ and $\theta_2$ in $\Theta$ and $x$ in $\chi$.
   Then, $F$ is P-Donsker (see \citealp[chap. 19.2]{vaart}).
   For any $f$ in $F$, Let us define $\PPP_n f$, $\PPP f$ and $\G_n f$ as
    \begin{displaymath}
       \PPP_n f = \frac{1}{n}\sum_{i=1}^n f(X_i), ~~~~  \PPP f = \E_X[f(X)], ~~~~ \G_n f = \sqrt{n}(\PPP_n f - \PPP f).
    \end{displaymath}
   Let us also suppose that for all $f$,
   $\PPP f^2 < \delta^2$ and $||f||_\infty < M$ and that the random elements
   $X_1,X_2,\ldots$ are Borel-measurable.
   Then, we have
   \begin{displaymath}
      \E_P||\G_n||_F = \O(1),
   \end{displaymath}
   where $||\G_n||_F = \sup_{f \in F} |\G_n f|$.
   For a more general variant of this
   lemma and additional explanations and examples, see \citet{vaart}.
\end{lemma}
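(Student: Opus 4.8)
The plan is to recognize the statement as a fact about a Lipschitz-parametrized function class and to derive both conclusions from a single metric-entropy estimate, after which the standard empirical-process machinery of \citet[chap.~19.2]{vaart} applies essentially verbatim. First I would separate the roles of the hypotheses: the uniform bound $\|f\|_\infty < M$ (together with $\PPP f^2 < \delta^2$) merely supplies a bounded, square-integrable envelope, since the constant $M$ dominates every $|f_\theta|$ and $P$ is a probability measure; the real work is done by the Lipschitz hypothesis, which controls the size of $F$. The governing intuition is that because $F$ is indexed by a \emph{bounded} subset $\Theta$ of the finite-dimensional space $\Real^d$ through a uniformly Lipschitz map, $F$ inherits the (polynomially growing) covering numbers of $\Theta$ and is therefore a ``small'' class.

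The technical heart is the bracketing-number estimate, which I would carry out next. Fix $\epsilon > 0$ and take a minimal Euclidean $\rho$-cover $\theta_1,\dots,\theta_N$ of $\Theta$ with $\rho = \epsilon / K$; since $\Theta$ is a bounded subset of $\Real^d$, one has $N \leq C\,(\mathrm{diam}\,\Theta / \rho)^d$ with $C$ depending only on $d$. The Lipschitz bound then turns each Euclidean ball into a bracket: for any $\theta$ within $\rho$ of some $\theta_i$ and every $x \in \chi$, $|f_\theta(x) - f_{\theta_i}(x)| \leq K\rho = \epsilon$, so the pairs $[\,f_{\theta_i} - \epsilon,\ f_{\theta_i} + \epsilon\,]$ are brackets of $L_2(P)$-width at most $2\epsilon$ covering $F$. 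Consequently
\begin{displaymath}
   N_{[]}\big(2\epsilon, F, L_2(P)\big) \leq C\,\Big(\frac{K\,\mathrm{diam}\,\Theta}{\epsilon}\Big)^d,
\end{displaymath}
so that $\log N_{[]}(\epsilon, F, L_2(P)) = \O\big(d\,\log(1/\epsilon)\big)$ as $\epsilon \to 0$. The decisive point is that the bracketing integral
\begin{displaymath}
   J_{[]}\big(\delta, F, L_2(P)\big) = \int_0^{\delta}\sqrt{\log N_{[]}\big(\epsilon, F, L_2(P)\big)}\;d\epsilon
\end{displaymath}
is finite, because $\sqrt{d\,\log(1/\epsilon)}$ is integrable near the origin.

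Given finiteness of the bracketing integral, both conclusions are immediate and are in fact collected in \citet[chap.~19.2, Example~19.7 and Lemma~19.36]{vaart}. The bracketing bound above places $F$ in the scope of the Donsker theorem for uniformly Lipschitz, finite-dimensionally indexed classes, which yields the first claim. For the quantitative claim, the accompanying maximal inequality bounds $\E_P\|\G_n\|_F$ by a universal constant times $J_{[]}(\|H\|_{L_2(P)}, F, L_2(P))$ for any envelope $H$ of $F$; taking $H \equiv M$ gives $\|H\|_{L_2(P)} \leq M$, and the resulting bracketing integral is finite and manifestly independent of $n$, whence $\E_P\|\G_n\|_F = \O(1)$.

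I expect the only real obstacle to be the bracketing estimate of the second paragraph, specifically the passage from the \emph{pointwise} Lipschitz inequality to genuine $L_2(P)$ brackets. This passage is clean precisely because the Lipschitz control holds uniformly in $x$, so a pointwise bracket of width $2\epsilon$ is automatically an $L_2(P)$ bracket of width at most $2\epsilon$; one need only be careful to take the cover of $\Theta$ in Euclidean distance and to rescale the radius by $K$. A route through uniform (rather than bracketing) entropy would also work---bounding the uniform covering numbers of $F$ by those of $\Theta$ and applying the uniform-entropy Donsker theorem---but bracketing is the more natural choice here because the available Lipschitz bound is pointwise.
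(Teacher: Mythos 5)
Your proof is correct and follows essentially the same route as the paper, which states this lemma without proof and delegates it entirely to the cited results of \citet{vaart}: Example 19.7 there obtains the Donsker property from exactly your bracketing-number bound for Lipschitz-in-parameter classes, and Lemma 19.36 there is the maximal inequality under the hypotheses $\PPP f^2 < \delta^2$ and $||f||_\infty < M$, yielding $\E_P||\G_n||_F = \O(1)$. Your reconstruction---Euclidean cover of $\Theta$ rescaled by $K$ into pointwise (hence $L_2(P)$) brackets, finiteness of the bracketing integral since $\sqrt{d\log(1/\epsilon)}$ is integrable at the origin, then the Donsker theorem and the envelope-based maximal inequality---is precisely the content of those citations.
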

\begin{lemma}{\bf [A simple lemma on positive converging sums].\\ } \label{lemma:converg}
   Let $a_n$, $b_n$ be two
   real sequences such that for all $n$, $a_n \geq 0, b_n \geq 0$, $\sum_{n=1}^\infty a_n = \infty$, $\sum_{n=1}^\infty a_n b_n < \infty$, $\exists K >0 \st |b_{n+1}-b_n| < K a_n$.
   Then, $\lim_{n \to +\infty} b_n = 0$.
\end{lemma}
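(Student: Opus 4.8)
The plan is to prove the claim in two stages: first establish that $\liminf_{n\to\infty} b_n = 0$ using only the two summability hypotheses, and then exploit the slow-variation bound $|b_{n+1}-b_n| < Ka_n$ to upgrade this to convergence of the whole sequence to $0$.

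For the first stage I would argue by contradiction. If $\liminf_{n\to\infty} b_n = \ell > 0$, then there is an index $N$ beyond which $b_n > \ell/2$; since $\sum_n a_n = \infty$ forces the tail $\sum_{n\geq N} a_n$ to diverge, this would give $\sum_{n\geq N} a_n b_n \geq (\ell/2)\sum_{n\geq N} a_n = \infty$, contradicting $\sum_n a_n b_n < \infty$. Hence $\liminf_{n\to\infty} b_n = 0$. This step is essentially immediate and does not use the increment hypothesis at all.

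The substance lies in the second stage, where I would again argue by contradiction and suppose $\limsup_{n\to\infty} b_n > 0$. Choosing $\epsilon>0$ small enough that $\limsup_n b_n > 2\epsilon$ while $\liminf_n b_n = 0 < \epsilon$, the sequence satisfies $b_n > 2\epsilon$ infinitely often and $b_n < \epsilon$ infinitely often. I would then extract infinitely many disjoint ``descent'' intervals $[t_i,u_i)$ in which $b$ falls from above $2\epsilon$ down past $\epsilon$: take $t_i$ with $b_{t_i} > 2\epsilon$, let $u_i$ be the first index after $t_i$ with $b_{u_i} \leq \epsilon$, and then restart the selection past $u_i$. By construction $b_n > \epsilon$ for every $t_i \leq n < u_i$, so on one hand $\sum_{n=t_i}^{u_i-1} a_n b_n \geq \epsilon \sum_{n=t_i}^{u_i-1} a_n$, while on the other hand the telescoping estimate $\epsilon \leq b_{t_i} - b_{u_i} \leq \sum_{n=t_i}^{u_i-1}|b_{n+1}-b_n| < K\sum_{n=t_i}^{u_i-1} a_n$ yields $\sum_{n=t_i}^{u_i-1} a_n > \epsilon/K$. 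Combining the two gives $\sum_{n=t_i}^{u_i-1} a_n b_n > \epsilon^2/K$ on each interval, and summing over the infinitely many disjoint intervals forces $\sum_n a_n b_n = \infty$, the desired contradiction. Therefore $\limsup_n b_n = 0$, and since $b_n\geq 0$ this gives $\lim_{n\to\infty} b_n = 0$.

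The main obstacle, and the only place where the increment hypothesis is genuinely used, is this second stage: because $\sum_n|b_{n+1}-b_n|$ need not converge (the increments are controlled only by the non-summable $a_n$), one cannot bound the total variation of $b$ globally and must instead localize the variation estimate to individual excursions. The decisive technical point is to define each excursion as a \emph{descent} from $2\epsilon$ to $\epsilon$ rather than an ascent; with this orientation every index of the interval automatically satisfies $b_n > \epsilon$, so the lower bound $\sum a_n b_n \geq \epsilon \sum a_n$ holds with no stray boundary term, and pairing it with the variation lower bound $\sum a_n > \epsilon/K$ closes the argument cleanly.
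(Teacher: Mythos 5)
Your proof is correct, and it is essentially the argument the paper itself points to: the paper gives no details for this lemma, saying only that the proof is ``similar to'' Proposition 1.2.4 of \citet{bertsekas}, and that proposition's proof is precisely your two-stage scheme (first $\liminf b_n = 0$ from the two summability hypotheses, then a contradiction via disjoint descent excursions from $2\epsilon$ to $\epsilon$, on which the increment bound forces $\sum a_n > \epsilon/K$ and hence $\sum a_n b_n = \infty$). In effect you have supplied, correctly and in full, the details that the paper delegates to the reference.
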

\begin{proof}
    The proof is similar to \citet[prop 1.2.4]{bertsekas}.
\end{proof}

\section{Efficient Projections Algorithms} \label{appendix:proj}
In this section, we address the problem of efficiently projecting a vector
onto two sets of constraints, which allows us to extend our algorithm to
various other formulations.
\subsection{A Linear-time Projection on the Elastic-Net Constraint} \label{appendix:sec:elas}
Let $\b$ be a vector of $\Real^m$. We consider the problem of projecting this vector
onto the elastic-net constraint set:
\begin{equation}
   \min_{\u \in \Real^m} \frac{1}{2}||\b-\u||_2^2 \st ||\u||_1 +
   \frac{\gamma}{2}||\u||_2^2 \leq \tau.\label{eq:proj}
\end{equation}
To solve efficiently the case $\gamma > 0$, we 
propose Algorithm \ref{fig:proj_elastic}, which extends \citet{maculan} and \citet{duchi}, and the following lemma
which shows that it solves our problem.
\begin{lemma}{\bf [Projection onto the elastic-net constraint set].\\}
For $\b$ in $\Real^m$, $\gamma \geq 0$ and $\tau > 0$,
Algorithm \ref{fig:proj_elastic} solves Eq.~(\ref{eq:proj}).
\end{lemma}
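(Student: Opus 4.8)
The plan is to treat Eq.~(\ref{eq:proj}) as the minimization of a strictly convex quadratic objective over the convex feasible set $\{\u \in \Real^m : g(\u) \leq \tau\}$, where $g(\u) \defin \|\u\|_1 + \frac{\gamma}{2}\|\u\|_2^2$, so that a unique minimizer $\u^\star$ exists, and to characterize it through its Karush--Kuhn--Tucker (KKT) conditions. First I would dispatch the trivial case: if $\b$ already satisfies $g(\b) \leq \tau$, then $\u^\star = \b$ and the algorithm should return $\b$ unchanged. Otherwise the unconstrained minimizer $\b$ lies strictly outside the feasible set, so the optimum must sit on the boundary, i.e. the constraint is active with $g(\u^\star) = \tau$. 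This reduces the problem to a single equality-constrained program.

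Next I would introduce one Lagrange multiplier $\lambda \geq 0$ for the active constraint and form the Lagrangian. Because the objective and both norm terms separate across coordinates, stationarity decouples into $m$ scalar subproblems of the form $\min_{u} \frac{1}{2}(\b[i]-u)^2 + \lambda|u| + \frac{\lambda\gamma}{2}u^2$. Using the standard subgradient argument for $|\cdot|$, one finds that $\sign(\u^\star[i]) = \sign(\b[i])$ whenever $\u^\star[i] \neq 0$, that $\u^\star[i] = 0$ exactly when $|\b[i]| \leq \lambda$, and that on the support
\[
\u^\star[i] = \frac{\sign(\b[i])\,(|\b[i]|-\lambda)_+}{1+\lambda\gamma}.
\]
Thus the minimizer is a soft-thresholding of $\b$ at level $\lambda$, rescaled by $1/(1+\lambda\gamma)$, and the whole task collapses to finding the unique $\lambda > 0$ for which this candidate satisfies $g(\u^\star) = \tau$.

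The remaining step is to show that the correct $\lambda$—equivalently, the correct support $S = \{i : |\b[i]| > \lambda\}$—can be located in linear time, which is exactly what Algorithm~\ref{fig:proj_elastic} should accomplish by extending the $\ell_1$-ball projection of \citet{maculan} and \citet{duchi}. I would substitute the closed form above into $g(\u^\star) = \tau$; on a fixed support $S$ the left-hand side is an explicit function of $\lambda$ involving only $\sum_{i \in S}(|\b[i]|-\lambda)$ and $\sum_{i \in S}(|\b[i]|-\lambda)^2$, so $\lambda$ can be recovered in closed form once $S$ is known. I would then prove that $g(\u^\star(\lambda))$ is continuous and strictly decreasing in $\lambda$ on the region where the support is nonempty, since increasing $\lambda$ shrinks every numerator and enlarges the denominator; monotonicity guarantees a unique root and certifies that the threshold can be found among the values $|\b[i]|$ via a median-based partitioning pass rather than a full sort, yielding the $O(m)$ cost. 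The main obstacle I anticipate is precisely this monotonicity-and-correctness argument for the active-set search: one must verify carefully that the extra scaling by $1/(1+\lambda\gamma)$ (absent in the pure $\ell_1$ case) preserves the monotone dependence on $\lambda$, and that the $\lambda$ recovered from a tentative support is self-consistent with it—no coordinate wrongly included or excluded. Checking dual feasibility ($\lambda \geq 0$) and complementary slackness for the returned vector then closes the argument.
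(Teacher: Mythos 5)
Your proposal is correct and follows essentially the same route as the paper's own proof: dispatch the feasible case, use Lagrangian/KKT duality to obtain the rescaled soft-thresholding form $\u^\star(\lambda)[j] = \sign(\b[j])(|\b[j]|-\lambda)^+/(1+\lambda\gamma)$, exploit strict monotonicity of the constraint value in $\lambda$ to reduce the problem to identifying the support, and locate that support in linear time by extending the randomized partitioning scheme of \citet{duchi}. The points you flag as remaining obstacles (monotonicity under the extra $1/(1+\lambda\gamma)$ scaling, self-consistency of the recovered $\lambda$ with the tentative support) are exactly the steps the paper also treats briefly, so nothing essential is missing.
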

\begin{proof}
   First, if $\b$ is a feasible point of (\ref{eq:proj}), then $\b$ is a
   solution.  We suppose therefore that it is not the case---that is,$||\b||_1
   + \frac{\gamma}{2}||\b||_2^2 > \tau$.  Let us define the Lagrangian of
   (\ref{eq:proj})
   \begin{displaymath}
      \L(\u,\lambda) = \frac{1}{2}||\b-\u||_2^2 + \lambda\big(||\u||_1 +
      \frac{\gamma}{2}||\u||_2^2 - \tau\big).
   \end{displaymath}
   For a fixed $\lambda$, minimizing the Lagrangian with respect to $\u$ admits
   a closed-form solution~$\u^\star(\lambda)$, and a simple calculation shows that, for all $j$,
   \begin{displaymath}
      \u^\star(\lambda)[j] = \frac{\sign(\b[j])(|\b[j]|-\lambda)^+}{1+\lambda\gamma}.
   \end{displaymath}
   Eq.~(\ref{eq:proj}) is a convex optimization problem. Since Slater's
   conditions are verified and strong duality holds, it is
   equivalent to the dual problem
   \begin{displaymath}
      \max_{\lambda \geq 0} \L(\u^\star(\lambda),\lambda).
   \end{displaymath}
   Since $\lambda=0$ is not a solution, denoting by $\lambda^\star$ the solution,
   the complementary slackness condition implies that 
   \begin{equation}
      ||\u^\star(\lambda^\star)||_1 +
      \frac{\gamma}{2}||\u^\star(\lambda^\star)||_2^2 = \tau.
      \label{eq:projunic}
   \end{equation}
   Using the closed form of $\u^\star(\lambda)$ is possible to show that the function
   $\lambda \to ||\u^\star(\lambda)||_1 + \frac{\gamma}{2}||\u^\star(\lambda)||_2^2$,
   is strictly decreasing with $\lambda$ and thus Eq.~(\ref{eq:projunic}) is
   a necessary and sufficient condition of optimality for~$\lambda$.
   After a short calculation, one can show that this optimality condition is equivalent to
   \begin{displaymath}
      \frac{1}{(1+\lambda\gamma)^2} \sum_{ j \in S(\lambda)} \Big( |\b[j]| +
      \frac{\gamma}{2}|\b[j]|^2 -
      \lambda\big(1+\frac{\gamma\lambda}{2}\big)\Big)= \tau,
   \end{displaymath}
   where $S(\lambda) = \{ j \st |\b[j]| \geq \lambda \}$.
   Suppose that $S(\lambda^\star)$ is known, then $\lambda^\star$ can be
   computed in closed-form.
   To find $S(\lambda^\star)$, it is then sufficient to find the index
   $k$ such that $S(\lambda^\star) = S(|\b[k]|)$, 
   which is the solution of
   \begin{displaymath}
      \max_{k \in \{1,\ldots,m\}} |\b[k]| \st \frac{1}{(1+|\b[k]|\gamma)^2} \sum_{ j \in S(|\b[k]|)}\Big( |\b[j]| +
      \frac{\gamma}{2}|\b[j]|^2 -
      |\b[k]|\big(1+\frac{\gamma|\b[k]|}{2}\big)\Big) < \tau.
   \end{displaymath}
   Lines 4 to 14 of Algorithm \ref{fig:proj_elastic} are a modification of \citet{duchi} to 
   address this problem. A similar proof as \citet{duchi} shows the convergence to the solution
   of this optimization problem in $\O(m)$ in the average case,
   and lines 15 to 18 of Algorithm \ref{fig:proj_elastic}) compute $\lambda^\star$ after that $S(\lambda^\star)$ 
   has been identified.
   Note that setting $\gamma$ to $0$ leads exactly to the algorithm of \citet{duchi}.
\end{proof}

\begin{algorithm}
   \caption{Efficient projection on the elastic-net constraint.}
   \label{fig:proj_elastic}
   \begin{algorithmic}[1]
      \REQUIRE $\tau \in \Real$; $\gamma \in \Real$; $\b \in \Real^m$;
      \IF{$||\b||_1 + \frac{\gamma}{2}||\b||_2^2 \leq \tau$}
         \RETURN $\u \leftarrow \b$.
      \ELSE
      \STATE $U \leftarrow \{1,\ldots,m\}$; $s \leftarrow 0$; $\rho \leftarrow 0$.
      \WHILE{$U \neq \emptyset$}
      \STATE Pick $k \in U$ at random.
      \STATE Partition $U$:
      \begin{displaymath}
         \begin{split}
            G &= \{ j \in U \st |\b[j]| \geq |\b[k]| \}, \\
            L &= \{ j \in U \st |\b[j]| < |\b[k]| \}. \\
         \end{split}
      \end{displaymath}
      \STATE $\Delta\rho \leftarrow |G|$; $\Delta s \leftarrow \sum_{j \in G} |\b[j]|+\frac{\gamma}{2}|\b[j]|^2$.
      \IF{ $s + \Delta s - (\rho + \Delta \rho)(1+ \frac{\gamma}{2}|\b[k]|)|\b[k]|  < \tau (1+\gamma |\b[k]|)^2$}
      \STATE $s \leftarrow s + \Delta s; \rho \leftarrow \Delta \rho; U \leftarrow L$.
      \ELSE
      \STATE $U \leftarrow G \setminus \{k \}$.
      \ENDIF
      \ENDWHILE
      \STATE $a \leftarrow \gamma^2\tau+\frac{\gamma}{2}\rho$,
      \STATE $b \leftarrow 2\gamma\tau+\rho$,
      \STATE $c \leftarrow \tau-s$,
      \STATE $\lambda \leftarrow \frac{-b+\sqrt{b^2-4 a c}}{2a}$
      \STATE $$\forall j=1,\ldots,n, \u[j] \leftarrow \frac{\sign(\b[j])(|\b[j]|-\lambda)^+}{1+\lambda\gamma}$$
      \RETURN $\u$.
      \ENDIF
   \end{algorithmic}
\end{algorithm}
As for the dictionary learning problem, a simple modification to
Algorithm \ref{fig:proj_elastic} allows us to handle the non-negative
case, replacing the scalars $|\b[j]|$ by $\max(\b[j],0)$ in the algorithm.
\subsection{A Homotopy Method for Solving the Fused Lasso Signal Approximation} \label{appendix:sec:fused}
Let $\b$ be a vector of $\Real^m$. We define, following \citet{friedman}, the fused
lasso signal approximation problem $\P(\gamma_1,\gamma_2,\gamma_3)$:
\begin{equation}
   \min_{\u \in \Real^m} \frac{1}{2}||\b-\u||_2^2 + \gamma_1||\u||_1 +
   \gamma_2 \FL(\u) + \frac{\gamma_3}{2}||\u||_2^2, \label{eq:flsa}
\end{equation}
the only difference with \citet{friedman} being the addition of the last
quadratic term.
The method we propose to this problem is a homotopy, which solves
$\P(\tau\gamma_1,\tau\gamma_2,\tau\gamma_3)$ for all
possible values of $\tau$. In particular, for all $\varepsilon$,
it provides the solution of the constrained problem
\begin{equation}
   \min_{\u \in \Real^m} \frac{1}{2}||\b-\u||_2^2 \st \gamma_1||\u||_1 +
   \gamma_2 \FL(\u) + \frac{\gamma_3}{2}||\u||_2^2 \leq \varepsilon.  \label{eq:fusedConstrained}
\end{equation}

The algorithm relies on the following lemma
\begin{lemma}
   Let $\u^\star(\gamma_1,\gamma_2,\gamma_3)$ be the solution of
   Eq.~(\ref{eq:flsa}), for specific values of $\gamma_1,\gamma_2,\gamma_3$.
   Then
   \begin{itemize}
      \item $\u^\star(\gamma_1,\gamma_2,\gamma_3) =
         \frac{1}{1+\gamma_3}\u^\star(\gamma_1,\gamma_2,0)$.
      \item For all $i$, $\u^\star(\gamma_1,\gamma_2,0)[i] =
         \sign(\u^\star(0,\gamma_2,0)[i])\max(|\u^\star(0,\gamma_2,0)[i]|-\lambda_1,0)$---that
         is, $\u^\star(\gamma_1,\gamma_2,0)$ can be obtained by soft thresholding of $\u^\star(0,\gamma_2,0)$.
   \end{itemize}
\end{lemma}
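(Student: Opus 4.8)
The plan is to characterize both solutions through the subdifferential optimality conditions of problem~(\ref{eq:flsa}) and then verify that the claimed expressions satisfy them. The essential enabling fact is that the data term $\frac{1}{2}||\b-\u||_2^2$ makes the objective strictly convex for every choice of $\gamma_1,\gamma_2,\gamma_3 \geq 0$, so each problem has a \emph{unique} minimizer; it therefore suffices to exhibit a point meeting the first-order conditions. Writing the conditions out, $\u$ minimizes~(\ref{eq:flsa}) if and only if $\b \in (1+\gamma_3)\u + \gamma_1\partial||\u||_1 + \gamma_2\partial\FL(\u)$, where $\partial$ denotes the subdifferential.

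For the first bullet I would exploit the positive homogeneity of degree one of both $||\cdot||_1$ and $\FL$, which implies that their subdifferentials are invariant under positive scaling: $\partial||c\u||_1 = \partial||\u||_1$ and $\partial\FL(c\u)=\partial\FL(\u)$ for any $c>0$. Setting $\v=\u^\star(\gamma_1,\gamma_2,0)$ and substituting the candidate $\u=\frac{1}{1+\gamma_3}\v$ into the optimality condition, the prefactor $(1+\gamma_3)$ cancels the leading scaling while the scale-invariance collapses the two subdifferential terms, leaving exactly $\b \in \v + \gamma_1\partial||\v||_1 + \gamma_2\partial\FL(\v)$, i.e.\ the condition defining $\v$. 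Uniqueness then forces $\u^\star(\gamma_1,\gamma_2,\gamma_3)=\frac{1}{1+\gamma_3}\v$.

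For the second bullet I would set $\w=\u^\star(0,\gamma_2,0)$ and let $\z=S_{\gamma_1}(\w)$ be the coordinate-wise soft-thresholding, $\z[i]=\sign(\w[i])\max(|\w[i]|-\gamma_1,0)$. Optimality of $\w$ gives a subgradient $\s\in\partial\FL(\w)$ with $\b-\w=\gamma_2\s$, where $\s$ is assembled from signs $\theta_j=\sign(\w[j+1]-\w[j])$ on the nonzero fused differences and arbitrary $\theta_j\in[-1,1]$ on the ties. The crux is to show this same $\s$ remains a valid element of $\partial\FL(\z)$: because the threshold $\gamma_1$ is \emph{uniform}, soft-thresholding is monotone and tie-preserving, so $\w[j+1]=\w[j]$ implies $\z[j+1]=\z[j]$ (the active set of fused differences only contracts), and whenever $\z[j+1]\neq\z[j]$ the difference keeps the sign it had for $\w$. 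Hence $\theta_j$ still matches $\sign(\z[j+1]-\z[j])$ on the smaller active set and lies in $[-1,1]$ elsewhere, so $\s\in\partial\FL(\z)$ and $\gamma_2\s=\b-\w$. Finally, since $S_{\gamma_1}$ is precisely the proximal operator of $\gamma_1|\cdot|$, each coordinate obeys $\w[i]-\z[i]\in\gamma_1\partial|\z[i]|$; adding $\b-\w=\gamma_2\s$ yields $\b-\z\in\gamma_1\partial||\z||_1+\gamma_2\partial\FL(\z)$, the optimality condition for the $(\gamma_1,\gamma_2,0)$ problem, and uniqueness concludes.

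The main obstacle is the middle step of the second bullet: certifying that the fused-lasso subgradient computed at $\w$ survives unchanged as a subgradient at the thresholded vector $\z$. This depends entirely on the order- and tie-preserving behaviour of \emph{uniform} soft-thresholding and would fail for a coordinate-dependent threshold. In writing it up carefully I would treat explicitly the boundary case in which both endpoints of a fused difference are thresholded to zero, ensuring that the active set of differences genuinely only shrinks so that the matching of signs never breaks.
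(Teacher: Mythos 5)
Your proof is correct and takes essentially the same route as the paper: the paper treats the first bullet as a ``short calculation'' and defers the second to \citet{friedman}, whose argument is exactly the subgradient-optimality reasoning you spell out (scale invariance of the subdifferentials of the $1$-homogeneous terms for the first point; for the second, showing the fused-lasso subgradient at $\u^\star(0,\gamma_2,0)$ remains valid after uniform soft thresholding because ties and difference signs are preserved, then invoking uniqueness from strict convexity). Your write-up simply fills in the details the paper leaves to the citation, including the boundary case where a nonzero difference collapses to a tie, which your argument handles correctly since the corresponding subgradient entry still lies in $[-1,1]$.
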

The first point can be shown by short calculation. The second one is proven
in \citet{friedman} by considering subgradient optimality conditions.  This
lemma shows that if one knows the solution of $\P(0,\gamma_2,0)$, then
$\P(\gamma_1,\gamma_2,\gamma_3)$ can be obtained in linear time.

It is therefore natural to consider the simplified problem
\begin{equation}
   \min_{\u \in \Real^m} \frac{1}{2}||\b-\u||_2^2  +
   \gamma_2 \FL(\u).  \label{eq:fusedSimple2}
\end{equation}
With the change of variable $\v[1]=\u[1]$ and $\v[i]=\u[i]-\u[i-1]$ for $i > 1$,
this problem can be recast as a weighted Lasso
\begin{equation}
   \min_{\v \in \Real^m} \frac{1}{2}||\b-\D\v||_2^2  +
   \sum_{i=1}^m w_i |\v[i]|, \label{eq:fusedSimple}
\end{equation}
where $w_1=0$ and $w_i=\gamma_2$ for $i > 1$, and $\D[i,j] = 1$ if $i \geq j$
and $0$ otherwise. We propose to use LARS
\citep{efron} and exploit the specific structure of the matrix $\D$ to make this
approach efficient, by noticing that:
\begin{itemize}
   \item For a vector $\w$ in $\Real^m$, computing $\e=\D\w$ requires $O(m)$
      operations instead of $O(m^2)$, by using the recursive formula
      $\e[1]=\w[1]$, $\e[i+1]=\w[i]+\e[i]$.
   \item For a vector $\w$ in $\Real^n$, computing $\e=\D^T\w$ requires $O(m)$
      operations instead of $O(m^2)$, by using the recursive formula
      $\e[n]=\w[n]$, $\e[i-1]=\w[i-1]+\e[i]$.
   \item Let $\Gammab = \{a_1,\ldots,a_p\}$ be an active set and suppose $a_1 < \ldots < a_p$. 
      Then $(\D_\Gammab^T\D_\Gammab)^{-1}$ admits the closed form value
      \begin{displaymath}
         (\D_\Gammab^T\D_\Gammab)^{-1} = \left(
         \begin{array}{cccccc}
            c_1 & -c_1 & 0 & \ldots & 0 & 0\\
            -c_1 & c_1+c_2 & -c_2 & \ldots & 0 & 0 \\
            0 & -c_2 & c_2+c_3 & \ldots & 0 & 0\\
            \vdots & \vdots & \vdots & \ddots & \vdots & \vdots \\
            0 & 0 & 0 & \ldots & c_{p-2}+c_{p-1} & -c_{p-1} \\
            0 & 0 & 0 & \ldots & -c_{p-1} & c_{p-1}+c_{p}
         \end{array}
         \right),
      \end{displaymath}
      where $c_p = \frac{1}{n+1-a_p}$ and $c_i = \frac{1}{a_{i+1}-a_{i}}$ for $i < p$.
\end{itemize}
This allows the implementation of this homotopy method without using matrix inversion 
or Cholesky factorization, solving Eq.~(\ref{eq:fusedSimple}) in $O(m s)$ operations,
where $s$ is the number of non-zero values of the optimal solution
$\v$.\footnote{To be more precise, $s$ is the number of kinks of the
regularization path. In practice, $s$ is roughly the same as the number of
non-zero values of the optimal solution $\v$.}

Adapting this method for solving Eq.~(\ref{eq:fusedConstrained}) requires 
following the regularization path of the problems $\P(0,\tau\gamma_2,0)$ for
all values of $\tau$, which provides as well the regularization path of the problem
$\P(\tau\lambda_1,\tau\lambda_2,\tau\lambda_3)$ and stops whenever the constraint
becomes unsatisfied. This procedure still requires $O(ms)$ operations.

Note that in the case $\gamma_1=0$ and $\gamma_3=0$, when only the fused-lasso
term is present in Eq~(\ref{eq:flsa}), the same approach has been proposed in a
previous work by \citet{harchaoui}, and \citet{harchaoui2} to solve
Eq.~(\ref{eq:fusedSimple2}), with the same tricks for improving the efficiency
of the procedure.
\bibliography{mairal10a_arxiv}
\end{document}